\documentclass{article}

\PassOptionsToPackage{numbers, compress}{natbib}

\usepackage[preprint]{neurips_2021}




\usepackage[utf8]{inputenc} 
\usepackage[T1]{fontenc}    
\usepackage{url}            
\usepackage{booktabs}       
\usepackage{amsfonts}       
\usepackage{nicefrac}       
\usepackage{microtype}      
\usepackage{xcolor}         
\usepackage{bbold}
\usepackage{comment}

\usepackage{epsfig,amsmath,amssymb,amsfonts,amstext,amsthm,mathrsfs,mathtools}
\usepackage{latexsym,graphics,epsf,epsfig,psfrag}
\usepackage{dsfont,color,epstopdf,fixmath}
\usepackage{enumitem}

\usepackage[colorlinks=true,citecolor=blue]{hyperref}

\usepackage{hyperref}
\usepackage{cleveref}
\usepackage{url}

\usepackage{algorithm,algorithmic}

\newcommand{\mvec}{\text{vec}}
\newcommand{\msP}{\mathsf{P}}
\newcommand{\msI}{\mathsf{I}}

\newcommand{\mE}{\mathbb E}

\newcommand{\mcs}{\mathcal S}
\newcommand{\mca}{\mathcal A}

\newcommand{\mcf}{\mathcal F}
\newcommand{\mct}{\mathcal T}

\newcommand{\mcd}{\mathcal D}

\newcommand{\mf}{\mathcal F}

\newcommand{\rpi}{{\rm\Gamma}}
\newcommand{\rpii}{{\rm\Pi}}
\newcommand{\mcphi}{{\rm\Phi}}

\newcommand{\ltwo}[1]{\left\|#1\right\|_2}
\newcommand{\lsd}[1]{\left\|#1\right\|_D}

\newcommand{\lcin}[1]{\left\|#1\right\|_{C^{-1}}}
\newcommand{\lone}[1]{\left|#1\right|}

\newcommand{\lxi}[1]{\left\|#1\right\|_\xi}
\newcommand{\lF}[1]{\left\|#1\right\|_F}

\newcommand{\lalpha}[1]{\left\|#1\right\|_{\mu_\pi,\alpha}}

\newcommand{\lmu}[1]{\left\|#1\right\|_{U_\pi}}
\newcommand{\lsmu}[1]{\left\|#1\right\|_{\mu_\pi}}

\newcommand{\diag}{{\rm diag}}

\newcommand{\mR}{\mathbb{R}}

\newtheorem{theorem}{Theorem}

\newtheorem{example}{Example}

\newtheorem{lemma}{Lemma}
\newtheorem{proposition}{Proposition}
\newtheorem{assumption}{Assumption}

\newtheorem{definition}{Definition}

\allowdisplaybreaks[4]

\DeclareMathOperator*{\argmin}{argmin}
\DeclareMathOperator*{\argmax}{argmax}

\title{A Unified Off-Policy Evaluation Approach for General Value Function}

%

\author{%
  Tengyu Xu \\
  The Ohio State University\\
  \texttt{xu.3260@osu.edu} \\
   \And
   Zhuoran Yang \\
   Princeton University \\
   \texttt{zy6@princeton.edu} \\
   \AND
   Zhaoran Wang \\
   Northwestern University \\
   \texttt{zhaoranwang@gmail.com} \\
   \And
   Yingbin Liang \\
   The Ohio State University \\
   \texttt{liang.889@osu.edu} \\
}

\begin{document}

\maketitle

\begin{abstract}
	General Value Function (GVF) is a powerful tool to represent both the {\em predictive} and {\em retrospective} knowledge in reinforcement learning (RL). In practice, often multiple interrelated GVFs need to be evaluated jointly with pre-collected off-policy samples. In the literature, the gradient temporal difference (GTD) learning method has been adopted to evaluate GVFs in the off-policy setting, but such an approach may suffer from a large estimation error even if the function approximation class is sufficiently expressive. Moreover, none of the previous work have formally established the convergence guarantee to the ground truth GVFs under the function approximation settings. In this paper, we address both issues through the lens of a class of GVFs with causal filtering, which cover a wide range of RL applications such as reward variance, value gradient, cost in anomaly detection, stationary distribution gradient, etc. We propose a new algorithm called GenTD for off-policy GVFs evaluation and show that GenTD learns multiple interrelated multi-dimensional GVFs as efficiently as a single canonical scalar value function. We further show that unlike GTD, the learned GVFs by GenTD are guaranteed to converge to the ground truth GVFs as long as the function approximation power is sufficiently large. To our best knowledge, GenTD is the first off-policy GVF evaluation algorithm that has global optimality guarantee.

\end{abstract}

\section{Introduction}
The value function, which represents the expected accumulation of reward \cite{sutton2009grand}, serves as a reliable performance metric of policy in the reinforcement learning (RL) tasks \cite{sutton1988learning,maei2011gradient}, 
In many RL applications, however, looking at only the value function is not enough. For example, in the risk-sensitive domains such as health care and financial assets, the variance of "reward-to-go" rather than the value function, i.e., the mean of "reward-to-go", is a more suitable performance metric. As another example, to obtain a variance-reduced or bias-reduced policy gradient estimator \cite{huang2020importance,xu2021doubly,kallus2020statistically}, in addition to the value function, the information of "gradient of value function" is also required. Moreover, in continuous control domain with differentiable and deterministic policy, the computation of policy gradient is only possible through "action/state-value gradient" \cite{silver2014deterministic,d2020learn,heess2015learning}, etc. All the aforementioned metrics can be viewed as {\em predicative} knowledge of certain cumulative "signals" (possibly high-dimensional, e.g., the gradient of value function), and thus naturally fall into the framework of {\bf forward GVFs} (refers to forward general value functions) \cite{sutton2011horde,white2015developing,schaul2013better}. One typical approach to evaluate GVFs, is to learn from samples that pre-collected from one or more behavior policies, which yields an {\em off-policy} method. In practice, multiple forward GVFs are usually evaluated jointly at the same time due to their interrelationships \cite{sutton2011horde,silver2013gradient}. 


In contrast to forward GVFs defined based on predictive knowledge, the {\bf backward GVF} represents {\em retrospective} knowledge, which captures the accumulation of signals from the past to the present time \cite{zhang2020learning}. Although the concept of the backward GVF has not been formally proposed until very recently \cite{zhang2020learning}, it is rooted in a number of important RL applications such as anomaly detection \cite{zhang2020learning}, emphatic weight learning \cite{sutton2016emphatic,zhang2020learning} and evaluation of gradient of logarithmic stationary distribution \cite{morimura2010derivatives,xu2021doubly,kallus2020statistically}. Differently from the forward GVF, for which the Bellman operator can be defined independently from the sampling distribution \cite{silver2013gradient,sutton1988learning,sutton2018reinforcement}, the Bellman operator of the backward GVF is only valid if the sampling exactly follows the on-policy stationary distribution \cite{zhang2020learning}. Due to such a reason, off-policy evaluation of the backward GVF is much more challenging than that of the forward GVF.

In previous studies, the gradient temporal difference (GTD) learning \cite{sutton2009fast,maei2011gradient}, one of the most popular off-policy methods in value function evaluation, has been adopted to solve both the forward and backward GVF evaluation problems \cite{sutton2011horde,silver2013gradient,zhang2020provably}. GTD adopts the mean squared projected Bellman error (MSPBE) as its optimization objective and takes the expectation over the {\em behavior} policy, which does not exactly reflect the desirable evaluation under the {\em target} policy. As a result, GTD can encounter serious issues in GVF evaluation problems. {\bf First}, the optimal point to which GTD converges can be far away from the ground truth value of GVFs. It becomes worse when multiple GVFs are evaluated simultaneously, because the error of one GVF evaluation can be further amplified across other GVFs' evaluation due to their inherent correlations.
In the literature, no provable bound has been established on such an error, which can, in fact, be unbounded for some cases (see \cite[Example 1]{kolter2011fixed}). {\bf Second}, for high-dimensional GVFs evaluations, the landscape geometry of the GTD objective function can be ill-conditioned \cite{maei2011gradient}, which could slow down the convergence of GTD significantly. As demonstrated by our empirical results in \Cref{sc: exp}, GTD can suffer from both the large estimation error and the slow convergence rate, which further suggests that GTD may not be a good choice for GVFs evaluation tasks. This motivates our paper to address the following question:
%
\begin{list}{$\bullet$}{\topsep=0.ex \leftmargin=0.15in \rightmargin=0.in \itemsep =0.01in}
	\item {\em Can we design a new off-policy approach for multiple interrelated and high-dimensional GVFs evaluation problems, which is guaranteed to converge fast and converge to the ground truth GVFs?}
\end{list}
%
%

{\bf Our Contributions.} In this paper, we investigate the problem of evaluating multiple interrelated GVFs jointly. Rather than studying different GVFs on a case-by-case basis, we explore the class of "GVFs with causal filtering", which captures a common structural feature shared by GVFs in a wide range of RL applications (see \Cref{sc: application}). {\bf (a)} We prove that both forward and backward GVFs with causal filtering are the unique fixed point of their corresponding general Bellman operator (GBO) (defined for multiple high-dimensional GVFs), which is shown to have a contraction property with respect to a properly constructed norm metric. 
{\bf (b)} Based on such a property of GVFs, we propose a new algorithm GenTD to solve off-policy GVFs evaluation problem. GenTD introduces a density ratio to adjust the behavior distribution and further incorporates a policy-agnostic approach GenDICE/GradientDICE \cite{zhang2020gendice,zhang2020learning} for estimating the density ratio jointly with GVF evaluation. {\bf (c)} In the linear function approximation setting, we show that GenTD converges to the globally optimal point at the rate of $\mathcal{O}(1/T)$, with conditional number independent from the dimension of GVFs. Such a result implies that GenTD learns multiple interrelated possibly high-dimensional GVFs as efficiently as TD learning for a single canonical scalar value function. {\bf (d)} We further show that unlike GTD, GenTD are guaranteed to converge to the ground truth GVFs as long as the function expressive power is sufficiently large. To our best knowledge, GenTD is the first off-policy GVF evaluation algorithm that has gound truth guarantee. {\bf (e)} Our experiments further demonstrate that GenTD converges much faster than GTD, and more importantly, converges to ground truth, whereas GTD can stay far way from the ground truth GVF value.
{\bf Related Work.} The forward GVF was first introduced in \cite{sutton2011horde} to represent a set of accumulation of general signals with possibly time-varying discount factors. The forward GVF was later used to represent a set of interrelated predictions \cite{silver2013gradient,downey2017predictive,sun2016learning,mahmood2013representation}. It has been observed that some RL metrics such as variance, gradient of value function, state/action value gradient can also be viewed as forward GVFs \cite{tamar2016learning,huang2020importance,xu2021doubly,kallus2020statistically,silver2014deterministic,sutton2011horde,white2015developing,schaul2013better,d2020learn,comanici2018knowledge}. In previous works, both TD learning and GTD have been used to evaluate forward GVFs in the on- and off-policy settings \cite{sutton2011horde,silver2013gradient}, respectively. A more comprehensive review of studies of forward GVFs has been provided in \cite{sherstan2020gradient}.
The backward GVF was formally defined in \cite{zhang2020learning}. Some previous works have also considered metrics that can be represented as accumulations of signals in the reverse time direction, such as emphatic weighting, page ranking cost, and derivative of logarithmic stationary distribution \cite{zhang2019generalized,zhang2019provably,morimura2010derivatives,yao2013reinforcement,hallak2017consistent}. Another track of research has focused on evaluation of a general scalar function in the off-policy setting \cite{chandak2021universal,zhang2020variational,jiang2016doubly}, whereas the focus of this paper is on the evaluation of multiple high-dimensional GVFs.

The theoretical studies of off-policy GVFs evaluation algorithms are rather limited. So far, only the asymptotic convergence guarantee (without the convergence rate characterization) of GTD has been established in both the forward and backward GVFs evaluation settings \cite{silver2013gradient,zhang2020learning}. The convergence rate of GTD has only been established in \cite{xu2019two,dalal2018finite,kaledin2020finite,dalal2020tale,xu2021sample,liu2015finite} for the simple canonical value function evaluation setting, which is a special case of forward GVFs. However, as pointed out in \cite{kolter2011fixed,hallak2017consistent,munos2003error}, the optimal point of GTD may suffer from possibly unbounded approximation error, which is not desirable in practice. In contrast, we propose a new off-policy GVFs evaluation algorithm, which can solve a wide range of forward and backward GVFs evaluation problems, with convergence rate characterization and guaranteed optimality with respect to the ground truth GVF value.

\section{Markov Decision Process and General Value Function}\label{sc: background}
We consider an infinite-horizon Markov Decision Process (MDP) with a state space $\mcs$, an action space $\mca$, a reward function $r:\mcs\times\mca \rightarrow\mR$, a transition kernel $\msP: \mcs\times\mcs\times\mca\rightarrow [0,1]$, a discounted factor $\gamma\in(0,1)$, and an initial distribution $\mu_0:\mcs\rightarrow[0,1]$. An policy $\pi(a|s)$ is the probability of taking action $a$ at state $s$. 
At time step $t$, an agent at a state $s_t$ selects an action $a_t$ according to $\pi(\cdot|s_t)$, receives a reward $r(s_t,a_t)$, and transits to state $s_{t+1}$ according to $\msP(\cdot|s_t,a_t)$. The state-action transition kernel is defined as $\msP_\pi\in\mR^{|\mcs||\mca|\times |\mcs||\mca|}$, in which $\msP_\pi((s,a),(s^\prime,a^\prime))=\msP(s^\prime|s,a)\pi(a^\prime|s^\prime)$. When the MDP is ergodic, we define $\mu_\pi$ as the state-action stationary distribution which satisfies: $\mu_\pi^\top \msP_\pi = \mu_\pi^\top$.
For such an MDP, we define the discounted accumulation of reward as the "reward-to-go": $J_\pi = \sum_{t=0}^{\infty}\gamma^t r(s_t,a_t)$. The state-action value function (i.e., Q-function) is defined as $Q_\pi(s,a) = \mE[J_\pi|(s_0,a_0)=(s,a)]$, and the state value function (i.e., V-function) is defined as $V_\pi(s) = \mE[Q_\pi(s,a)|s]$. Note that $Q_\pi(s,a)$ satisfies the following Bellman equation
\begin{flalign}\label{normal_BE}
	Q_\pi = \mct_\pi Q_\pi =  R + \gamma \msP_\pi Q_\pi,
\end{flalign}
where $\mct_\pi$ is the Bellman operator, and $Q_\pi$, and $R\in \mR^{|\mcs||\mca|}$ are vectors obtained via stacking $Q_\pi(s,a)$ and $r(s,a)$ over state-action space $\mcs\times\mca$. We introduce a function of $(s,a)$ (possibly in the vector form) as $v(s,a)\in\mR^d$ ($d\geq 1$). Consider a distribution $\xi(\cdot)$ over $\mcs\times\mca$.
We define the $\xi$--norm of $v\in\mR^{d|\mcs||\mca|}$ as $\lxi{v}=\sqrt{\sum_{(s,a)} \xi(s,a)\ltwo{v(s,a)}^2}$, where $v$ is obtained by stacking the function $v(s,a)$ over $\mcs\times\mca$.
It has been proved that $\mct_\pi$ is $\gamma$--contraction in $\mu_\pi$--norm, i.e., $\lsmu{\mct_\pi v - \mct_\pi v^\prime}\leq \gamma \lsmu{v-v^\prime}$ and $Q_\pi$ is the unique fixed point of $\mct_\pi$ \cite{sutton2018reinforcement,sutton1988learning,tsitsiklis1997analysis}. 
In the sequel, we denote $\msI_d$ as the identity matrix with the dimension $d$ and $\otimes$ as the Kronecker product. We further define $U_\pi = \text{diag}(U_{\pi,1},\cdots,U_{\pi,k})$, in which $U_{\pi,i} = \text{diag}(\mu_{\pi})\otimes \msI_{d_i}$ for $i=\{1,\cdots,k \}$, and $P_\pi=\text{diag}(P_{\pi,1},\cdots,P_{\pi,k})$, in which $P_{\pi,i} = \msP_\pi\otimes \msI_{d_i}$.

\subsection{Forward General Value Function}\label{sc: fgvf}

Consider a set of the state-action general value functions (GVFs) $G_\pi  = [G_{\pi,1}^\top, \cdots, G_{\pi,k}^\top]^\top$, where each GVF $G_{\pi,i}$ is defined as the accumulation of a corresponding signal $C_i(s,a) \in \mR^{d_i}$
given by
\begin{flalign}\label{eq: gvf}
\textstyle	G_{\pi, i}(s,a)=\mE\left[\sum_{t=0}^{\infty}\gamma_i^t C_i(s_t,a_t)\big|(s_0,a_0)=(s,a),\pi\right],
\end{flalign}
where $\gamma_i\in(0,1)$ is a discount factor associated with $C_i$. Since $C_i(s,a) \in \mR^{d_i}$ (for each $(s,a)$) can be high-dimensional, $G_{\pi, i}(s,a)$ can also be high-dimensional for each $(s,a)$. Clearly, the Q-function is a special GVF associated with a scalar signal. Since $G_\pi$ is defined as the accumulation of the signal $C_i$ in a forward direction from the current time step $t$ to the future $\infty$, we call $G_\pi$ as "forward GVF". 

In many RL applications, GVFs share a commen structure of causal filtering \cite{sutton2011horde}, i.e., each $C_i(s,a)$ (associated with $G_{\pi,i}$) depends on the lower-indexed value functions $G_{\pi,1}, \cdots, G_{\pi,i-1}$ in the set. As a concrete example, suppose the policy is parametrized by a smooth function $\pi_w$, where the parameter $w\in\mR^{d_w}$. In addition to the Q-function $Q_\pi$, the gradient $\nabla_w Q_{\pi}(s,a)$ of the Q-function w.r.t. $w$ arises as a GVF of interest in several important applications such as variance reduced policy gradient \cite{huang2020importance} and on- and off-policy policy optimization \cite{xu2021doubly,silver2014deterministic,kallus2020statistically,comanici2018knowledge}. In such a case, let $G_{\pi,1}=Q_\pi$ and $G_{\pi,2}=\nabla_w Q_{\pi}$. Further, it has been shown in \cite{xu2021doubly,kallus2020statistically,comanici2018knowledge} that the signal $C_2(s,a)$ associated with $\nabla_wQ_\pi$ is given by $C_2(s,a)=\gamma \mE[Q_\pi(s^\prime,a^\prime)\nabla_w\log(\pi_w(s^\prime,a^\prime))|s,a]$, which depends on the lower-indexed $G_{\pi,1}=Q_\pi$. Hence, such a GVF vector has the causal filtering structure. \Cref{sc: application} provides further details about this example and more such GVF examples in RL.
More formally, we define the forward GVF with causal filtering as follows. 
\begin{definition}[Forward GVF with causal filtering]\label{def:forwardGVF}
For a given policy $\pi$, 
a forward GVF $G_\pi= [G_{\pi,1}^\top, \cdots, G_{\pi,k}^\top]^\top$ with causal filtering are associated with signals satisfying
\begin{align*}
\textstyle C_i=B_i+  \sum_{j=1}^{i-1}A_{i,j}G_{\pi,j} \quad \text{for }\; 2\leq i \leq k,
\end{align*}
where $C_i$ and $G_{\pi,j}$ are obtained by respectively stacking $C_i(s,a)\in\mR^{d_i}$ and $G_{\pi,j} (s,a)\in\mR^{d_i}$ over $\mcs\times\mca$, $B_i\in\mR^{d_i|\mcs||\mca|}$ is an observable signal, and the coefficient matrix $A_{i,j}\in \mR^{d_i|\mcs||\mca|\times d_j|\mcs||\mca|}$ captures how the $j$-th GVF $G_{\pi,j}$ affects the $i$-th accumulation signal $C_i$. Further, $B_i$ and $A_{i,j}$ are bounded for all $i,j=1,\cdots,k$ to ensure $G_{\pi,i}$ to be well defined.
\end{definition}

\Cref{def:forwardGVF} indicates that all GVFs are interrelated with a causal filtering structure, i.e., each signal $C_i$ is a linear function of all lower-indexed $G_{\pi,l}$ for $1\leq l< i$. {Such a structure also captures the core nature of the TD net \cite{sutton2004temporal}, in which the prediction of one node may depend on the outputs from some previous nodes.}
\Cref{def:forwardGVF} also implies that the forward GVF $G_\pi = [G^\top_{\pi,1}, \cdots, G^\top_{\pi,k}]^\top$ with causal filtering satisfies the following {\bf lower-triangular Bellman equation} given by
\begin{flalign}\label{forward_GBE}
	G_\pi = \mct_{G,\pi} G_\pi = B + M_\pi G_\pi,
\end{flalign}
where where $\mct_{G,\pi}$ denotes the forward general Bellman operator (GBO), and
\begin{flalign*}
	B = \left[\begin{array}{c}
	B_1\\
	B_2\\
	\vdots\\
	B_k
	\end{array}
	\right],\quad
	M_\pi = \left[\begin{array}{cccc}
	\gamma_1[\msP_\pi\otimes\msI_{d_1}] & 0 & \cdots  & 0 \\
	A_{2,1} & \gamma_2 [\msP_\pi\otimes\msI_{d_2}] & \cdots  & 0\\
	\vdots & \vdots & & \vdots \\
	A_{k,1} & A_{k,2} & \cdots & \gamma_k [\msP_\pi\otimes\msI_{d_k}] 
	\end{array}
	\right].
\end{flalign*}
Clearly, the canonical value function $Q_\pi$ and Bellman operator $\mct_\pi$ defined in \cref{normal_BE} is a special case of $G_\pi$ and $\mct_{G,\pi}$ defined in \cref{forward_GBE}. 

\subsection{Backward General Value Function}\label{sc: bgvf}
In contrast to the forward GVF defined in the last section, which represents the {\em predictive knowledge}, in some RL scenarios, we also want to capture the {\em retrospective knowledge} (see \Cref{sc: application} for concrete examples), which represents the accumulation of signals that have been collected from the past. Consider a set of GVFs $\hat{G}_\pi  = [\hat{G}_{\pi,1}^\top, \cdots, \hat{G}_{\pi,k}^\top]^\top$, where each GVF $\hat{G}_{\pi,i}$ is defined as the {\em backward} accumulation of a vector signal $\hat{C}_i(s,a) \in \mR^{d_i}$ given by
\begin{flalign}\label{eq: backgvf}
\textstyle \hat{G}_{\pi, i}(s,a)=\mE\big[\sum_{t=-\infty}^{0}\gamma_i^{-t} \hat{C}_i(s_t,a_t)\big|(s_0,a_0)=(s,a),\pi\big].
\end{flalign}
To distinguish from the forward GVF $G_{\pi,i}$ defined in \cref{eq: gvf}, we denote $\hat{G}_{\pi,i}$ as the backward GVF. For general purpose, we also consider the causal filtering setting for $\hat{G}_\pi$, in which each $\hat{C}_i(s,a)$ depends on the lower-indexed value functions $\hat{G}_{\pi,1}, \cdots, \hat{G}_{\pi,i-1}$ in the set. We define the backward GVF with causal filtering as follows.

\begin{definition}[Backward GVF with causal filtering]\label{def:backwardGVF}
	For a given policy $\pi$, 
	a backward GVF $\hat{G}_\pi= [\hat{G}_{\pi,1}, \cdots, \hat{G}_{\pi,k}]$ with causal filtering are associated with signals satisfying
	\begin{align*}
	\textstyle \hat{C}_i={B}_i+  \sum_{j=1}^{i-1}{A}_{i,j}\hat{G}_{\pi,j} \quad \text{for }\; 2\leq i \leq k,
	\end{align*}
	where $\hat{C}_i$ and $\hat{G}_{\pi,j}$ are obtained by respectively stacking $\hat{C}_i(s,a)\in\mR^{d_i}$ and $\hat{G}_{\pi,j} (s,a)\in\mR^{d_i}$ over $\mcs\times\mca$, ${B}_i\in\mR^{d_i|\mcs||\mca|}$ is an observable signal, and the coefficient matrix ${A}_{i,j}\in \mR^{d_i|\mcs||\mca|\times d_j|\mcs||\mca|}$ captures how the $j$-th GVF $\hat{G}_{\pi,j}$ affects the $i$-th accumulation signal $\hat{C}_i$. Further, ${B}_i$ and ${A}_{i,j}$ are bounded for all $i,j=1,\cdots,k$ to ensure $\hat{G}_{\pi,i}$ to be well defined.
\end{definition}

For an ergodic MDP that starts from $-\infty$, we have $(s_{t-1},a_{t-1})\sim\mu_{\pi}(\cdot)$, $(s_t,a_t)\sim \msP_\pi(\cdot|s_{t-1},a_{t-1})$, and $(s_{t},a_{t})\sim\mu_{\pi}(\cdot)$ for all $-\infty < t < \infty$. The Bayes' theorem implies that 
\begin{flalign}
	P((s_{t-1},a_{t-1})|(s_t,a_t)) = \frac{\mu_{\pi}(s_{t-1},a_{t-1})\msP_\pi((s_t,a_t)|(s_{t-1},a_{t-1}))}{\mu_{\pi}(s_t,a_t)}.\label{eq: backward_prob}
\end{flalign}
The reverse conditional probability in \cref{eq: backward_prob} together with the definition of backward GVF in \Cref{def:backwardGVF} implies that the backward GVFs $\hat{G}_\pi = [\hat{G}^\top_{\pi,1}, \cdots, \hat{G}^\top_{\pi,k}]^\top$ with causal filtering satisfies
\begin{flalign}\label{backward_GBE}
	\hat{G}_\pi = \hat{\mct}_{G,\pi} \hat{G}_\pi = {B} + \hat{M}_\pi \hat{G}_\pi,
\end{flalign}
where $\hat{\mct}_{G,\pi}$ denotes the backward general Bellman operator, and 
\begin{flalign*}
{B} = \left[\begin{array}{c}
{B}_1\\
{B}_2\\
\vdots\\
{B}_k
\end{array}
\right],\,\,
\hat{M}_\pi = \left[\begin{array}{cccc}
\gamma_1\hat{P}_{\pi,1} & 0 & \cdots  & 0 \\
{A}_{2,1} & \gamma_2 \hat{P}_{\pi,2} & \cdots  & 0\\
\vdots & \vdots & & \vdots \\
{A}_{k,1} & {A}_{k,2} & \cdots & \gamma_k \hat{P}_{\pi,k}
\end{array}
\right],\,\text{where}\,\hat{P}_{\pi,i} = U^{-1}_{\pi,i}[\msP_\pi\otimes\msI_{d_i}]U_{\pi,i}.
\end{flalign*}


{\bf Disscusion and Applications.} GVFs with causal filtering can cover a number of important RL applications. We discuss in detail in \Cref{sc: application} to show how the variance of "reward-to-go", gradient of value function, and state/action value function fall into the framework of forward GVFs in \Cref{def:forwardGVF}, and anomaly detection and gradient of logarithmic stationary distribution fall into the framework of backward GVFs in \Cref{def:backwardGVF}, respectively.

\section{Off-Policy Evaluation of GVFs: Formulation and Algorithm}
\subsection{Problem Formulation}
In this paper, we study the GVF evaluation problem for a target policy $\pi$. We focus on the {\em behavior-agnostic off-policy} setting, in which we have access only to samples generated from an off-policy (i.e., a behavior policy) with the distribution $\mcd$, i.e., $(s_j,a_j,B_j, s^\prime_j)\sim \mcd$ $(j>0)$.
Specifically, the state-action pair $(s_j,a_j)$ is sampled from a possibly \textbf{\em unknown} distribution $D(\cdot):\mcs\times\mca\rightarrow[0,1]$, $B_j=[B_1(s_j,a_j),\cdots,B_k(s_j,a_j)]$ is an observable signal vector, and the successor state $s^\prime_i$ is sampled from $\msP(\cdot|s_i,a_i)$. 
Without loss of generality, we consider the case in which $D(s,a)>0$ for all $(s,a)\in\mcs\times\mca$.
Our goal is to design an efficient algorithm to estimate $G_\pi$ (or $\hat{G}_\pi$) given the sample set $\{(s_j,a_j,B_j,s^\prime_j)\}_{j>0}$.

\subsection{Linear Function Approximation}\label{susc: linear}
When $|\mcs|$ is large, a linear function can be used to approximate the GVF: $G_{\pi,i}(s,a)\approx G_{\pi,i}(\theta_i;s,a) = \theta^\top_i\phi_i(s,a)=[\phi_i(s,a)^\top\otimes \msI_{d_i}] \mvec(\theta^\top_i)$, where $\phi_i(s,a) \in \mR^{K_i}$ is the feature vector, and $\theta_i\in\mR^{K_i\times d_i}$ is a learnable weight matrix. In the sequel, we omit $\pi$ in $G_{\pi,i}$ and use the notation $G_i$. Without loss of generality, we assume that $\ltwo{\phi_i(s,a)}\leq 1$ for all $i=1,\cdots,k$ and $(s,a)\in\mcs\times\mca$. The linear approximation can then be written as $G_i(\theta_i)=[{\rm\Phi}_i\otimes \msI_{d_i}]\mvec(\theta_i^\top)$, where ${\rm\Phi}_i$ is the base matrix obtained by stacking $\phi_i(s,a)^\top$ over $\mcs\times\mca$. 
To ensure the uniqueness of the solution $\theta_i$, we assume that ${\rm\Phi}_i$ has linearly independent columns. The joint vector of GVFs can be denoted as $[G_1^\top(\theta_1),\cdots,G_k^\top(\theta_k)]^\top$, which is captured by the joint parameters $\theta=[\mvec(\theta_1^\top)^\top, \cdots, \mvec(\theta_k^\top)^\top]^\top\in \mR^{\sum_{i=1}^{k}K_id_i}$.
Then the function approximation of GVFs can be written more compactly as $G(\theta) = {\rm\Phi}\theta$, where ${\rm\Phi}=\diag([{\rm\Phi_1}\otimes\msI_{d_1}],\cdots, [{\rm\Phi_k}\otimes\msI_{d_k}])$. For each $(s,a)$, the linear function approximation associated with each $(s,a)$ can be written as $G(\theta;s,a)=\phi(s,a)\theta$, where $\phi(s,a)=\diag([\phi_1(s,a)^\top\otimes\msI_{d_1}],\cdots, [\phi_k(s,a)^\top\otimes\msI_{d_k}])$.
We define the linear function space spanned by the columns of the feature matrix ${\rm \Phi}$ as $\mcf_{\rm\Phi} = \{{\rm\Phi}\theta| \theta\in R_\theta \}$, in which $R_\theta$ is a convex set. Given the function class $\mcf_{\rm\Phi}$, the evaluation problem of GVFs amounts to searching for a parameter $\theta^*\in R_\theta$ such that $G(\theta^*)$ approximates $G_\pi$ (or $\hat{G}_\pi$) well. In the sequel, we use $\bar{\mct}_{G,\pi}$ to represent $\mct_{G,\pi}$ or $\hat{\mct}_{G,\pi}$, interchangeably, based on the context.

\subsection{A New Off-policy GVF Evaluation Approach and Comparison to GTD}\label{sec:gentd}

\paragraph{Drawbacks of GTD.}
%
In previous works, the gradient TD (GTD) method \cite{sutton2009fast,maei2011gradient} has been used for policy evaluation (including GVF evaluation) in the off-policy setting \cite{silver2013gradient,zhang2020provably,zhang2020learning,xu2021doubly}. GTD adopts the Mean Squared Projected Bellman Error (MSPBE) for GVF evaluation with linear function approximation, which is given by
\begin{flalign}
\hat{\theta}^*=\argmin_{\theta\in R_\theta}\text{MSPBE}(\theta)\triangleq\mE_{D}\left[\ltwo{G(\theta;s,a)-{\rm\Gamma}_{\mcf_{\rm\Phi},D}\bar{\mct}_{G,\pi}G(\theta;s,a)}^2\right].\label{eq: globalGTD}
\end{flalign}
where ${\rm\Gamma}_{\mcf_{\rm\Phi},D}$ denotes the projection operator onto the space $\mcf_{\rm\Phi}$ w.r.t. the $\lsd{\cdot}$--norm, i.e., for any vector function $f(s,a)$ of $(s,a)$, we have ${\rm\Gamma}_{\mcf_{\rm\Phi},d}f=G(\theta_f)$, in which $\theta_f = \argmin_{\theta\in R_\theta} \lsd{f - G(\theta)}$.
One drawback of GTD is that the expectation in the objective function is taken over the off-policy sampling distribution $D(\cdot)$, which does not exactly reflect the desirable evaluation under the {\bf target} policy. As the result, 
the optimal point of GTD ($\hat{\theta}^*$) can still have a large approximation error with respect to the {\bf ground truth} value of GVF, even if the approximation function class is arbitrarily expressive. More detailed discussion about GTD is provided in \Cref{sc: GTD}.

\paragraph{Generalized Temporal Difference (GenTD) Learning.}
In this work, we propose a novel unified approach to evaluate both the forward and backward GVFs in the off-policy setting, which we refer to as generalized temporal difference (GenTD) learning. Specifically, we aim to learn $\theta^*$ for GVF evaluation by minimizing the mean-squared projected general Bellman error (MSPGBE) defined as
\begin{flalign}
	\theta^*=\argmin_{\theta\in R_\theta} \text{MSPGBE}(\theta)\triangleq\mE_{\mu_\pi}\left[\ltwo{G(\theta;s,a)-{\rm\Gamma}_{\mcf_{\rm\Phi},\mu_{\pi}}\bar{\mct}_{G,\pi}G(\theta;s,a)}^2\right],\label{eq: 47}
\end{flalign}
where recall that $\bar{\mct}_{G,\pi}$ represents the GBO of either forward or backward GVFs.
In contrast to GTD, the objective function in \cref{eq: 47} takes the expectation over the stationary distribution $\mu_\pi$ of the target distribution, which precisely captures the desired goal of GVF evaluation under the target policy. On the other hand, such an objective does cause implementation challenge, because the data samples are generated by the behavior policy, so that estimators based on such data directly can incur a large bias error. To solve such an issue, we will apply the density ratio $\rho(s,a)=\mu_{\pi}(s,a)/D(s,a)$ to adjust the distribution and further adopt the GenDICE/GradientDICE method proposed in \cite{zhang2020gendice,zhang2020gradientdice} to estimate $\rho(s,a)$ during the execution of the algorithm. 

To describe our algorithm GenTD (see \Cref{algorithm_offtd}), we first note that \cref{eq: 47} implies the following optimality condition for $\theta^*$,
\begin{flalign*}
	\langle G(\theta^*;\cdot) - \bar{\mct}_{G;\pi}G(\theta^*;\cdot), f(\cdot) - G(\theta^*;) \rangle_{\mu_\pi}\geq 0,\quad \forall f\in \mcf_{\rm\Phi},
\end{flalign*}
or equivalently
\begin{flalign}
\langle g(\theta^*), \theta - \theta^* \rangle\geq 0,\quad \forall \theta\in R_\theta,\label{eq: 48}
\end{flalign}
where $g(\theta) = {\rm\Phi}^\top U_\pi (G(\theta) - \bar{\mct}_{G,\pi}G(\theta))$.
The variational inequality theory \cite[Chapter 3]{lan2020first} suggests that under an appropriately chosen stepsize $\alpha_t$, the update $\theta_{t+1} = {\rm\Gamma}_{R_\theta}(\theta_t - \alpha_t g(\theta_t))$ converges to the optimal point $\theta^*$, where ${\rm\Gamma}_{R_\theta}$ denotes the projection operator onto the set $R_\theta$ in terms of the Euclidean norm. However, since it is intractable to explicitly compute $g(\theta)$ in practice, we usually estimate $g(\theta)$ using random samples. In the off-policy setting, consider a sample $x=(s,a,s^\prime,a^\prime)$, in which $(s,a)\sim D(\cdot)$, $s^\prime\sim\msP(\cdot|s,a)$, and $a^\prime\sim \pi(\cdot|s^\prime)$, we can formulate the following update rule:
\begin{flalign}
	\theta_{t+1} = \theta_t - \alpha_t \hat{\rho}(s,a)g(x,\theta_t),\label{eq: 49}
\end{flalign}
where $\hat{\rho}(s,a)$ is an approximation of the density ratio $\rho(s,a)=\mu_{\pi}(s,a)/D(s,a)$, $g(x,\theta) = -\phi(s,a)^\top \delta(x,\theta)$ for forward GVFs and $g(x,\theta) = -\phi(s^\prime,a^\prime)^\top\delta(x,\theta)$ for backward GVFs, where $\delta(x,\theta)$ is the temporal difference error defined as $\delta(x,\theta) = B(s,a)+m(x)\phi(s^\prime,a^\prime)\theta - \phi(s,a)\theta$ for forward GVFs, and $\delta(x,\theta) = {B}(s^\prime,a^\prime)+\hat{m}(x)\phi(s,a)\theta - \phi(s^\prime,a^\prime)\theta$ for backward GVFs. Here $m$ and $\hat{m}$ are matrices that capture the correlations between difference estimations in forward and backward GVFs evaluation settings, respectively. 
Here we adopt the GenDICE/GradientDICE method that proposed in \cite{zhang2020gendice,zhang2020gradientdice} to learn $\rho(s,a)$. In previous works, GenDIC/GradientDICE has only been used for estimating the scalar value $\mE_{\mu_\pi}[r(s,a)]$ in the off-policy setting \cite{zhang2020gendice,zhang2020learning,tang2019doubly}. Our work is the first to adapt this method to solve the more challenging off-policy GVFs evaluation problem.

\paragraph{Learning Density Ratio.} GenDICE/GradientDICE estimates the density ratio $\rho(s,a)$ via solving the following min-max problem \cite{zhang2020gendice,zhang2020gradientdice}:
{\begin{flalign}\label{eq: loss_gendice2}
\min_{\rho}\max_{f,\eta} L(\hat{\rho},f,\eta) \coloneqq \mE_\mcd[\hat{\rho}(f^\prime-f)] - \frac{1}{2}\mE_\mcd[f^2] + \mE_\mcd[\eta\hat{\rho}-\eta] - \frac{1}{2}\eta^2.
\end{flalign}}
We parameterize both $\rho$ and $f$ by linear function approximation with linearly independent features $\psi\in\mR^{d_\rho}$, i.e., $\hat{\rho}(s,a;w_\rho)=\psi(s,a)^\top w_\rho$ and $\hat{f}(s,a;w_f)=\psi(s,a)^\top w_f$ for all $(s,a)\in \mcs\times\mca$. To guarantees the stability of the density ratio learning, we assume that the matrix $A = \mE_{\mcd\cdot \pi}[\psi(\psi-\psi^\prime)^\top]$ is non-singular. Note that this assumption can be removed by adding an $l_2$--regularizer in \cref{eq: loss_gendice2}.

In GenTD (see \Cref{algorithm_offtd}), we estimate the density ratio via updating the parameter $w_{\rho,t}$ iteratively. The density estimator $\hat{\rho}(s_t,a_t;w_{\rho_t})=\psi(s_t,a_t)^\top w_{\rho_t}$ is then used to reweight the update $g(x_t,\theta_t)$.  As we will show in the next section, even though the estimation of $\rho(s,a)$ is not always accurate during the training, \Cref{algorithm_offtd} can still converge to $\theta^*$. 

\textbf{Comparison between GenTD and GTD.} Compared with GTD, our GenTD has the following two advantages. First, since GTD does not adjust the distribution mismatch of sampling, the optimal point of GTD can suffer from large approximation error with respect to the ground truth GVFs even with highly expressive function classes. In contrast, the optimum of GenTD is guaranteed to converge to the ground truth GVFs with sufficiently expressive function classes. Second, GTD needs to update a high-dimensional auxiliary parameter $w$ simultaneously with $\theta$ to stabilize the convergence, where $w\in \mR^{\sum_{i=1}^k K_id_i}$ has the same dimension as $\theta$ (note that $d_i$ can be large in the high dimensional regime). Such an update of $w$ can be very costly. In contrast, GenTD introduces only low-dimensional auxiliary parameters $[w_\rho,w_f, \eta]\in\mR^{2d_\rho+1}$ for density ratio estimation, which is more efficient. 

\begin{algorithm}[tb]
	\caption{Generalized TD Learning (GenTD)}
	\label{algorithm_offtd}
	\begin{algorithmic}
		\STATE {\bfseries Initialize:} Approximator parameters $w_{f,0}$, $w_{\rho,0}$ and $\theta_0$
		\FOR{$t=0,\cdots,T-1$}
		\STATE Obtain sample $(s_t,a_t,C_t,s^\prime_t)\sim \mcd_d$ and $a^\prime_t\sim \pi(\cdot|s^\prime_t)$
		\STATE $\bar{\delta}_{t} = \psi_t^\top\theta_{\rho,t}(\psi^\prime_t - \psi_t)$
		\STATE $\eta_{t+1} = w_{\rho,t} + \beta_t (\psi_t^\top w_{\rho,t} - 1 - \eta_t)$
		\STATE $w_{f,t+1} = w_{f,t} + \beta_t(\bar{\delta}_{t} - \psi^\top_tw_{f,t}\psi_t)$
		\STATE $w_{\rho,t+1} ={\rm\Gamma}_{R_\rho}\left(w_{\rho,t} - \beta_t( \psi^{\prime\top}_tw_{f,t}\psi_t - \psi_t^\top w_{f,t}\psi_t + \eta_t\psi_t )\right)$
		\STATE $\theta_{t+1} = {\rm\Gamma}_{R_\theta}\left(\theta_t - \alpha_t [w_{\rho,t}^\top \psi(s_t,a_t)]g(x_t,\theta_t)\right)$ 
		\STATE $\quad${\bf Forward GVF:} $g(x,\theta)=-\phi(s,a)^\top(B(s,a)+m(x)\phi(s^\prime,a^\prime)\theta - \phi(s,a)\theta)$
		\STATE $\quad${\bf Backward GVF:} $g(x,\theta)=-\phi(s^\prime,a^\prime)^\top({B}(s^\prime,a^\prime)+\hat{m}(x)\phi(s,a)\theta - \phi(s^\prime,a^\prime)\theta)$
		\ENDFOR
	\end{algorithmic}
\end{algorithm}

\section{Main Theorems}\label{sec:mainresults}

In this section, we characterize the convergence rate and optimality guarantee for GenTD. To this end, we first establish a contraction property for the general Bellman operator (GBO) of interest here. Although the contraction property has been proven in the canonical value function settings \cite{tsitsiklis1997analysis,zhang2020learning}, it is unclear whether such a property still holds for {\bf multiple interrelated and high-dimensional} GVFs. We will next show that only under a properly chosen norm, such a property holds for both forward and backward GVFs with causal filtering. This is the first result of such a type.

Consider the GVFs vector $G_\pi = [G^\top_{\pi,1}, \cdots, G^\top_{\pi,k}]^\top$. We define a norm $\lalpha{\cdot}$ associated with a weighting vector $\alpha=[\alpha_1,\cdots,\alpha_k]\in\Delta_k$, where $\Delta_k$ denotes the simplex in $k$-dimensional space, as 
\begin{flalign}
{\textstyle \lalpha{G_\pi} = \sum_{i=1}^{k}\alpha_i \lsmu{G_{\pi,i}}\,\text{where}\,\,\,  0<\alpha_i\leq 1 \quad\text{for all}\,\,\, i \,\,\, \text{and} \,\,\, \sum_{i=1}^{k}\alpha_i=1}.\label{eq: 54}
\end{flalign}
We also define $\gamma_{\max} := \max_{i=1\cdots,k}\gamma_i$, which is strictly less than $1$.
\begin{proposition}[Contraction of Forward/Backward GBO]\label{cond1}
	For any ${G}_{\pi}, {G}^\prime_{\pi} \in\mR^{|\mcs||\mca|\sum_{i=1}^{k}K_id_i}$, there exists a weighting vector $\alpha$ such that
	\begin{flalign}
		\lalpha{\bar{\mct}_{G,\pi}{G}_{\pi} - \bar{\mct}_{G,\pi}{G}^\prime_{\pi}}\leq \gamma_G\lalpha{{G}_{\pi} - {G}^\prime_{\pi}},\label{eq: 52}
	\end{flalign}
	where $\gamma_G=(1+\gamma_{\max})/2$ and $\bar{\mct}_{G,\pi}$ can be either $\mct_{G,\pi}$ (forward GBO, \cref{forward_GBE}) or $\hat{\mct}_{G,\pi}$ (backward GBO, \cref{backward_GBE}).
\end{proposition}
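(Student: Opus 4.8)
The plan is to exploit the lower-triangular (block) structure of $M_\pi$ (resp.\ $\hat M_\pi$) and to choose the weights $\alpha_i$ so that the off-diagonal coupling terms $A_{i,j}$ are absorbed by successively larger diagonal discount factors. First I would reduce to a per-block inequality: for the forward operator, $\bar{\mct}_{G,\pi}G_\pi - \bar{\mct}_{G,\pi}G'_\pi = M_\pi (G_\pi - G'_\pi)$, and writing $\Delta = G_\pi - G'_\pi$ with blocks $\Delta_i$, the $i$-th block of $M_\pi\Delta$ is $\gamma_i[\msP_\pi\otimes\msI_{d_i}]\Delta_i + \sum_{j=1}^{i-1}A_{i,j}\Delta_j$. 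Using the triangle inequality in $\lsmu{\cdot}$, the fact that $[\msP_\pi\otimes\msI_{d_i}]$ is a non-expansion in the $\mu_\pi$-norm (it is a stochastic-matrix averaging operator, which is exactly the standard $\gamma$-contraction fact quoted for $\mct_\pi$ with $\gamma=1$), and the boundedness of $A_{i,j}$ (say $\lsmu{A_{i,j}v}\le L\lsmu{v}$ for a uniform constant $L$), I get
\begin{flalign*}
\lsmu{(M_\pi\Delta)_i}\le \gamma_i\lsmu{\Delta_i} + L\sum_{j=1}^{i-1}\lsmu{\Delta_j}.
\end{flalign*}

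Next I would multiply by $\alpha_i$ and sum over $i$. The key combinatorial step is to pick $\alpha$ geometrically decaying, $\alpha_i = c\,\kappa^{i}$ for a small ratio $\kappa\in(0,1)$ and normalizing constant $c$, so that when the double sum $\sum_i \alpha_i L\sum_{j<i}\lsmu{\Delta_j}$ is reindexed, each term $\lsmu{\Delta_j}$ picks up a factor $L\sum_{i>j}\alpha_i \le L\alpha_j\cdot\frac{\kappa}{1-\kappa}$. Choosing $\kappa$ small enough that $L\kappa/(1-\kappa)\le (1-\gamma_{\max})/2$ yields
\begin{flalign*}
\lalpha{M_\pi\Delta}\le \sum_i\alpha_i\Big(\gamma_i + \tfrac{1-\gamma_{\max}}{2}\Big)\lsmu{\Delta_i}\le \Big(\gamma_{\max}+\tfrac{1-\gamma_{\max}}{2}\Big)\lalpha{\Delta} = \gamma_G\lalpha{\Delta},
\end{flalign*}
which is exactly \cref{eq: 52} with $\gamma_G=(1+\gamma_{\max})/2$. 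The only structural ingredient specific to the problem is the non-expansiveness of the averaging blocks; everything else is the geometric-weights trick for triangular systems.

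For the backward operator the argument is identical once I check that the modified diagonal blocks $\hat P_{\pi,i}=U_{\pi,i}^{-1}[\msP_\pi\otimes\msI_{d_i}]U_{\pi,i}$ are also non-expansions in the $\mu_\pi$-norm. This follows from \cref{eq: backward_prob}: $U_{\pi,i}^{-1}[\msP_\pi\otimes\msI_{d_i}]U_{\pi,i}$ is, up to the Kronecker factor $\msI_{d_i}$, the transition matrix of the time-reversed chain, whose stationary distribution is again $\mu_\pi$; hence by the same stochastic-averaging fact it is a non-expansion in $\lsmu{\cdot}$. (Concretely, $\mu_\pi^\top \hat P_{\pi}=\mu_\pi^\top$ and $\hat P_\pi \mathbf 1=\mathbf 1$, so $\|\hat P_\pi v\|_{\mu_\pi}\le\|v\|_{\mu_\pi}$ by Jensen/Cauchy–Schwarz.) With that in hand the block inequality $\lsmu{(\hat M_\pi\Delta)_i}\le\gamma_i\lsmu{\Delta_i}+L\sum_{j<i}\lsmu{\Delta_j}$ holds verbatim and the same choice of $\alpha$ closes the proof.

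I expect the main obstacle to be twofold. First, making the dependence of $\alpha$ on the problem data clean: the proposition states "there exists a weighting vector $\alpha$," so I need $\kappa$ to depend only on $\gamma_{\max}$ and the uniform bound $L$ on the $A_{i,j}$ (in $\mu_\pi$-operator norm), and to verify $\alpha\in\Delta_k$ with $0<\alpha_i\le 1$ — the geometric choice with proper normalization handles this. Second, and more delicate, is pinning down the precise sense in which $[\msP_\pi\otimes\msI_{d_i}]$ and $\hat P_{\pi,i}$ are $\mu_\pi$-norm non-expansions on the $d_i$-vector-valued space: the Kronecker structure means the averaging acts coordinatewise on the $d_i$ components while the weighting $\mu_\pi$ acts on the state-action index, so I would verify $\lsmu{[\msP_\pi\otimes\msI_{d_i}]v}^2=\sum_{s,a}\mu_\pi(s,a)\|\mE_{s',a'}[v(s',a')]\|_2^2\le \sum_{s,a}\mu_\pi(s,a)\,\mE_{s',a'}\|v(s',a')\|_2^2=\lsmu{v}^2$ by Jensen, and the analogous chain for the reversed kernel — this is routine but is the one place where the high-dimensionality ($d_i>1$) must be handled explicitly rather than reduced to the scalar case.
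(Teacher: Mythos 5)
Your proposal is correct and follows essentially the same route as the paper's proof: per-block triangle inequality on the lower-triangular system, Jensen-based non-expansiveness of $[\msP_\pi\otimes\msI_{d_i}]$ and of the time-reversed kernel $U_{\pi,i}^{-1}[\msP_\pi\otimes\msI_{d_i}]U_{\pi,i}$ in the $\mu_\pi$-norm (the paper's Lemmas \ref{lemma13} and \ref{lemma14}), a uniform operator bound $C_A$ on the $A_{i,j}$, and weights chosen so that $C_A\sum_{j>i}\alpha_j\leq\frac{1-\gamma_{\max}}{2}\alpha_i$. The only cosmetic difference is that the paper defines $\alpha$ implicitly as the positive solution of a triangular linear system, whereas you write the equivalent geometric-decay choice explicitly.
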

Despite the correlations between GVFs, \Cref{cond1} shows that the contraction property is still preserved under a properly chosen norm for ${\mct}_{G,\pi}$ and $\hat{\mct}_{G,\pi}$ in forward and backward GVF settings, respectively.  The norm can vary for different GVFs.
\Cref{cond1} also implies that both forward and backward GVFs ($G_\pi$ and $\hat{G}_\pi$) can be identified as unique fixed point of their corresponding GBOs. 

Based on \Cref{cond1}, we next establish the monotonicity property for our GenTD algorithm, if it takes the population update $g(\theta)= {\rm\Phi}^\top U_\pi (G(\theta) - \bar{\mct}_{G,\pi}G(\theta))$.
\begin{proposition}[Monotonicity]\label{cond2}
Consider the globally optimal point $\theta^*$ defined in \cref{eq: 48}. There exists a constant $\lambda_G$ such that for all $\theta\in R_\theta$, we have
	\begin{flalign}
	\langle  g(\theta^*)-g(\theta), \theta^*-\theta \rangle\geq \lambda_G\lF{\theta-\theta^*}^2,\label{eq: 55}
	\end{flalign}
	where $\lambda_G := (1-\gamma_{\max})\min_{1\leq i\leq k}\zeta_i$ and $\zeta_i := \lambda_{\min}({\rm\Phi}^\top_iU_\pi{\rm\Phi}_i)$.
\end{proposition}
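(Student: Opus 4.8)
The plan is to exploit that $g$ is an \emph{affine} map of $\theta$, reduce \cref{eq: 55} to a lower bound on a single quadratic form, and bound that form block by block, using \Cref{cond1} to absorb the causal-filtering couplings. Since $G(\theta)={\rm\Phi}\theta$ and $\bar{\mct}_{G,\pi}G(\theta)=B+M_\pi G(\theta)$ — with $M_\pi$ the block matrix of \cref{forward_GBE} in the forward case or \cref{backward_GBE} in the backward case — we have $g(\theta^*)-g(\theta)={\rm\Phi}^\top U_\pi(\msI-M_\pi){\rm\Phi}(\theta^*-\theta)$, so writing $u:={\rm\Phi}(\theta-\theta^*)=(u_1,\dots,u_k)$ the target \cref{eq: 55} is equivalent to $u^\top U_\pi(\msI-M_\pi)u\ge\lambda_G\lF{\theta-\theta^*}^2$. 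Note that $\theta^*$ plays no role beyond entering this difference: what is really asserted is strong monotonicity of the affine operator $g$.

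First I would dispatch the ``diagonal'' contribution, which mirrors the canonical $k=1$ TD analysis. Decompose $M_\pi=M_\pi^{\mathrm d}+M_\pi^{\mathrm o}$, where $M_\pi^{\mathrm d}=\diag(\gamma_i[\msP_\pi\otimes\msI_{d_i}])$ (forward) or $\diag(\gamma_i\hat P_{\pi,i})$ (backward) and $M_\pi^{\mathrm o}$ collects the strictly lower-triangular blocks $A_{i,j}$. Each diagonal transition operator is nonexpansive in $\lsmu{\cdot}$: for the forward case by Jensen's inequality together with $\mu_\pi^\top\msP_\pi=\mu_\pi^\top$; for the backward case because $\hat P_{\pi,i}=U_{\pi,i}^{-1}[\msP_\pi\otimes\msI_{d_i}]U_{\pi,i}$ is the transition kernel of the time-reversed stationary chain — precisely the reversal identity \cref{eq: backward_prob} — which again has $\mu_\pi$ as stationary distribution. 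Cauchy–Schwarz then gives $\langle u_i,\gamma_i(\text{transition})u_i\rangle_{U_{\pi,i}}\le\gamma_i\lsmu{u_i}^2$, so
\[
u^\top U_\pi(\msI-M_\pi^{\mathrm d})u\;\ge\;\sum_{i=1}^k(1-\gamma_i)\lsmu{u_i}^2\;\ge\;(1-\gamma_{\max})\sum_{i=1}^k\zeta_i\lo{\theta_i-\theta_i^*}^2\;\ge\;\lambda_G\lF{\theta-\theta^*}^2 ,
\]
the middle inequality holding because each ${\rm\Phi}_i$ has linearly independent columns, so $\lsmu{u_i}^2\ge\zeta_i\lo{\theta_i-\theta_i^*}^2$.

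The remaining — and only nontrivial — step is to control the off-diagonal term $u^\top U_\pi M_\pi^{\mathrm o}u=\sum_{i>j}\langle u_i,A_{i,j}u_j\rangle_{U_{\pi,i}}$: it is sign-indefinite and a priori scales with the norms of the $A_{i,j}$, so in the plain Euclidean geometry no $\lambda_G$ independent of the $A_{i,j}$ can survive, and one must work in the reweighted geometry behind \Cref{cond1}. Concretely, I would upgrade \cref{eq: 52} to a contraction (factor $\gamma_G<1$) in the weighted \emph{Hilbert} norm with $\|v\|^2=\sum_i\alpha_i\lsmu{v_i}^2$, by choosing $\alpha\in\Delta_k$ with components decaying fast enough in the block index that $\sum_{i>j}\alpha_i\|A_{i,j}\|_{\mu_\pi}$ is an arbitrarily small multiple of $\alpha_j$ — exactly the weight condition that already powers \Cref{cond1} — and then apply Young's inequality to each cross term, routing its $\lsmu{u_i}^2$-mass into the Young slack and its $\lsmu{u_j}^2$-mass into the $\alpha$-decay. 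In this geometry Cauchy–Schwarz yields $\sum_i\alpha_i\langle u_i,((\msI-M_\pi)u)_i\rangle_{U_{\pi,i}}\ge(1-\gamma_G)\sum_i\alpha_i\lsmu{u_i}^2$, and full-column-rank features turn this into the claimed $\lambda_G\lF{\theta-\theta^*}^2$ — the $\alpha$-dependent constants being independent of the GVF dimensions $d_i$ and absorbable into the eventual $\mathcal{O}(1/T)$ rate. I expect this reweighting to be the main obstacle: the sum-of-norms metric of \cref{eq: 54} is not induced by an inner product, so \Cref{cond1} does not by itself yield monotonicity; one has to re-derive the contraction in a Hilbertian weighted norm and then carefully track the Young-inequality bookkeeping so that it closes with the stated constant.
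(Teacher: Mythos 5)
Your reduction of \cref{eq: 55} to the quadratic form $u^\top U_\pi(\msI-M_\pi)u$ with $u={\rm\Phi}(\theta-\theta^*)$, and your treatment of the diagonal blocks (nonexpansiveness of $\msP_\pi\otimes\msI_{d_i}$ and of $U_{\pi,i}^{-1}[\msP_\pi\otimes\msI_{d_i}]U_{\pi,i}$ in the $\mu_\pi$-norm, then $\lsmu{u_i}^2\ge\zeta_i\ltwo{\theta_i-\theta_i^*}^2$ from full column rank), match the paper's ingredients (\Cref{lemma13}, \Cref{lemma14}, and the Bhandari-type bound). The genuine gap is in how you handle the off-diagonal blocks. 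The quantity the proposition bounds is the \emph{unweighted} Euclidean pairing $\langle g(\theta^*)-g(\theta),\theta^*-\theta\rangle=\sum_i\langle u_i,((\msI-M_\pi)u)_i\rangle_{U_{\pi,i}}$; proving instead that $\sum_i\alpha_i\langle u_i,((\msI-M_\pi)u)_i\rangle_{U_{\pi,i}}\ge(1-\gamma_G)\sum_i\alpha_i\lsmu{u_i}^2$ bounds a \emph{different} bilinear form --- it is strong monotonicity of an $\alpha$-reweighted operator ${\rm\Phi}^\top\diag(\alpha_1 U_{\pi,1},\dots,\alpha_k U_{\pi,k})(\msI-M_\pi){\rm\Phi}$, not of the $g$ defined in \cref{eq: 48}. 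Reweighting the norm does not reweight the operator: the cross terms $\langle u_i,A_{i,j}u_j\rangle_{U_{\pi,i}}$ in the actual pairing are untouched by your choice of $\alpha$, so the Young-inequality bookkeeping never closes for \cref{eq: 55}. Even for the reweighted form, converting back to $\lF{\theta-\theta^*}^2$ costs a factor $\min_i\alpha_i$, which your own weight construction forces to depend on $C_A$ and $k$; that is incompatible with the stated constant $\lambda_G=(1-\gamma_{\max})\min_i\zeta_i$. Indeed you state explicitly that ``in the plain Euclidean geometry no $\lambda_G$ independent of the $A_{i,j}$ can survive'' --- but the plain Euclidean geometry is exactly what \cref{eq: 55} is stated in, so your plan argues against the statement rather than establishing it.

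For comparison, the paper takes a spectral route: it writes $-g(\theta)=G\theta+g$ with $G={\rm\Phi}^\top U_\pi(M_\pi-\msI){\rm\Phi}$ block lower-triangular (and $\hat{G}={\rm\Phi}^\top P_\pi^\top(\hat{M}_\pi-\msI)U_\pi{\rm\Phi}$ in the backward case), and bounds $\max\{\text{eig}(G)\}\le-(1-\gamma_{\max})\min_i\zeta_i$ using only the diagonal blocks, since the spectrum of a block-triangular matrix is the union of the spectra of its diagonal blocks; the off-diagonal blocks $N_{i,j}$ never enter that argument. Your cross-term concern is in fact a substantive question about whether such an eigenvalue bound on the nonsymmetric $G$ suffices for the quadratic-form inequality, but as a proof of \Cref{cond2} your proposal does not get there: it neither follows the paper's eigenvalue argument nor controls the off-diagonal contribution to the Euclidean pairing that \cref{eq: 55} actually asserts.
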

\Cref{cond2} implies the contraction property of $g(\theta)$. It guarantees that $\theta$ moves towards a globally optimal point $\theta^*$ if it is updated along the direction $-g(\theta)$. \Cref{cond2} generalizes the monotonicity property to a much broader class of interrelated and multi-dimensional GVF evaluation, which is far more beyond TD learning for the value function evaluation studied in \cite{tsitsiklis1997analysis,zhang2020learning}.
The following theorem characterizes the convergence rate of GenTD.
\begin{theorem}\label{thm1}
	Consider the GenTD update in \Cref{algorithm_offtd}. Let the stepsize $\alpha_t={\rm\Theta}(t^{-1})$ and $\beta_t = {\rm\Theta}(t^{-1})$. We have
	\begin{flalign}
	\mE[\lF{\theta_{T}-\theta^*}^2]\leq \mathcal{O}\left(\frac{\lF{\theta_0-\theta^*}^2}{T^2}\right) + \mathcal{O}\left(\frac{1}{\lambda^3_GT}\right) + \mathcal{O}\left(\frac{\varepsilon_\rho}{\lambda^2_G}\right),\label{eq: 51}
	\end{flalign}
	where $\varepsilon_\rho = \sqrt{\mE_{\mcd\cdot \pi}[\hat{\rho}(s,a;w^*_\rho)  - \rho(s,a)]^2 }$ is the approximation error introduced by the density ratio learning, with $w^*_\rho$ being the global optimal point of $L(\hat{\rho},f,\eta)$ defined in \cref{eq: loss_gendice2}.
\end{theorem}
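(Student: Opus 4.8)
The plan is to treat the GenTD update of $\theta$ as a stochastic approximation scheme whose drift is (approximately) the strongly monotone field $g(\theta)$ of Proposition~\ref{cond2}, while simultaneously controlling the error fed in by the concurrently-running density-ratio recursion. First I would write the standard one-step expansion of the projected iterate: using nonexpansiveness of ${\rm\Gamma}_{R_\theta}$, $\lF{\theta_{t+1}-\theta^*}^2\leq \lF{\theta_t-\alpha_t \hat\rho_t g(x_t,\theta_t)-\theta^*}^2$, and expand the square into $\lF{\theta_t-\theta^*}^2 - 2\alpha_t\langle \hat\rho_t g(x_t,\theta_t),\theta_t-\theta^*\rangle + \alpha_t^2\lF{\hat\rho_t g(x_t,\theta_t)}^2$. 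Taking conditional expectation, the cross term splits as the idealized drift $\langle g(\theta_t),\theta_t-\theta^*\rangle$ (obtained when $\hat\rho=\rho$ and samples are reweighted exactly to $\mu_\pi$), plus a bias term coming from $\hat\rho_t-\rho$, plus a martingale-difference term that vanishes in expectation. For the idealized drift I invoke Proposition~\ref{cond2} together with the optimality condition \cref{eq: 48} (which gives $\langle g(\theta^*),\theta_t-\theta^*\rangle\geq 0$) to get $\langle g(\theta_t),\theta_t-\theta^*\rangle\geq \lambda_G\lF{\theta_t-\theta^*}^2$. For the second-moment term I would use boundedness of $R_\theta$, $R_\rho$, the features ($\linf{\phi_i}\le 1$), $B_i$, $A_{i,j}$, and the correlation matrices $m,\hat m$ to bound $\lF{\hat\rho_t g(x_t,\theta_t)}^2\le C_g$ uniformly.

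The next ingredient is quantifying the density-ratio bias. I would show $\big|\mE[\langle(\hat\rho_t-\rho)g(x_t,\theta_t),\theta_t-\theta^*\rangle]\big|\le C\,\mE\big[\,\|\hat\rho(\cdot;w_{\rho,t})-\rho\|_{\mcd\cdot\pi}\big]\,\lF{\theta_t-\theta^*}$, then split $\hat\rho(\cdot;w_{\rho,t})-\rho = (\hat\rho(\cdot;w_{\rho,t})-\hat\rho(\cdot;w_\rho^*)) + (\hat\rho(\cdot;w_\rho^*)-\rho)$. The first piece is $O(\lF{w_{\rho,t}-w_\rho^*})$ by linearity of the parametrization and boundedness of $\psi$; the second is exactly $\varepsilon_\rho$. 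Here I would quote (or reprove in an appendix) the finite-time convergence of GenDICE/GradientDICE under the non-singularity assumption on $A=\mE_{\mcd\cdot\pi}[\psi(\psi-\psi')^\top]$: with $\beta_t={\rm\Theta}(t^{-1})$ the linear-saddle-point recursion satisfies $\mE[\lF{w_{\rho,t}-w_\rho^*}^2]\le O(1/t)$ (this is a routine two-timescale-free linear SA argument since the saddle operator is strongly monotone on the relevant subspace). Applying Young's inequality $2ab\le \tfrac{\lambda_G}{2}a^2 + \tfrac{2}{\lambda_G}b^2$ to the bias term, I convert it into $\tfrac{\lambda_G}{2}\lF{\theta_t-\theta^*}^2 + \tfrac{C}{\lambda_G}\big(\mE[\lF{w_{\rho,t}-w_\rho^*}^2] + \varepsilon_\rho^2\big)$; the first summand is absorbed into the $-2\alpha_t\lambda_G$ contraction, leaving an effective rate $-\alpha_t\lambda_G$.

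Collecting everything gives the scalar recursion, with $e_t:=\mE[\lF{\theta_t-\theta^*}^2]$,
\begin{flalign*}
e_{t+1}\le (1-\alpha_t\lambda_G)e_t + \alpha_t^2 C_g + \frac{C\alpha_t}{\lambda_G}\Big(\frac{C'}{t+1}+\varepsilon_\rho\Big).
\end{flalign*}
I would then unroll this with $\alpha_t=\Theta(1/t)$: the homogeneous part contributes $O(\lF{\theta_0-\theta^*}^2/T^2)$ after choosing the constant in $\alpha_t$ large enough that $\prod(1-\alpha_t\lambda_G)\le O(T^{-2})$; the $\alpha_t^2 C_g$ term and the $\alpha_t\cdot\tfrac{1}{\lambda_G}\cdot\tfrac{1}{t}$ term each accumulate to $O(1/(\lambda_G^? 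T))$ (tracking the $\lambda_G$ powers through the standard $\sum_{t}\alpha_t^2\prod_{s>t}(1-\alpha_s\lambda_G)\le O(1/(\lambda_G T))$ bound gives the stated $\lambda_G^{-3}$ after also accounting for the $\lambda_G^{-1}$ from Young and a further $\lambda_G^{-1}$ from the GenDICE constant's dependence); and the $\alpha_t\varepsilon_\rho/\lambda_G$ term accumulates, via $\sum_{t}\alpha_t\prod_{s>t}(1-\alpha_s\lambda_G)\le O(1/\lambda_G)$, to the non-vanishing floor $O(\varepsilon_\rho/\lambda_G^2)$. This matches \cref{eq: 51}. I expect the main obstacle to be the bias-term bookkeeping in the second paragraph: one must verify that reweighting by the \emph{imperfect} $\hat\rho_t$ still yields a drift that is strongly monotone toward $\theta^*$ (not toward some perturbed fixed point), which is why the decomposition against $g(\theta)$ — rather than against the sampled, reweighted operator's own fixed point — is essential, and why the $\lambda_G$-dependence in the final bound is cubic rather than linear. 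A secondary technical point is ensuring the martingale term is genuinely mean-zero, which requires that $\hat\rho_t$ and $g(x_t,\theta_t)$ be conditionally independent in the right way given $\mcf_t$ — true here because $w_{\rho,t}$ is $\mcf_t$-measurable and the sample $x_t$ is drawn fresh.
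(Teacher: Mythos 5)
Your proposal is correct and follows essentially the same route as the paper's proof: a one-step expansion of the projected iterate, decomposition of the cross term into the strongly monotone population drift (Proposition~\ref{cond2} plus the optimality condition \cref{eq: 48}), a mean-zero noise term, and a density-ratio bias split into the estimation error $\hat{\rho}(\cdot;w_{\rho,t})-\hat{\rho}(\cdot;w^*_\rho)$ (controlled by an $O(1/t)$ convergence of the GradientDICE recursion, which the paper proves as \Cref{lemma2}) and the approximation error $\varepsilon_\rho$, followed by Young's inequality and unrolling with $\Theta(1/t)$ stepsizes to obtain the $O(1/T^2)+O(1/(\lambda_G^3T))+O(\varepsilon_\rho/\lambda_G^2)$ structure. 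The only differences are cosmetic (the paper uses a three-points lemma with the drift evaluated at $\theta_{t+1}$ and a weighted-sum telescoping instead of your direct expansion and product unrolling), so no gap remains.
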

\Cref{thm1} shows that GenTD converges to the globally optimal point $\theta^*$ at a rate $\mathcal{O}(1/T)$. The convergence speed of $\theta$ also depends on the conditional number $\lambda_G$, where the converge becomes faster as $\lambda_G$ increases. Specifically, the R.H.S. of \cref{eq: 51} consists of three terms. The first term corresponds to the initialization error, which delays as fast as  $\mathcal{O}(1/T^2)$. The second term corresponds to the variance error, which dominates the convergence rate of GenTD to be $\mathcal{O}(1/T)$. The last term corresponds to a non-vanishing optimality gap, which is introduced by the function approximation error in the density ratio estimation, and decreases as the expressive power of the approximation function class $\{\hat{\rho}(w_\rho): w_\rho \in R_\rho\}$ increases. The convergence analysis of GenTD is more challenging than that of TD learning \cite{bhandari2018finite,dalal2018finite,srikant2019finite} and GTD \cite{xu2019two,kaledin2020finite}, as we need to handle an additional approximation error introduced by the dynamically changing density ratio estimator $\hat{\rho}(w_{\rho_t})$.



\Cref{thm1} establishes the convergence of GenTD to the globally optimal point $\theta^*$ of the objective function in \cref{eq: 47}, which provides the value estimation $G(\theta^*)$ for the GVFs. We are then interested in characterizing how close such an estimation is to the ground truth GVF $G_\pi$, which is our ultimate goal of evaluation. We characterize this in the following theorem.
\begin{theorem}[Convergence of GenTD to Ground Truth]\label{thm3}
	Consider $\theta^*$ defined in \cref{eq: 47}. Suppose the same conditions in \Cref{cond1} and \Cref{cond2} hold. We have
	\begin{flalign}
		\lalpha{ G(\theta^*) - G_\pi } \leq \frac{1}{1-\gamma_G}\lalpha{{\rm\Gamma}_{\mcf_{\rm\Phi},\mu_{\pi}} G_\pi - G_\pi}.\label{eq: 60}
	\end{flalign}
\end{theorem}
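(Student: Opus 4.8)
The plan is to bound the distance between the GenTD solution $G(\theta^*)$ and the ground truth GVF $G_\pi$ by a standard fixed-point/projection argument, of the same flavor as the classical TD error bound $\|V(\theta^*) - V_\pi\| \le \frac{1}{1-\gamma}\|\Gamma V_\pi - V_\pi\|$, but carried out in the weighted norm $\lalpha{\cdot}$ and using the general Bellman operator $\bar{\mct}_{G,\pi}$. First I would observe that $\theta^*$ is defined as the minimizer of the MSPGBE in \cref{eq: 47}, and — since $\Gamma_{\mcf_{\rm\Phi},\mu_\pi}\bar{\mct}_{G,\pi}$ maps $\mcf_{\rm\Phi}$ into $\mcf_{\rm\Phi}$ and is a composition of the (non-expansive in $\mu_\pi$-norm) projection with the contraction $\bar{\mct}_{G,\pi}$ — one can argue it is a fixed point: $G(\theta^*) = \Gamma_{\mcf_{\rm\Phi},\mu_\pi}\bar{\mct}_{G,\pi}G(\theta^*)$. (One subtlety: the contraction in \Cref{cond1} is in the $\lalpha{\cdot}$ norm, not the plain $\mu_\pi$ norm, so I would need the projection $\Gamma_{\mcf_{\rm\Phi},\mu_\pi}$ to also be non-expansive with respect to $\lalpha{\cdot}$; since $\lalpha{\cdot}$ is a sum of blockwise $\mu_\pi$-norms and the feature matrix $\rm\Phi$ is block-diagonal across the $k$ GVFs, the projection decomposes blockwise and non-expansiveness in each block gives non-expansiveness in $\lalpha{\cdot}$. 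I would make this explicit.)

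Given the fixed-point identity, the main estimate proceeds by triangle inequality:
\begin{flalign*}
\lalpha{G(\theta^*) - G_\pi} &\leq \lalpha{G(\theta^*) - \Gamma_{\mcf_{\rm\Phi},\mu_\pi}G_\pi} + \lalpha{\Gamma_{\mcf_{\rm\Phi},\mu_\pi}G_\pi - G_\pi}.
\end{flalign*}
For the first term I would write $G(\theta^*) = \Gamma_{\mcf_{\rm\Phi},\mu_\pi}\bar{\mct}_{G,\pi}G(\theta^*)$ and $\Gamma_{\mcf_{\rm\Phi},\mu_\pi}G_\pi = \Gamma_{\mcf_{\rm\Phi},\mu_\pi}\bar{\mct}_{G,\pi}G_\pi$ (using that $G_\pi$ is the fixed point of $\bar{\mct}_{G,\pi}$, from \Cref{cond1}), then apply non-expansiveness of the projection in $\lalpha{\cdot}$ followed by the contraction bound of \Cref{cond1}:
\begin{flalign*}
\lalpha{\Gamma_{\mcf_{\rm\Phi},\mu_\pi}\bar{\mct}_{G,\pi}G(\theta^*) - \Gamma_{\mcf_{\rm\Phi},\mu_\pi}\bar{\mct}_{G,\pi}G_\pi} \leq \gamma_G \lalpha{G(\theta^*) - G_\pi}.
\end{flalign*}
Substituting back and solving the resulting inequality $\lalpha{G(\theta^*) - G_\pi} \le \gamma_G\lalpha{G(\theta^*) - G_\pi} + \lalpha{\Gamma_{\mcf_{\rm\Phi},\mu_\pi}G_\pi - G_\pi}$ for $\lalpha{G(\theta^*) - G_\pi}$ yields \cref{eq: 60}, since $\gamma_G = (1+\gamma_{\max})/2 < 1$.

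The step I expect to be the main obstacle is rigorously justifying the fixed-point characterization $G(\theta^*) = \Gamma_{\mcf_{\rm\Phi},\mu_\pi}\bar{\mct}_{G,\pi}G(\theta^*)$ from the variational-inequality definition of $\theta^*$ in \cref{eq: 48}, together with the compatibility of the two norms ($\mu_\pi$ for the projection, $\lalpha{\cdot}$ for the contraction). The cleanest route is probably to note that $\theta \mapsto \Gamma_{\mcf_{\rm\Phi},\mu_\pi}\bar{\mct}_{G,\pi}G(\theta)$ is a contraction on $R_\theta$ in the $\lalpha{\cdot}$ norm (composition of a $\lalpha{\cdot}$-non-expansive projection with a $\gamma_G$-contraction), hence has a unique fixed point $\tilde\theta$, and then verify that $\tilde\theta$ satisfies the optimality condition \cref{eq: 48} — which reduces to the projection characterization — so $\theta^* = \tilde\theta$ by uniqueness of solutions to the variational inequality (guaranteed since $g$ is strongly monotone by \Cref{cond2}). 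I would also flag that if $R_\theta$ is a proper convex subset (not all of $\mR^{\sum K_id_i}$) one needs the fixed point to lie in the interior, or equivalently that the unconstrained projected-Bellman fixed point is feasible; I would either assume this or note it is implied by $\mcf_{\rm\Phi}$ being the relevant function class. Everything else is routine triangle-inequality manipulation.
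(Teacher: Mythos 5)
Your proposal is correct and follows essentially the same route as the paper's proof: establish that $G(\theta^*)$ is a fixed point of the composite operator ${\rm\Gamma}_{\mcf_{\rm\Phi},\mu_\pi}\bar{\mct}_{G,\pi}$, apply the triangle inequality with ${\rm\Gamma}_{\mcf_{\rm\Phi},\mu_\pi}G_\pi$, use the fixed-point property $\bar{\mct}_{G,\pi}G_\pi = G_\pi$, non-expansiveness of the projection, the contraction from \Cref{cond1}, and then solve the resulting inequality using $\gamma_G<1$. The subtleties you flag (the $\lalpha{\cdot}$-versus-$\mu_\pi$ norm compatibility of the projection and the derivation of the fixed-point identity from the variational inequality \cref{eq: 48}) are handled in the paper only by citing prior work, so your treatment is, if anything, more explicit.
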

\Cref{thm3} indicates that the distance between the optimal estimation $G(\theta^*)$ and the true GVF $G_\pi$ is upper bounded by the approximation error of the function class $\mcf_{\rm\Phi}$ for the ground truth GVF $G_\pi$ (note that ${\rm\Gamma}_{\mcf_{\rm\Phi},\mu_{\pi}} G_\pi$ denotes the projection of $G_\pi$ to the function approximation class $\mcf_{\rm\Phi}$). Hence, \Cref{thm3} guarantees that $G(\theta^*)$ can be as close as possible to the true GVF $G_\pi$, as long as the function class $\mcf_{\rm\Phi}$ is sufficiently expressive. In particular, if $\mcf_{\rm\Phi}$ is complete, i.e., there exists $G_\theta\in\mcf_{\rm\Phi}$ such that $G_\theta=G_\pi$, then GenTD is guaranteed to converge exactly to the ground truth $G_\pi$. 

{\bf Comparison between GenTD and GTD.} If $\mcf_{\rm\Phi}$ is complete, GTD performs similarly to GenTD and is guaranteed to converge to the ground truth $G_\pi$ (see \Cref{sc: quality_gtd} for the proof). The major difference between GenTD and GTD occurs when $\mcf_{\rm\Phi}$ is not complete. In such a case, our GenTD still maintains the desirable performance as guaranteed by \Cref{thm3}, but the optimal point of GTD (i.e., $\hat{\theta}^*$ in \cref{eq: globalGTD}) does not have guaranteed convergence to the ground truth. As shown in \cite{kolter2011fixed,hallak2017consistent,munos2003error}, even in the value function evaluation setting (a special case of forward GVF evaluation) the approximation error $||G(\hat{\theta}^*)-G_\pi||_D$ of GTD can be arbitrarily poor even if $\mcf_{\rm\Phi}$ can represent the true value function arbitrarily well (but not exactly). Such a disadvantage of GTD is mainly due to the distribution mismatch in its objective function as we discuss in \Cref{sec:gentd}.

In the backward GVFs evaluation setting, GTD can perform even worse. As we show in the following example, GTD may fail to learn the ground truth $G_\pi$ even if the function class $\mcf_{\rm\Phi}$ is complete. Note that for such a case, GenTD converges to the ground truth as guaranteed by \Cref{thm3}.

\begin{example}[GTD Fails for Complete $\mcf_{\rm\Phi}$]\label{ex1}
	Consider a three-state Markov chain, with transition kernel $\msP = [[0.1, 0.9, 0], [0.1, 0, 0.9], [0, 0.1, 0.9]]^\top$, discount factor $\gamma=0.99$, and the reward function $R = [1, 0, 1]^\top$. The back value function in this MDP is given by $\bar{V}= [8.1555, 9.0389, 9.0184]^\top$. Suppose GTD is applied to solving the evaluation problem with the parameter space $R_\theta = \mR$. Then, there exists an off-policy distribution $D$ such that using the perfect bases ${\rm\Phi} = [8.1555, 9.0389, 9.0184]^\top$, the optimal point $\bar{\theta}^*$ learned by GTD still has non-zero approximation error, i.e., $||{\rm\Phi}\bar{\theta}^*-\bar{V}||_D\geq 3$.
\end{example}

\vspace{-0.2cm}
\section{Experiments}\label{sc: exp}
We conduct empirical experiments to answer the following two questions: (a) can GenTD evaluate both the forward and backward GVFs efficiently? (2) how does GenTD compare with GTD in terms of the convergence speed and the quality of the estimation results? 
\begin{figure}[h]
	\centering 
	\includegraphics[width=1.9in]{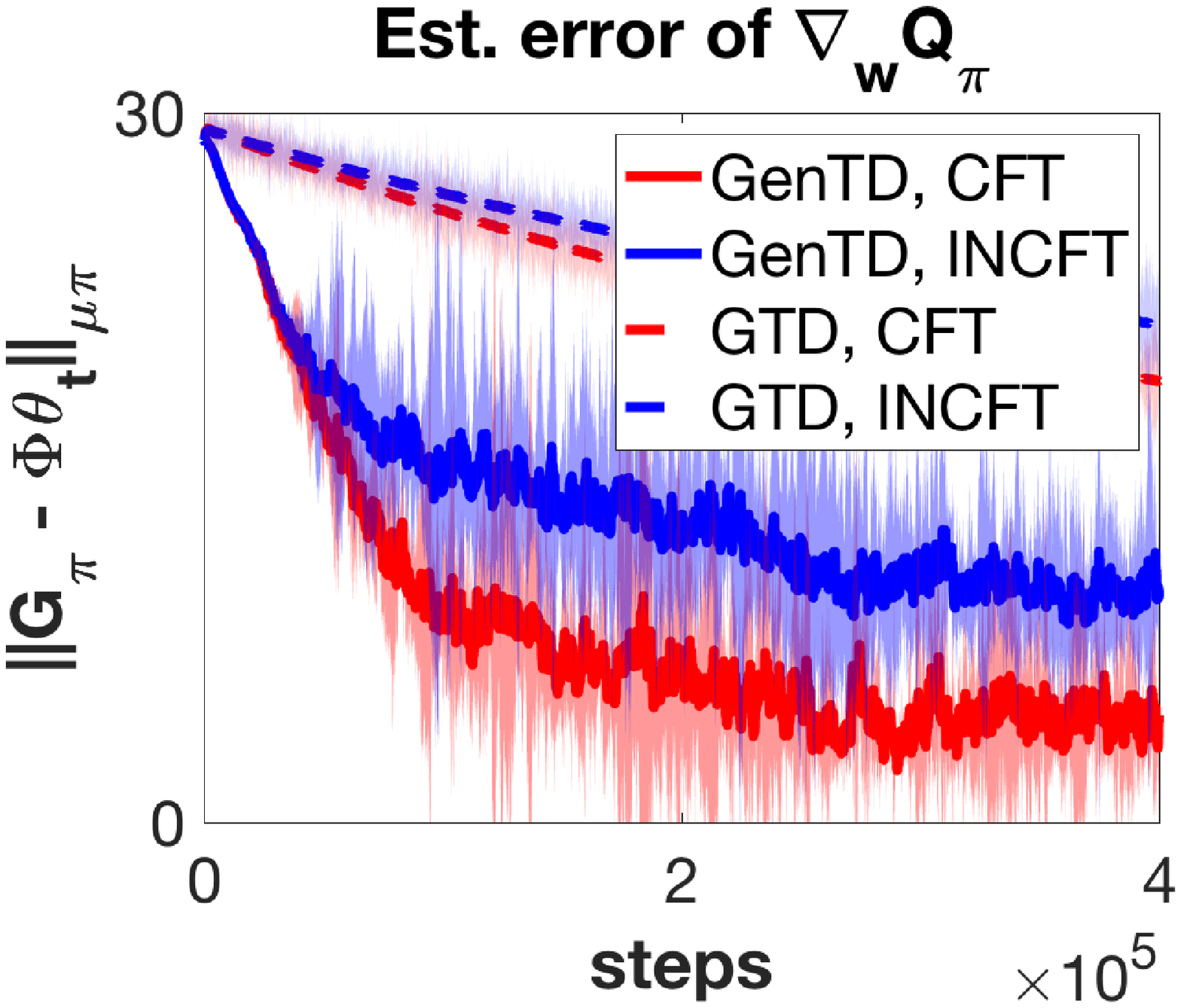}\includegraphics[width=1.9in]{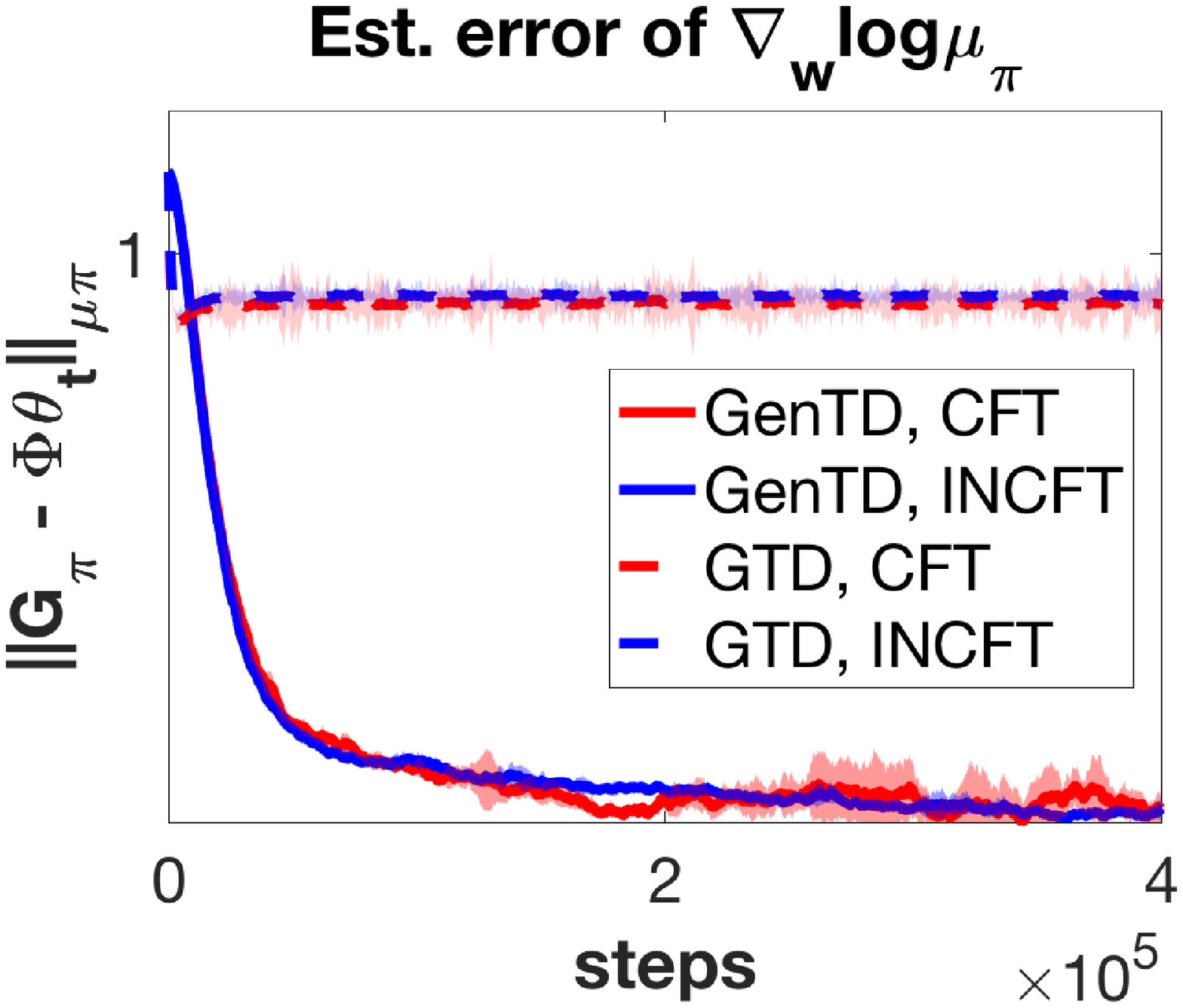}
	\caption{\small\em Comparison between GenTD and GTD for the tasks of evaluating $\nabla_w Q_\pi$ and $\nabla_w \log\mu_\pi$. }
	\label{fig: 1}
	\vspace{-0.2cm}
\end{figure}
In our experiments, we consider a variant of Baird's counterexample \cite{baird1995residual,sutton2018reinforcement} with 7 states and 2 actions (see \Cref{fig: baird} in \Cref{sc: app_exp}). We study the problem of evaluating two high-dimensional GVFs, the gradient of Q-function: $\nabla_w Q_\pi \in\mR^{14}$ ({\rm forward GVF}), and the gradient of logarithmic stationary distribution: $\nabla_w\log(\mu_\pi)\in\mR^{14}$ (backward GVF), associated with a soft-max policy parameterized by $w\in\mR^{14}$. We consider two types of feature matrices ${\rm\Phi}$ for estimating the GVFs: complete feature (CFT) and incomplete feature (INCFT), where CFT has large enough expressive power so that the ground true GVF can be fully expressed by the function class $\mcf_{\rm\Phi}$, whereas INCFT does not have enough expressive power and cannot capture the ground true GVF exactly. The discount factor $\gamma$ is set to be $0.99$ in all tasks, and all curves in the plots are averaged over 20 independent runs. The detailed experimental setting is provided in \Cref{sc: app_exp}.

The learning curves for GenTD and GTD are provided in \Cref{fig: 1}. We evaluate their performances based on the estimation error with respect to the ground truth GVF: $\lsmu{{\rm\Phi}\theta_t-{G}_\pi}$. Note that both $\nabla_w Q_\pi$ and $\nabla_w \log\mu_\pi$ can be exactly computed in this tabular setting, so that the estimator error of the ground truth can be computed. For the task of $\nabla_w Q$ evaluation, GenTD converges considerably faster and much closer to the ground truth (i.e., smaller estimation error) than GTD.
which can be attributed to the larger conditional number $\lambda_G$ of GenTD. For the task of $\nabla_w \log\mu_\pi$ evaluation, GenTD moves fast towards the ground truth GVF, whereas GTD, although still converges,
stays far away from the ground truth GVF even with CFT, which matches with our \Cref{ex1}. As we discuss in \Cref{sec:mainresults}, this is because
GTD in the backward GVF evaluation setting has distribution mismatch in its objective function, which can significantly shift the optimal point from the ground truth GVF.

\vspace{-0.2cm}
\section{Conclusion}
We studied the off-policy evaluation problem of both forward and backward GVFs. We focused on the class of GVFs with casual filtering, which covers a wide range of multiple interrelated and possibly high-dimensional GVFs. We first showed that GVFs in such a class is the fixed point of a general Bellman operator. Based on such a property, we proposed a new off-policy algorithm called GenTD. GenTD evaluates GVFs efficiently by jointly updating the GVF approximation parameter and a density ratio estimator, which adjusts the mismatch of the behavior policy and assists the convergence to the ground truth GVFs. We show that GenTD provably converges to the globally optimal point, and such an optimal point is guaranteed to converge to the ground truth GVFs as long as the function expressive power is sufficiently large. For future work, it is interesting to study nonlinear function approximation for GVFs evaluation.

\section{Acknowledgement}
The work of T. Xu and Y. Liang was supported in part by the U.S. National Science Foundation under the grants CCF-1801855, CCF-1761506 and CCF-1900145.

\bibliographystyle{abbrv}
\bibliography{ref}


\newpage

\appendix
\noindent {\Large \textbf{Supplementary Materials}}
\section{Specification of Experiments}\label{sc: app_exp}
The Baird's counterexample \cite{baird1995residual,sutton2018reinforcement} is shown in \Cref{fig: baird}. There are two actions represented by solid line and dash line, respectively. The the {\em dash} action leads to states 1-6 with equal probability and a reward $+1$, and {\em solid} action always leads to state 7 and a reward $0$. The behavior distribution over the state-action space $(s,a)$ is given as
\begin{flalign*}
D(\cdot)=
\begin{cases}
   D(s_1,a_1) = 0.2,\quad &D(s_1,a_2) = 0.1,\\
   D(s_2,a_1) = 0.2,\quad &D(s_2,a_2) = 0.1,\\
   D(s_3,a_1) = 0.04,\quad &D(s_3,a_2) = 0.04,\\
   D(s_4,a_1) = 0.04,\quad &D(s_4,a_2) = 0.04,\\
   D(s_5,a_1) = 0.04,\quad &D(s_5,a_2) = 0.04,\\
   D(s_6,a_1) = 0.04,\quad &D(s_6,a_2) = 0.04,\\
   D(s_7,a_1) = 0.04,\quad &D(s_7,a_2) = 0.04,
\end{cases}
\end{flalign*}
\begin{figure}[ht]
	\vskip 0.2in
	\begin{center}
		\centerline{\includegraphics[width=60mm]{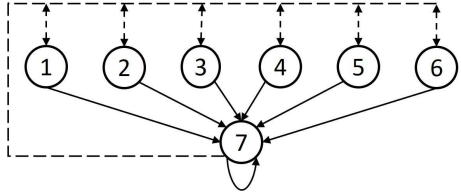}}
		\caption{A variant of Baird's counterexample.}
		\label{fig: baird}
	\end{center}
	\vskip -0.2in
\end{figure}
where $s_i$ denotes state "$i$" ($i=1,\cdots,7$), $a_1$ denotes the {\em dash} action and $a_2$ denotes the {\em solid} action. We consider the soft-max policy given as
\begin{flalign*}
   \pi_w(s_i,a_j) = \frac{\exp(w_{2(i-1)+j})}{\exp(w_{2(i-1) + 1}) + \exp(w_{2(i-1)+2})},\quad i=\{1,\cdots,7\},\,\, j=\{1,2\},
\end{flalign*}
where $w\in\mR^{14}$ is the parameter of the policy given as
\begin{flalign*}
   w^\top = [0.0, 1.8, 0.0, 1.8, 0.0, 1.8, 0.0, 1.8, 0.0, 1.8, 0.0, 1.8, 0.0, 1.8].
\end{flalign*}
The complete feature (CFT), incomplete feature (INCFT) and the learning rate for each task are given as follows:
\begin{itemize}
    \item {\bf Evaluation of $\nabla_w Q_\pi$ (forward GVF).} In this task, we need to evaluate both $Q_\pi$ and $\nabla_w Q_\pi$ (see \Cref{sc: app_forward} for detailed discussion about correlation between $Q_\pi$ and $\nabla_w Q_\pi$). Let the complete feature matrix ${\rm\Phi}$ be the identity matrix, i.e., ${\rm\Phi}=I\in\mR^{14\times 14}$. We let CFT for each $(s,a)$ be one of the rows of ${\rm\Phi}$. We further remove one column of ${\rm\Phi}$ to obtain the incomplete feature matrix ${\rm\Phi}^\prime\in\mR^{14\times 13}$. We let INCFT for each $(s,a)$ be one of the rows of ${\rm\Phi}^\prime$. For GenTD, the learning rate for updating $w_\rho$ and $\theta$ are $0.01$ and $0.005$, respectively. We use the same CFT and INCFT for the density ratio estimation as those for the $\nabla_w Q_\pi$ estimation. For GTD, the learning rate for both the main parameter $\theta$ and the auxiliary parameter $w$ are $0.005$.
    \item {\bf Evaluation of $\nabla_w \log\mu_\pi$ (backward GVF).} \Cref{sc: app_backward} has detailed discussion about such a GVF. Note that this task corresponds to the setting where $\gamma=1$. As discussed in \Cref{sc: extension}, the ground true GVF in this setting is in the space perpendicular to the vector $e=[1,\cdots,1]^\top\in\mR^{14}$. Here we use singular value decomposition (SVD) to obtain the complete feature matrix ${\rm\Phi}\in\mR^{14\times 13}$ such that ${\rm\Phi}\theta\neq Ce$ for all $\theta\neq 0$ and $C\neq 0$. We let CFT for each $(s,a)$ be one of the rows of ${\rm\Phi}$. We further remove one column of ${\rm\Phi}$ to obtain the incomplete feature matrix ${\rm\Phi}^\prime\in\mR^{14\times 12}$. We let INCFT for each $(s,a)$ be one of the rows of ${\rm\Phi}^\prime$. For GenTD, the learning rate for updating $w_\rho$ and $\theta$ are $0.05$ and $0.005$, respectively. We use the same CFT and INCFT for the density ratio estimation as those used in the above $\nabla_w Q_\pi$ estimation task. For GTD, the learning rate for both the main parameter $\theta$ and the auxiliary parameter $w$ are $0.005$.
\end{itemize}

\section{Examples of Forward and Backward GVFs}\label{sc: application}
In this section, we present a number of example forward and backward GVF in RL applications.
\subsection{Examples of Forward GVFs}\label{sc: app_forward}
The forward GVF in \Cref{def:forwardGVF} arises naturally in the following RL applications. 

{\bf Case I: Variance of Reward-To-Go. } In risk-sensitive domains such as finance, process control and clinical decision making \cite{sharpe1966mutual,mannor2013algorithmic,tamar2013variance,tamar2012policy,ruszczynski2010risk,sato2001td,sobel1982variance,jain2021variance}, in addition to the mean $J_\pi$ of the "reward-to-go", we are also interested in the variance of $J_\pi$ \cite{sharpe1966mutual,shortreed2011informing}, which is given by $\text{Var}[J_\pi|(s_0,a_0)=(s,a)]= H_\pi(s,a) - Q^2_\pi(s,a)$, where $H_\pi$ is the second moment of $J_\pi$, i.e., $H_\pi(s,a)=\mE[J_\pi^2|(s_0,a_0)=(s,a),\pi]$. \cite{tamar2016learning} shows that $H_\pi$ satisfies
\begin{flalign}
H_\pi = R^2 + 2\gamma M_R \msP_\pi Q_\pi + \gamma^2 \msP_\pi H_\pi,\label{eq: 32}
\end{flalign}
where $R$ is defined in \Cref{sc: background}, $M_R = \diag(R)\in\mR^{|\mcs||\mca|\times |\mcs||\mca|}$. \Cref{eq: 32} implies that $H_\pi$ is the mean of the accumulation of signal $C(s,a)=r(s,a)^2 + 2\gamma \mE[Q_\pi(s^\prime,a^\prime)|s,a]$ with discounted factor $\gamma^2$. Since $C(s,a)$ is a function of the reward $r(s,a)$ and value function $Q_\pi(s,a)$, we consider the joint vector of $Q_\pi$ and $H_\pi$ as $G_\pi$, i.e., $G_\pi = [Q^\top_\pi, H^\top_\pi]^\top$. We have that $G_\pi$ satisfies the general Bellman equation in \cref{forward_GBE} with $B$ and $M_\pi$ specified as
\begin{flalign}\label{eq: variance}
B = \left[\begin{array}{c}
R\\
R^2
\end{array}
\right],\quad
M_\pi = \left[\begin{array}{cc}
\gamma\msP_\pi & 0 \\
2\gamma M_R\msP_\pi & \gamma^2 \msP_\pi
\end{array}
\right].
\end{flalign}
We consider the setting in which reward is bounded, i.e., $r(s,a)\leq C_R$ for all $(s,a)\in\mcs\times\mca$. 

{\bf Case II: Gradient of Q-function.} Suppose that the policy is parametrized by a smooth function $\pi_w$, in which $w\in\mR^{d_w}$ is the parameter. Then the gradient $\nabla_w Q_{\pi}(s,a)$ of the Q-function w.r.t $w$ plays an important role in several RL applications such as variance reduced policy gradient \cite{huang2020importance} and on- and off-policy policy optimization \cite{xu2021doubly,silver2014deterministic,kallus2020statistically,comanici2018knowledge}. Specifically, \cite{xu2021doubly,kallus2020statistically,comanici2018knowledge} show that $\nabla_wQ_\pi$ satisfies:
\begin{flalign}
\nabla_w Q_{\pi} = \gamma [\msP_\pi\otimes \msI_{d_w}] [\nabla_w{\rm\Pi}_{\pi_w}\cdot Q_{\pi}] + \gamma [\msP_\pi \otimes \msI_{d_w}]\nabla_wQ_{\pi},\label{eq: 33}
\end{flalign}
where $\nabla_w{\rm\Pi}_{\pi_w}\in\mR^{d_w|\mcs||\mca|}$ is obtained by stacking $\nabla_w\log(\pi(s,a))$ over $\mcs\times\mca$, i.e., $[\nabla_w{\rm\Pi}_{\pi_w}](s,a)=\nabla_w\log(\pi_w(s,a))$, and $[\nabla_w{\rm\Pi}_{\pi_w}\cdot Q_{\pi}]\in\mR^{d_w|\mcs||\mca|}$ is  element-wise product between $\nabla_w{\rm\Pi}_{\pi_w}$ and $Q_\pi$, i.e., $[\nabla_w{\rm\Pi}_{\pi_w}\cdot Q_{\pi}](s,a)=\nabla_w\log(\pi_w(s,a))Q_{\pi}(s,a)$. \Cref{eq: 33} implies that $\nabla_wQ_\pi$ is the mean of the accumulation of signal $C(s,a)=\gamma \mE[Q_\pi(s^\prime,a^\prime)\nabla_w\log(\pi_w(s^\prime,a^\prime))|s,a]$ with the discounted factor $\gamma$. Let $G_\pi = [Q^\top_\pi, \nabla_wQ^\top]^\top$. We have that $G_\pi$ satisfies the general Bellman equation in \cref{forward_GBE} with $B$ and $M_\pi$ specified as
\begin{flalign}\label{eq: dq}
B = \left[\begin{array}{c}
R\\
0
\end{array}
\right],\quad
M_\pi = \left[\begin{array}{cc}
\gamma\msP_\pi & 0 \\
\gamma[\msP_\pi\otimes \msI_{d_w}] \diag(\nabla_w{\rpii}_{\pi_w}) & \gamma\msP_\pi\otimes\msI_{d_w}
\end{array}
\right],
\end{flalign}
where $\diag(\nabla_w{\rpii}_{\pi_w}) \in\mR^{d_w|\mcs||\mca|\times |\mcs||\mca|}$ is obtained by arranging $\nabla_w\log(\pi(s,a))\in\mR^{d_w}$ diagonally. Without loss of generality, we assume that the score function is bounded \cite{sutton2018reinforcement,sutton2000policy,xu2020improving}, i.e., $\ltwo{\nabla_w\log(\pi_w(s,a))}\leq C_{\rm\Pi}$ for all $(s,a)\in\mcs\times\mca$. 

{\bf Case III: Stochastic Value Gradient.} The stochastic value gradient (SVG) method combines advantages of model-based and model-free methods, in which both the estimated model and value function are updated to evaluate the policy gradient \cite{heess2015learning}. In the framework of SVG, the reward $r(s,a)$ is differentiable with respect to both $s\in\mR^{d_s}$ an $a\in\mR^{d_a}$, the stochastic policy takes the form $a=\pi(s,w)+\eta$, and the transition probability is modelled as $s^\prime = f(s,a)+\xi$, where $\pi: \mcs\rightarrow \mca$ and $f: \mcs\times\mca\rightarrow \mcs$ are deterministic mappings, $w\in\mR^{d_w}$ is the policy parameter, and $\eta\sim P(\eta)$ and $\xi\sim P(\xi)$ are noise variables. We abbreviate the partial differentiation using subscripts as $g_x\triangleq \partial g/\partial x$. The gradient of the $Q$-function w.r.t the policy parameter $w$ is given by \cite{heess2015learning}
\begin{flalign}
Q_s &= (R_s + {\rm\Pi}_s R_a) + \gamma ({\rm\Pi}_s F_a + F_s)([\msP_\pi\otimes \msI_{d_s}] Q_s),\nonumber\\
Q_a &= R_a + \gamma F_a [\msP_\pi\otimes \msI_{d_s}] Q_s + \gamma [\msP_\pi\otimes \msI_{d_a}]Q_a,\nonumber\\
\nabla_wQ_\pi &= {\rm \Pi}_w Q_a + \gamma [\msP_\pi\otimes \msI_{d_w}]\nabla_wQ_\pi,\label{eq: 41}
\end{flalign}
where $R_s\in\mR^{d_s|\mcs||\mca|}$ and $R_a \in\mR^{d_a|\mcs||\mca|}$ are vectors obtained via stacking partial derivatives $r_s(s,a) \in\mR^{d_s}$ and $r_a(s,a)\in\mR^{d_a}$ over $(s,a)\in\mcs\times\mca$, and ${\rm\Pi}_s=\diag(\pi_s)\in \mR^{d_s|\mcs||\mca|\times d_a|\mcs||\mca|}$, ${\rm\Pi}_w=\diag(\pi_w)\in \mR^{d_w|\mcs||\mca|\times d_a|\mcs||\mca|}$, $F_s=\diag(f_s)\in \mR^{d_s|\mcs||\mca|\times d_s|\mcs||\mca|}$, and $F_a=\diag(f_a)\in \mR^{d_a|\mcs||\mca|\times d_s|\mcs||\mca|}$ are Jacobian matrices. 
Consider GVF defined as $G_\pi=[Q^\top_s, Q^\top_a, \nabla_wQ^\top_\pi]^\top$. Consider the normalized setting in which ${\rm\Pi}_s F_a + F_s=\mathbf{I}$. Then $G_\pi$ satisfies the general Bellman equation in \cref{forward_GBE} with $B$ and $M_\pi$ specified by
\begin{flalign}
B &= \left[\begin{array}{c}
R_s + {\rm\Pi}_s R_a\\
R_a\\
{\rm\Pi}_w Q_a
\end{array}
\right],\quad
M_\pi = \left[\begin{array}{ccc}
\gamma[\msP_\pi\otimes \msI_s] & 0 & 0  \\
\gamma F_a [\msP_\pi\otimes \msI_s] & \gamma[\msP_\pi\otimes \msI_a] &  0 \\
0 & {\rm\Pi}_w & \gamma [\msP_\pi\otimes \msI_{d_w}]
\end{array}
\right].\label{eq: svg}
\end{flalign}
We consider the setting in which $\lF{f_a(s,a)}\leq C_a$ and $\lF{\pi_w(s,a)}\leq C_w$ for all $(s,a)\in\mcs\times\mca$.

\subsection{Examples of Backward GVFs}\label{sc: app_backward}
The backward GVF in \Cref{def:backwardGVF} also arises in the following important RL applications.

{\bf Case IV: Anomaly Detection.} \cite{zhang2020learning} has systemically discussed the application of retrospective knowledge in anomaly detection. Let $i(s,a)$ be the cost that an agent consumes when taking action $a$ at state $s$, and $e_\pi(s,a)$ be the cost that an agent is expected to consume given the current status when following a predefined policy $\pi$. If the actual cost of the agent deviates too much from $e_\pi$, the agent may likely encounter anomalous events. For simplicity, we consider the setting when $\gamma(s,a)=\gamma$. It can be shown that $e_\pi$ satisfies the following equation
\begin{flalign}
e_\pi =   i +  \gamma U^{-1}_\pi \msP^\top_\pi U_\pi e_\pi.\label{eq: 34}
\end{flalign}
Clearly, \cref{eq: 34} satisfies the general backward Bellman equation in \cref{backward_GBE} by letting $B=i$, $M_\pi = \gamma\msP_\pi$, and $\hat{G}_\pi = e_\pi$. 

{\bf Case V: Gradient of Logarithmic Stationary Distributions.} In the policy parameterization setting, the gradient of logarithmic stationary distribution $\nabla_w \log\mu_{\pi}(s,a)$ has been used in policy gradient estimation \cite{kallus2020statistically,xu2021doubly,morimura2010derivatives} and maximum entropy exploration \cite{hazan2019provably}. It has been shown in \cite{morimura2010derivatives,xu2021doubly} that $\nabla_w \log\mu_{\pi}(s,a)$ satisfies the following equation
\begin{flalign}
{\rm\Psi}_\pi = \nabla_w{\rm\Pi}_{\pi_w} + U^{-1}_\pi[\msP_\pi^\top \otimes \msI_{d_w}] U_\pi{\rm\Psi}_\pi , \label{eq: 35}
\end{flalign}
where ${\rm\Psi}_\pi $ is obtained via stacking $\nabla_w\log(\mu_{\pi}(s,a))$ over $\mcs\times\mca$, i.e., $[{\rm\Psi}_\pi](s,a) = \nabla_w\log(\mu_{\pi_w}(s,a))$. Here, $\nabla_w\log\mu_\pi$ can be viewed as a backward accumulation of the signal $C(s,a)=\nabla_w \log(\pi(s,a))$ with the discounted factor $\gamma=1$. Define the backward GVF as $\hat{G}_\pi={\rm\Psi}_\pi$. It is clear that $\hat{G}_\pi$ satisfies the general backward Bellman equation in \cref{backward_GBE} with $B$ and $M_\pi$ specified by
\begin{flalign}\label{eq: 46}
B = \nabla_w{\rm\Pi}_{\pi_w},\qquad M_\pi = \msP_\pi^\top \otimes \msI_{d_w}.
\end{flalign}
Note that since $\gamma_{\max}=1$ in the general Bellman equation in \cref{eq: 35}, the result in \Cref{cond1} may not hold in such a setting, i.e., GBO may not be a contraction here. However, as we will show in \cref{sc: extension}, when the base matrix ${\rm\Phi}$ satisfies the "non-constant parameterization" assumption, we can establish results similar to \Cref{cond2} and \Cref{thm2} for the evaluation of $\nabla_w\log\mu_\pi$.

\section{Gradient Temporal Difference Learning (GTD)}\label{sc: GTD}
The GTD algorithm has been used for GVF evaluation in \cite{sutton2011horde,silver2013gradient}. So far, only the asymptomatic convergence (not the convergence rate) has been studied in \cite{silver2013gradient}.
In this section, we present the GTD algorithm for GVF evaluation and characterize the finite-time convergence rate for GTD. We define the dimension of parameter $\theta$ as $d_g = \sum_{i=1}^k K_id_i$. In the sequel, we denote the MSPBE with parameter $\theta$ as $J(\theta)$.
Note that the MSPBE in \cref{eq: globalGTD} can be rewritten as
\begin{flalign}
J(\theta) & = \frac{1}{2}\lcin{\mE_D[g(x,\theta)]}^2,\,\, C\triangleq \begin{cases} \mE_D[(\phi(s,a)^\top\phi(s,a))]  \quad &\text{(forward GVF)}\\ \textstyle{\mE_D[(\phi(s^\prime,a^\prime)^\top\phi(s^\prime,a^\prime))]} & \text{(backward GVF)} \end{cases}\label{eq: 21},
\end{flalign}
where
\begin{flalign*}
    g(x,\theta)\triangleq \begin{cases}
    -\phi(s,a)^\top(B(s,a)+m(x)\phi(s^\prime,a^\prime)\theta - \phi(s,a)\theta) &\text{(forward GVF)}\nonumber\\
    -\phi(s^\prime,a^\prime)^\top({B}(s^\prime,a^\prime)+\hat{m}(x)\phi(s,a)\theta -\phi(s^\prime,a^\prime)\theta) &\text{(backward GVF)}\end{cases},
\end{flalign*}
where the matrices $m(\cdot)$ and $\hat{m}(\cdot)$ are defined in \cref{eq: 49}.
The gradient of $J(\theta)$ is given as
\begin{flalign*}
-\nabla J(\theta) = \mE_D[g(x,\theta) - h(x,\theta)],\,\,h(x,\theta)\triangleq \begin{cases} \phi(s^\prime,a^\prime)^\top m(x)\phi(s,a)w(\theta) &\text{(forward GVF)}\\ \textstyle{\phi(s,a)^\top \hat{m}(x) \phi(s^\prime,a^\prime)w(\theta)} & \text{(backward GVF)} \end{cases}
\end{flalign*}
in which $w(\theta) = C^{-1} \mE_D[g(x,\theta)]\in\mR^{d_g}$. Note that we can not estimate $\nabla J(\theta)$ directly due to the "double sampling" issue, i.e., $w(\theta)$ cannot be estimated via sampling. In GTD, an auxiliary parameter $w_t$ is introduced, which is updated simultaneously with $\theta_t$ to approximate $w(\theta_t)$ \cite{sutton2009fast,maei2011gradient}. We present the update of GTD in \Cref{algorithm_gtd}.

\begin{algorithm}[tb]
	\caption{GTD}
	\label{algorithm_gtd}
	\begin{algorithmic}
		\STATE {\bfseries Initialize:} Approximator parameters $w_0$, and $\theta_0$
		\FOR{$t=0,\cdots,T-1$}
		\STATE Obtain sample $(s_t,a_t,B_t,s^\prime_t)\sim \mcd$ and $a^\prime_t\sim \pi(\cdot|s^\prime_t)$
		\STATE $w_{t+1} = w_t - \beta_t( g(x_t,\theta_t) -  l(x_t,w_t)  )$
		\STATE $\qquad${\bf forward GVF:} $l(x_t,w_t) = \phi(s_t,a_t)^\top \phi(s_t,a_t)w_t$
		\STATE $\qquad${\bf backward GVF:} $l(x_t,w_t) = \phi(s^\prime_t,a^\prime_t)^\top \phi(s^\prime_t,a^\prime_t)w_t$
		\STATE $\theta_{t+1} = {\rm\Gamma}_{R_\theta}\left(\theta_t + \alpha_t (g(x_t,\theta_t) - h(x_t,\theta_t)) \right)$
		\STATE $\qquad${\bf forward GVF:} $ h(x_t,w_t) = \phi(s^\prime_t,a^\prime_t)^\top m(x_t) \phi(s_t,a_t)w_t $
		\STATE $\qquad${\bf backward GVF:} $ h(x_t,w_t) = \phi(s_t,a_t)^\top \hat{m}(x_t) \phi(s^\prime_t,a^\prime_t)w_t $
		\ENDFOR
	\end{algorithmic}
\end{algorithm}

\subsection{Convergence Rate of GTD}\label{sc: gtd}
We make the following assumptions, which have also been adopted in the convergence analysis of GTD in the canonical value function evaluation setting \cite{xu2019two,xu2020non,sutton2009fast,maei2011gradient}.
\begin{assumption}\label{ass4}
	In both forward and backward GVF evaluation settings, the matrix $C$ in \cref{eq: 21} is non-singular.
\end{assumption}
\begin{assumption}\label{ass5}
	We define the matrix $\hat{A}$ in the following way: (1) in the forward GVF evaluation setting: $\hat{A}=\mE_{\mcd\cdot\pi}[\phi(s,a)^\top(m(x)\phi(s^\prime,a^\prime) - \phi(s,a))]$; (2) in the backward GVF evaluation setting, $\hat{A}=\mE_{\mcd\cdot\pi}[\phi(s^\prime,a^\prime)(\hat{m}(x)\phi(s,a) - \phi(s^\prime,a^\prime))]$. We require $\hat{A}$ to be non-singular in both the forward and backward GVF settings.
\end{assumption}

We define the optimal point $\bar{\theta}^*$ for GTD as 
\begin{flalign*}
\langle \nabla J(\bar{\theta}^*), \theta - \bar{\theta}^* \rangle \geq 0,\quad \forall\theta\in R_\theta,
\end{flalign*}
which is the optimality condition for minimizing $J(\theta)$.
The following theorem characterizes the convergence rate of GTD to $\bar{\theta}^*$.
\begin{theorem}\label{thm2}
	Consider the GTD update in \Cref{algorithm_gtd}. In both the forward and backward GVF evaluation settings, suppose \Cref{ass4}-\ref{ass5} hold. Let the stepsize $\alpha_t = {\rm \Theta}(t^{-1})$ and $\beta_t = {\rm \Theta}(t^{-1})$. We have
	\begin{flalign*}
	\mE\left[\ltwo{{\theta}_{T}-\bar{\theta}^*}^2\right]\leq \mathcal{O}\left(\frac{\ltwo{\theta_0-\bar{\theta}^*}^2}{T^2}\right) + \mathcal{O}\left(\frac{1}{\lambda_G^{\prime2} T}\right),
	\end{flalign*}
	where $\lambda^\prime_G>0$ is the conditional number of GTD defined in \cref{eq: cond_gtd} of \Cref{sc: GTD_proof}.
\end{theorem}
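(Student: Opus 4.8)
The plan is to exploit that, in sharp contrast to GenTD, the GTD scheme in \Cref{algorithm_gtd} is a \emph{linear} stochastic approximation: the generalized TD error $\delta(x,\theta)$ is affine in $\theta$, hence $g(x,\theta)$ and $h(x,\theta)$ are affine in $\theta$, $l(x,w)$ is linear in $w$, and the tracking target $w(\theta)=C^{-1}\mE_D[g(x,\theta)]$ is affine in $\theta$. First I would stack $y_t=(w_t^\top,\theta_t^\top)^\top$ and write one GTD step as $y_{t+1}=\Gamma(y_t-\alpha_t(G'(y_t-y^*)+\zeta_t))$, where $\Gamma$ projects only the $\theta$-block onto $R_\theta$, $y^*=(w(\bar\theta^*)^\top,\bar\theta^{*\top})^\top$ is the (projected) equilibrium characterized by the variational inequality defining $\bar\theta^*$, $\zeta_t$ is a martingale-difference noise (the samples are drawn i.i.d.\ from $\mcd$), and $G'$ is a fixed matrix whose $w$-rows carry $C$ and $\hat{A}$ (from $l$ and $g$) and whose $\theta$-rows carry the coefficients coming from $g$ and from the $h$-term; the $w$-rows are scaled by $\beta_t$ and the $\theta$-rows by $\alpha_t$, and since $\beta_t/\alpha_t=\Theta(1)$ this is a single coupled recursion rather than a genuine two-timescale one. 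I would record this reformulation precisely in both the forward and backward cases.

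The algebraic heart of the argument is to show that $G'$ is \emph{coercive} in a suitably weighted Euclidean norm: there is a block-diagonal positive definite matrix $P$ (weighting the $\theta$-block by a small constant relative to the $w$-block) and a constant $\lambda'_G>0$ — this is the conditional number in the statement, made explicit in \eqref{eq: cond_gtd} of \Cref{sc: GTD_proof} — such that $\langle G'y,\,y\rangle_P\ge\lambda'_G\|y\|_P^2$ for all $y$. The difficulty is that $G'$ is \emph{not} symmetric: under \Cref{ass4,ass5}, $C\succ0$ while the coupling between the $w$- and $\theta$-blocks has the indefinite, damped-oscillator structure of a $\begin{pmatrix} C & -\hat{A}\\ \hat{A}^\top & 0\end{pmatrix}$-type matrix, which is Hurwitz but only with respect to a nontrivial inner product. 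I would establish coercivity by a Lyapunov-matrix argument: a Schur-complement computation using $C\succ0$ and $\hat A$ nonsingular shows every eigenvalue of $G'$ has strictly positive real part, so a compatible $P$ exists, and one then tracks how $\lambda'_G$ degrades with $\lambda_{\min}(C)$, $\sigma_{\min}(\hat A)$ and the timescale ratio. This choice of weighting, which must dominate the indefinite cross terms, is the main obstacle; everything downstream is routine.

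With coercivity in hand I would (i) show $w_t$ stays bounded in expectation: since $\theta_t\in R_\theta$ is bounded, $w(\theta_t)$ is bounded, and the tracking error $z_t=w_t-w(\theta_t)$ contracts on average by \Cref{ass4} and the $\beta_t=\Theta(1/t)$ schedule, so $\ltwo{w_t}$ is uniformly bounded; (ii) bound the conditional second moment of the stochastic update by a constant $\sigma^2$ using $\ltwo{\phi_i(s,a)}\le1$, boundedness of the signals $B_i$ and of $m,\hat m$, and compactness of $R_\theta$; (iii) combine these with non-expansiveness of $\Gamma_{R_\theta}$ and the variational-inequality characterization of $\bar\theta^*$ to derive the one-step inequality $\mE[\|y_{t+1}-y^*\|_P^2]\le(1-2\lambda'_G\alpha_t+c_1\alpha_t^2)\,\mE[\|y_t-y^*\|_P^2]+c_2\alpha_t^2\sigma^2$. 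Feeding $\alpha_t=\Theta(1/t)$ (with the step-size constant taken to be at least a fixed multiple of $1/\lambda'_G$, so that the effective decay exponent exceeds $2$) into the standard stochastic-approximation recursion lemma yields $\mE[\|y_T-y^*\|_P^2]\le\mathcal O(\|y_0-y^*\|_P^2/T^2)+\mathcal O(\sigma^2/(\lambda_G'^2T))$; finally, norm equivalence together with $\|y_0-y^*\|_P^2=\mathcal O(\ltwo{\theta_0-\bar\theta^*}^2)+\mathcal O(1)$, the $T$-independent $w_0$-contribution being absorbed into the $\mathcal O(1/T^2)$ term (or removed by initializing $w_0=0$ and bounding $\ltwo{w(\bar\theta^*)}$), gives the claimed bound.
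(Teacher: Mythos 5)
Your proposal follows essentially the same route as the paper: rewrite GTD as a single-timescale linear stochastic approximation in the stacked parameter $(\theta,w)$ with the step-size ratio absorbed as a constant scaling of one block, establish strong monotonicity of the joint drift matrix under Assumptions \ref{ass4}--\ref{ass5} (your block-weighted coercivity constant is exactly the paper's $\lambda'_G$ in \cref{eq: cond_gtd}, obtained there by choosing the scaling $\xi$ so the symmetric part is negative definite), and then run the standard projected-SA recursion with $\Theta(1/t)$ steps to get the $\mathcal O(1/T^2)+\mathcal O(1/(\lambda_G'^2 T))$ bound and drop to the $\theta$-block. The only cosmetic difference is that you seek a constant variance bound via boundedness of $w_t$, whereas the paper uses a state-dependent variance bound absorbed into the recursion coefficients; both close the argument the same way.
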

\Cref{thm1} shows that GTD converges to the globally optimal point $\bar{\theta}^*$ at a rate of $\mathcal{O}(1/T)$. The convergence speed of $\theta_t$ depends on the conditional number $\lambda^\prime_G$, which decreases as $\lambda^\prime_G$ decreases. Differently from the conditional number $\lambda_G$ of GenTD, which has a guaranteed lower bound from zero as given in \Cref{cond2}, there exists no guaranteed lower bound for $\lambda^\prime_G$ even in the canonical value function evaluation setting. Thus, the converge speed of GTD could be very slow as $\lambda^\prime_G$ could be arbitrarily small.

\subsection{Global Optimum of GTD and Proof of \Cref{ex1}}\label{sc: quality_gtd}

For simplicity, we consider scenarios when the function approximation class $\mcf_{\rm\Phi}$ is complete. We show that the global optimum of GTD exhibits very different properties in the forward and backward GVF evaluation settings.

We first show that in the forward GVF evaluation setting, the global optimum ${\rm\Phi}\bar{\theta}^*$ of GTD equals the ground truth GVF. Since the function space $\mcf_{\rm\Phi}$ is complete, there exists a parameter $\theta_{\text{true}}\in R_\theta$ such that ${\rm\Phi}\theta_{\text{true}} = G_\pi$, which implies $J(\theta_{\text{true}}) = 0$. Since $J(\theta)\geq 0$ for all $\theta\in R_\theta$ and $J(\theta)$ is strongly-convex, $J(\theta)=0$ if and only if $\theta=\bar{\theta}^*$, which implies $\theta_{\text{true}}=\bar{\theta}^*$.


In the backward GVFs evaluation setting, we provide an example (see \Cref{ex1} in \Cref{sec:mainresults}) to show that GTD can fail to learn the ground truth $G_\pi$ even if the function class $\mcf_{\rm\Phi}$ is complete. We next present the proof for such an example.
\begin{proof}[{\bf Proof of \Cref{ex1}}]
    The backward value function can be obtained as follows
\begin{flalign*}
	\bar{V} = U^{-1}(I - \gamma \msP^\top)^{-1} \msP^\top U R = [8.1555, 9.0389, 9.0184]^\top.
\end{flalign*}
The fixed point of GTD is given by
\begin{flalign*}
	\bar{\theta}^* = \bar{A}^{-1}\bar{b},
\end{flalign*}
where
\begin{flalign}
	\bar{A} = \gamma {\rm\Phi}^\top \msP^\top \bar{D}^\prime {\rm\Phi} - {\rm\Phi}^\top \bar{D}^\prime {\rm\Phi},\qquad \bar{b} = {\rm\Phi}^\top\msP^\top \bar{D} R,\label{eq: 30}
\end{flalign}
where $\bar{D}=\text{diag}(D)$, $\bar{D}^\prime = \text{diag}(D^\prime)$ and $D^{\prime\top} = D^\top P$. Also note that the base matrix ${\rm\Phi} = [8.1555, 9.0389, 9.0184]^\top$ and the off-policy sampling distribution $D = [1/3,1/3, 1/3]^\top$. We can obtain
\begin{flalign*}
	\bar{A} = -9.9422,\qquad \bar{b} = 5.9904,
\end{flalign*}
which implies
\begin{flalign}
	\bar{\theta}^* = \frac{-b}{A} = 0.6025. \label{eq: 29}
\end{flalign}
Note that the perfect base matrix $[8.1555, 9.0389, 9.0184]^\top$ can fully represent $\bar{V}_\pi$, with parameter $\theta_{\text{true}} = 1$. However, \cref{eq: 29} shows that the global optimum of GTD $\bar{\theta}^*\neq \theta_{\text{true}}$, which introduces a non-zero approximation error:
\begin{flalign*}
	\lsd{{\rm\Phi}\bar{\theta}^*-\bar{V}} = 3.7848.
\end{flalign*}
\end{proof}

The above example demonstrates a drawback of GTD, which can fail to learn the ground truth $G_\pi$ even if the function class $\mcf_{\rm\Phi}$ is complete. Such an issue does not occur for the GenTD algorithm proposed in this paper, which converges to the ground truth as guaranteed by \Cref{thm3} in \Cref{sec:mainresults}.

\section{Proofs of Propositions \ref{cond1} and \ref{cond2}}

\subsection{Supporting Lemmas}

We provide the following lemmas, which are useful for the proofs of Propositions \ref{cond1} and \ref{cond2}.
\begin{lemma}\label{lemma13}
	For any $v\in\mR^{d|\mcs||\mca|}$,we have $\lmu{[\msP_\pi\otimes\msI_d]v}\leq \lmu{v}$.
\end{lemma}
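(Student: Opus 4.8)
The plan is to reduce the claim to the classical fact that a Markov transition operator is nonexpansive in the weighted $\ell_2$-norm induced by its own stationary distribution, and to observe that the factor $\msI_d$ in $\msP_\pi\otimes\msI_d$ merely replicates that scalar argument across the $d$ coordinates.

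First I would unfold both norms using the paper's stacking convention. Writing $v$ as the stacking of blocks $v(s,a)\in\mR^d$ over $(s,a)\in\mcs\times\mca$, the weight matrix $U_\pi=\diag(\mu_\pi)\otimes\msI_d$ gives $\lmu{v}^2=\sum_{(s,a)}\mu_\pi(s,a)\ltwo{v(s,a)}^2$. Moreover the $(s,a)$-block of $[\msP_\pi\otimes\msI_d]v$ is exactly $\sum_{(s',a')}\msP_\pi((s,a),(s',a'))\,v(s',a')$, i.e.\ a \emph{convex combination} of the blocks $\{v(s',a')\}$ with weights $\msP_\pi((s,a),\cdot)$; these weights form a probability distribution because $\msP_\pi((s,a),(s',a'))=\msP(s'\mid s,a)\pi(a'\mid s')$ is row-stochastic.

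Next I would apply Jensen's inequality to the convex function $x\mapsto\ltwo{x}^2$ on $\mR^d$, obtaining $\ltwo{([\msP_\pi\otimes\msI_d]v)(s,a)}^2\le\sum_{(s',a')}\msP_\pi((s,a),(s',a'))\ltwo{v(s',a')}^2$ for every $(s,a)$. Summing this over $(s,a)$ with weights $\mu_\pi(s,a)$, interchanging the two finite sums, and invoking the stationarity identity $\mu_\pi^\top\msP_\pi=\mu_\pi^\top$ in the form $\sum_{(s,a)}\mu_\pi(s,a)\msP_\pi((s,a),(s',a'))=\mu_\pi(s',a')$ collapses the right-hand side to $\sum_{(s',a')}\mu_\pi(s',a')\ltwo{v(s',a')}^2=\lmu{v}^2$, which is precisely the asserted inequality.

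There is no real difficulty here; the only point that needs care is the Kronecker bookkeeping—confirming that, under the paper's convention for stacking $v(s,a)\in\mR^d$ over $\mcs\times\mca$, the operator $\msP_\pi\otimes\msI_d$ acts as the transition kernel on the $(s,a)$ index while leaving the $\mR^d$ index untouched (so each output block is a genuine $\msP_\pi((s,a),\cdot)$-average of input blocks), and correspondingly that $U_\pi$ weights the $(s,a)$-block by the scalar $\mu_\pi(s,a)$. Once this is pinned down, the remainder is the standard Jensen-plus-stationarity computation.
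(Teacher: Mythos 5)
Your proposal is correct and follows essentially the same argument as the paper's proof: expand $\lmu{[\msP_\pi\otimes\msI_d]v}^2$ blockwise, apply Jensen's inequality to $\ltwo{\cdot}^2$ using the row-stochasticity of $\msP_\pi$, and then collapse the double sum via the stationarity identity $\mu_\pi^\top\msP_\pi=\mu_\pi^\top$. No gaps; the Kronecker bookkeeping you flag is exactly the implicit step the paper also relies on.
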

\begin{proof}
	Consider the square of $\lmu{[\msP_\pi\otimes\msI_d]v}$. We have
	\begin{flalign*}
	\lmu{[\msP_\pi\otimes\msI_d]v}^2 &= v^\top [\msP^\top_\pi\otimes\msI_d]U_\pi [\msP_\pi\otimes\msI_d]v\nonumber\\
	&= \sum_{i=1}^{|\mcs||\mca|}\mu_\pi(i)  \ltwo{\sum_{j=1}^{|\mcs||\mca|} \msP_\pi(j|i) v_j }^2\nonumber\\
	&\leq \sum_{i=1}^{|\mcs||\mca|}\mu_\pi(i)  \sum_{j=1}^{|\mcs||\mca|} \msP_\pi(j|i) \ltwo{v_j}^2\nonumber\\
	&= \sum_{i=1}^{|\mcs||\mca|} \sum_{j=1}^{|\mcs||\mca|} \mu_\pi(i) \msP_\pi(j|i) \ltwo{v_j}^2\nonumber\\
	&= \sum_{j=1}^{|\mcs||\mca|} \mu_\pi(i) \ltwo{v_j}^2\nonumber\\
	& = \lmu{v}^2,
	\end{flalign*}
	where the first inequality follows from Jensen's inequality and the fourth equality follows from the property of the stationary distribution $\mu^\top_\pi\msP_\pi = \mu^\top_\pi$.
\end{proof}

We provide the follow lemma to characterize a similar property in backward GVF evaluation setting.
\begin{lemma}\label{lemma14}
	For any $v\in\mR^{d|\mcs||\mca|}$,we have $\lmu{U^{-1}_\pi [\msP^\top_\pi\otimes\msI_d] U_\pi v}\leq \lmu{v}$.
\end{lemma}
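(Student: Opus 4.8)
The plan is to mimic the proof of Lemma~\ref{lemma13}, but using the \emph{reverse} transition kernel that appears in the backward Bellman operator rather than the forward kernel. First I would expand the square of the left-hand side:
\begin{flalign*}
\lmu{U^{-1}_\pi [\msP^\top_\pi\otimes\msI_d] U_\pi v}^2 = \big(U^{-1}_\pi [\msP^\top_\pi\otimes\msI_d] U_\pi v\big)^\top U_\pi \big(U^{-1}_\pi [\msP^\top_\pi\otimes\msI_d] U_\pi v\big).
\end{flalign*}
Recalling that $U_\pi = \diag(\mu_\pi)\otimes\msI_d$ is symmetric and positive definite, this simplifies to $v^\top U_\pi [\msP_\pi\otimes\msI_d] U^{-1}_\pi [\msP^\top_\pi\otimes\msI_d] U_\pi v$. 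The key object is the matrix $\breve{\msP}_\pi := U_\pi [\msP_\pi\otimes\msI_d] U^{-1}_\pi$ (equivalently its blockwise scalar version $\breve{\msP}_\pi(j\mid i) = \mu_\pi(i)\msP_\pi(i\mid j)/\mu_\pi(j)$), which by Bayes' rule — exactly the identity in \cref{eq: backward_prob} — is itself a stochastic matrix (row sums equal one, using $\mu^\top_\pi\msP_\pi=\mu^\top_\pi$). Writing the quadratic form in terms of $\breve{\msP}_\pi$, one gets a weighted sum over states $i$ of $\mu_\pi(i)\,\ltwo{\sum_j \breve{\msP}_\pi(j\mid i) v_j}^2$.

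From here the argument is identical in spirit to Lemma~\ref{lemma13}: apply Jensen's inequality (using that $\breve{\msP}_\pi(\cdot\mid i)$ is a probability distribution) to pull the norm inside, obtaining $\sum_i \mu_\pi(i)\sum_j \breve{\msP}_\pi(j\mid i)\ltwo{v_j}^2 = \sum_j \big(\sum_i \mu_\pi(i)\breve{\msP}_\pi(j\mid i)\big)\ltwo{v_j}^2$, and then check that the bracketed coefficient equals $\mu_\pi(j)$. This last step is where the stationarity of $\mu_\pi$ re-enters: $\sum_i \mu_\pi(i)\breve{\msP}_\pi(j\mid i) = \sum_i \mu_\pi(j)\msP_\pi(i\mid j) = \mu_\pi(j)\sum_i\msP_\pi(i\mid j) = \mu_\pi(j)$. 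Hence the whole expression collapses to $\sum_j \mu_\pi(j)\ltwo{v_j}^2 = \lmu{v}^2$, which is the claim.

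The only real subtlety — and the step I would be most careful about — is bookkeeping the Kronecker structure and the direction of conditioning: one must verify that $U_\pi[\msP_\pi\otimes\msI_d]U^{-1}_\pi$ really is row-stochastic (as opposed to column-stochastic), since it is the \emph{time-reversed} kernel, and that the $d$-dimensional blocks are carried along correctly so that Jensen's inequality is applied to the Euclidean norm $\ltwo{\cdot}$ of each block $v_j\in\mR^d$. Once the reversed kernel is correctly identified as stochastic via \cref{eq: backward_prob}, the computation is a routine transcription of the Lemma~\ref{lemma13} chain of (in)equalities. No new ideas beyond Jensen and stationarity are needed.
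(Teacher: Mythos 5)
Your proposal is correct and takes essentially the same route as the paper's proof: expand the $\mu_\pi$-weighted quadratic form, recognize the blocks of $U_\pi^{-1}[\msP_\pi^\top\otimes\msI_d]U_\pi$ as the Bayes reverse kernel of \cref{eq: backward_prob}, apply Jensen's inequality blockwise to $\ltwo{\cdot}$, and collapse the resulting coefficients to $\mu_\pi$ using stationarity and the row-stochasticity of $\msP_\pi$. The only slip is the displayed kernel formula $\breve{\msP}_\pi(j\mid i)=\mu_\pi(i)\msP_\pi(i\mid j)/\mu_\pi(j)$, which has the stationary weights swapped (it should be $\mu_\pi(j)\msP_\pi(i\mid j)/\mu_\pi(i)$, and stationarity is what makes this a probability distribution for the Jensen step, while plain row-stochasticity of $\msP_\pi$ gives the final collapse); since your concluding computation already uses the correct kernel, the argument goes through unchanged.
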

\begin{proof}
	Consider the square of $\lmu{U^{-1}_\pi [\msP^\top_\pi\otimes\msI_d] U_\pi v}$. We have
	\begin{flalign*}
	\lmu{U^{-1}_\pi [\msP_\pi\otimes\msI_d] U_\pi v}^2 &= v^\top [U^{-1}_\pi [\msP^\top_\pi\otimes\msI_d] U_\pi]^\top U_\pi [U^{-1}_\pi [\msP^\top_\pi\otimes\msI_d] U_\pi]v\nonumber\\
	&=\sum_{j=1}^{|\mcs||\mca|}\mu_{\pi}(j)\ltwo{\sum_{i=1}^{|\mcs||\mca|}\frac{\mu_{\pi}(i)\msP_\pi(j|i)}{\mu_{\pi}(j)}v(i)}^2\nonumber\\
	&\leq \sum_{j=1}^{|\mcs||\mca|}\mu_{\pi}(j)\sum_{i=1}^{|\mcs||\mca|}\frac{\mu_{\pi}(i)\msP_\pi(j|i)}{\mu_{\pi}(j)}\ltwo{v(i)}^2\nonumber\\
	&= \sum_{j=1}^{|\mcs||\mca|}\mu_{\pi}(i)\ltwo{v(i)}^2\sum_{i=1}^{|\mcs||\mca|}\msP_\pi(j|i)\nonumber\\
	&= \sum_{j=1}^{|\mcs||\mca|}\mu_{\pi}(i)\ltwo{v(i)}^2\nonumber\\
	&= \lmu{v}^2,
	\end{flalign*}
	where the first inequality follows from the Jensen's inequality.
\end{proof}

\subsection{Proof of \Cref{cond1}}
We first consider the {\bf forward GBO setting}. Recall the following definition of GBO $\mct_{G,\pi}$ in \cref{forward_GBE}
\begin{flalign*}
G_\pi = \mct_{G,\pi} G_\pi = B + M_\pi G_\pi,
\end{flalign*}
where
\begin{flalign*}
B = \left[\begin{array}{c}
B_1\\
B_2\\
\vdots\\
B_k
\end{array}
\right],\quad
M_\pi = \left[\begin{array}{cccc}
\gamma_1[\msP_\pi\otimes\msI_{d_1}] & 0 & \cdots  & 0 \\
A_{2,1} & \gamma_2 [\msP_\pi\otimes\msI_{d_2}] & \cdots  & 0\\
\vdots & \vdots & & \vdots \\
A_{k,1} & A_{k,2} & \cdots & \gamma_k [\msP_\pi\otimes\msI_{d_k}] 
\end{array}
\right].
\end{flalign*}
Let ${G}^\prime_{\pi}, {G}^{\prime\prime}_{\pi} \in\mR^{|\mcs||\mca|\sum_{i=1}^{k}K_id_i }$ be two vectors, and let $\Delta_G = [\Delta_1, \cdots, \Delta_k]$, where $\Delta_i=G^\prime_{\pi,i} - G^{\prime\prime}_{\pi,i}$. We have
\begin{flalign}
\mct_\pi G_\pi^\prime - \mct_\pi G^{\prime\prime}_\pi =  M_\pi \Delta_G = \left[\begin{array}{c}
\gamma_1[\msP_\pi\otimes\msI_{d_1}] \Delta_1 \\
A_{2,1}\Delta_1 + \gamma_2 [\msP_\pi\otimes\msI_{d_2}] \Delta_2\\
\vdots\\
\sum_{j=1}^{k-1} A_{k,j}\Delta_j +  \gamma_k [\msP_\pi\otimes\msI_{d_k}] \Delta_k
\end{array}
\right].\label{eq: 58}
\end{flalign}

Recall that $A_{i,j}$ is bounded for all $i,j$. Thus, there exists a constant $0<C_A<\infty$ such that $\lmu{A_{i,j}}\leq C_A$ for all $i,j$. Without loss of generality, we assume $C_A>1$. Let $\alpha$ be the solution of the following matrix function
\begin{flalign}
	Fx = f,\label{eq: 56}
\end{flalign}
where $F\in\mR^{k\times k}$ and $f\in\mR^k$ are specified as
\begin{flalign*}
	F = \left[\begin{array}{cccccc}
	-\frac{1-\gamma}{2} & C_A & C_A & \cdots  & C_A & C_A \\
	0 & -\frac{1-\gamma}{2} & C_A  & \cdots & C_A & C_A \\
	\vdots & \vdots & & \vdots \\
	0 & 0 & 0 & \cdots & -\frac{1-\gamma}{2} & C_A \\
	1 & 1 & 1 & \cdots & 1 & 1
	\end{array}
	\right],\qquad
	f = \left[\begin{array}{c}
	0\\
	0\\
	\vdots\\
	0\\
	1
	\end{array}
	\right].
\end{flalign*}
It can be checked that the solution of \cref{eq: 56} is strictly positive, i.e., if $F\alpha=f$, then we have $\alpha_l>0$ for $1\leq l \leq k$.
Recalling the definition of $\lalpha{\cdot}$-- norm, we have
\begin{flalign}
&\lalpha{M_\pi \Delta_G}\nonumber\\
&=\gamma_1\alpha_1\lsmu{[\msP_\pi\otimes\msI_{d_1}] \Delta_1} + \alpha_2 \lsmu{A_{2,1}\Delta_1 + \gamma_2 [\msP_\pi\otimes\msI_{d_2}] \Delta_2} \nonumber\\
&\quad + \cdots + \alpha_k\lsmu{\sum_{j=1}^{k-1} A_{k,j}\Delta_j +  \gamma_k [\msP_\pi\otimes\msI_{d_k}] \Delta_k}\nonumber\\
&\leq \gamma_1\alpha_1\lsmu{[\msP_\pi\otimes\msI_{d_1}] \Delta_1}  + \alpha_2\lsmu{A_{2,1}\Delta_1} + \cdots + \alpha_k \lsmu{A_{k,1}\Delta_1}\nonumber\\
&\quad + \gamma_2\alpha_2\lsmu{[\msP_\pi\otimes\msI_{d_2}] \Delta_2}  + \alpha_3\lsmu{A_{3,2}\Delta_2} + \cdots + \alpha_k \lsmu{A_{k,2}\Delta_2}\nonumber\\
&\quad + \cdots\nonumber\\
&\quad + \gamma_k \alpha_k \lsmu{[\msP_\pi\otimes\msI_{d_k}] \Delta_k}\nonumber\\
&\leq \left(\gamma\alpha_1 + C_A\sum_{i=2}^{k} \alpha_i\right)\lsmu{\Delta_1} + \left(\gamma\alpha_2 + C_A\sum_{i=3}^{k} \alpha_i\right)\lsmu{\Delta_2} + \cdots + \gamma\alpha_k \lsmu{\Delta_k}\nonumber\\
&\leq \frac{1+\gamma}{2}\alpha_1\lsmu{\Delta_1} + \frac{1+\gamma}{2}\alpha_2\lsmu{\Delta_2} + \cdots + \gamma\alpha_k \lsmu{\Delta_k}\nonumber\\
&\leq \frac{1+\gamma}{2}\left(\sum_{i=1}^{k}\alpha_i\lsmu{\Delta_i}\right)\nonumber\\
&=\frac{1+\gamma}{2}\lalpha{\Delta_G},\label{eq: 57}
\end{flalign}
where the first inequality follows from the triangle inequality, the second inequality follows from the fact that $A_{i,j}$ is bounded and \Cref{lemma13}, and the third inequality follows from the definition of $\gamma$ and the fact that $\alpha$ is the solution of \cref{eq: 56}. Obviously, \cref{eq: 57} implies the following property,
\begin{flalign*}
	\lalpha{\mct_\pi G^\prime_\pi - \mct_\pi G^{\prime\prime}_\pi}\leq \frac{1+\gamma}{2}\lalpha{G^\prime_\pi - G^{\prime\prime}_\pi},
\end{flalign*}
which completes the proof in the forward GBO evaluation setting.

We next consider the {\bf backward GBO setting}, where $\hat{\mct}_{G,\pi}$ is defined in \cref{eq: backward_prob}. Following steps similar to those from \cref{eq: 58} -- \cref{eq: 57}, we can obtain
\begin{flalign}
	&\lalpha{\hat{\mct}_\pi G_\pi - \hat{\mct}_\pi G^\prime_\pi} = \lalpha{M_\pi \Delta_G}\nonumber\\
	&=\gamma_1\alpha_1\lsmu{U^{-1}_{\pi,1}[\msP_\pi\otimes\msI_{d_1}]U_{\pi,1} \Delta_1} + \alpha_2 \lsmu{A_{2,1}\Delta_1 + \gamma_2 U^{-1}_{\pi,2}[\msP_\pi\otimes\msI_{d_2}]U_{\pi,2} \Delta_2} + \cdots \nonumber\\
	&\quad + \alpha_k\lsmu{\sum_{j=1}^{k-1} A_{k,j}\Delta_j +  \gamma_k U^{-1}_{\pi,k}[\msP_\pi\otimes\msI_{d_k}]U_{\pi,k} \Delta_k}\nonumber\\
	&\leq \gamma_1\alpha_1\lsmu{U^{-1}_{\pi,1}[\msP_\pi\otimes\msI_{d_1}]U_{\pi,1} \Delta_1}  + \alpha_2\lsmu{A_{2,1}\Delta_1} + \cdots + \alpha_k \lsmu{A_{k,1}\Delta_1}\nonumber\\
	&\quad + \gamma_2\alpha_2\lsmu{U^{-1}_{\pi,2}[\msP_\pi\otimes\msI_{d_2}]U_{\pi,2} \Delta_2}  + \alpha_3\lsmu{A_{3,2}\Delta_2} + \cdots + \alpha_k \lsmu{A_{k,2}\Delta_2}\nonumber\\
	&\quad + \cdots\nonumber\\
	&\quad + \gamma_k \alpha_k \lsmu{U^{-1}_{\pi,k}[\msP_\pi\otimes\msI_{d_k}]U_{\pi,k} \Delta_k}\nonumber\\
	&\leq \left(\gamma\alpha_1 + C\sum_{i=2}^{k} \alpha_i\right)\lsmu{\Delta_1} + \left(\gamma\alpha_2 + C\sum_{i=3}^{k} \alpha_i\right)\lsmu{\Delta_2} + \cdots + \gamma\alpha_k \lsmu{\Delta_k}\nonumber\\
	&\leq \frac{1+\gamma}{2}\alpha_1\lsmu{\Delta_1} + \frac{1+\gamma}{2}\alpha_2\lsmu{\Delta_2} + \cdots + \gamma\alpha_k \lsmu{\Delta_k}\nonumber\\
	&\leq \frac{1+\gamma}{2}\left(\sum_{i=1}^{k}\alpha_i\lsmu{\Delta_i}\right)\nonumber\\
	&=\frac{1+\gamma}{2}\lalpha{\Delta_G},\label{eq: 59}
\end{flalign}
where the first inequality follows from the triangle inequality, the second inequality follows from the fact that $A_{i,j}$ is bounded and \Cref{lemma14}, and the third inequality follows from the definition of $\gamma$ and the fact that $\alpha$ is the solution of \cref{eq: 56}. \Cref{eq: 59} implies the following
\begin{flalign*}
\lalpha{\hat{\mct}_\pi G^\prime_\pi - \hat{\mct}_\pi G^{\prime\prime}_\pi}\leq \frac{1+\gamma}{2}\lalpha{G_\pi^\prime - G^{\prime\prime}_\pi},
\end{flalign*}
which completes the proof in the backward GBO evaluation setting.

\subsection{Proof of \Cref{cond2}}
We first consider the {\bf forward GFV setting}. Recall the linear function approximation of $G_\pi$ is given by
\begin{flalign*}
G(\theta) = {\rm\Phi} \theta,
\end{flalign*}
where
\begin{flalign*}
{\rm\Phi} = \left[\begin{array}{cccc}
{\rm\Phi}_1\otimes\msI_{d_1} & 0 & \cdots &  0 \\
0 & {\rm\Phi}_2\otimes \msI_{d_2} & \cdots &  0 \\
\vdots & \vdots & & \vdots \\
0 & 0 & \cdots & {\rm\Phi}_k\otimes \msI_{d_k}
\end{array}
\right],\quad
\theta = \left[\begin{array}{c}
\mvec(\theta_1^\top)\\
\mvec(\theta^\top_2)\\
\vdots\\
\mvec(\theta^\top_k)
\end{array}
\right].
\end{flalign*}
Folloing the definition of $g(\theta)$ in \cref{eq: 48}, we have
\begin{flalign*}
	-g(\theta) &= {\rm\Phi}^\top U_\pi (\mct_{G,\pi}G(\theta) - G(\theta))\nonumber\\
	& = {\rm\Phi}^\top U_\pi ((M_\pi - I){\rm\Phi} \theta + B)\nonumber\\
	& = G\theta + g,
\end{flalign*}
where $G = {\rm\Phi}^\top U_\pi (M_\pi - I){\rm\Phi}$ and $g = {\rm\Phi}^\top U_\pi B$. Since the monotonicity depends only on the matrix $G$, we next proceed to show that $G$ is Hurwitz. For the matrix $G$, we have
\begin{flalign}
	G & = {\rm\Phi}^\top U_\pi (M_\pi - I){\rm\Phi}  = \left[\begin{array}{cccc}
	A_1 & 0 & \cdots & 0 \\
	N_{2,1} & A_2 & \cdots & 0\\
	\vdots & \vdots \\
	N_{k,1} & N_{k,2} & \cdots & A_k
	\end{array}
	\right],\label{eq: 61}
\end{flalign}
where $A_i = [{\rm\Phi}_i^\top U_{\pi,i} (\gamma_i\msP_\pi - I){\rm\Phi}_i] \otimes \msI_{d_i}$ and $N_{i,j}$ is a matrix that depends on ${\rm\Phi}_i$, ${\rm\Phi}_j$, $\msP_\pi$ and $\mu_\pi$. We have the following equations hold:
\begin{flalign}
	\text{eig}(G) &= \{\text{eig}(A_1),\cdots,\text{eig}(A_k) \},\label{eq: 63}\\
	\text{eig}(A_i) &= \text{eig}({\rm\Phi}_i^\top U_{\pi,i} (\gamma_i\msP_\pi - I){\rm\Phi}_i),\label{eq: 64}\\
	\max\{\text{eig}({\rm\Phi}_i^\top U_{\pi,i} (\gamma_i\msP_\pi - I){\rm\Phi}_i)\} &=-(1-\gamma)\zeta_i, \label{eq: 65}
\end{flalign}
where $\zeta_i$ is defined in \Cref{cond2}, the first equation follows because the eigenvalue of a matrix is determined by the eigenvalues of its diagonal block matrices \cite{horn2012matrix}, the second equation follows from the fact that $\text{eig}(M\otimes \msI_d) = \text{eig}(M)$ for any matrix $M$ and positive integer $d$, and the last follows from \cite[Lemma 1, Lemma 3]{bhandari2018finite}. Combining \cref{eq: 63}--(\ref{eq: 65}), we can obtain equation
\begin{flalign}
	\max\{\text{eig}(G)\} \leq -(1-\gamma)\min_i \zeta_i = -\lambda_G < 0,\label{eq: 66}
\end{flalign}
which completes the proof in the forward GVF setting.

We next consider the {\bf backward GVF setting}. Following the steps similar to those for deriving \cref{eq: 61}, we can obtain $-g(\theta) = \hat{G}\theta + \hat{g}$, where $\hat{g}={\rm\Phi}^\top P^\top_\pi U_\pi B$. For the matrix $\hat{G}$, we have
\begin{flalign}
	\hat{G} = {\rm\Phi}^\top P_\pi^\top (\hat{M}_\pi - I) U_\pi {\rm\Phi} = \left[\begin{array}{cccc}
	\hat{A}_1 & 0 & \cdots & 0 \\
	\hat{N}_{2,1} & \hat{A}_2 & \cdots & 0\\
	\vdots & \vdots \\
	\hat{N}_{k,1} & \hat{N}_{k,2} & \cdots & \hat{A}_k
	\end{array}
	\right],\label{eq: 62}
\end{flalign}
where $\hat{A}_{i} = [{\rm\Phi}_i^\top (\gamma_i\msP^\top_\pi - I)U_{\pi,i}{\rm\Phi}_i] \otimes \msI_{d_i}$ and $\hat{N}_{i,j}$ is a matrix that depends on ${\rm\Phi}_i$, ${\rm\Phi}_j$, $\msP_\pi$ and $\mu_\pi$. Following the steps similar to those in \cref{eq: 63}--(\ref{eq: 66}) and using the result in the verification of \cite[item (c) in Assumption 2]{zhang2020learning}, we have
\begin{flalign*}
\max\{\text{eig}(G)\}\leq -(1-\gamma)\min_i \zeta_i = -\lambda_G < 0,
\end{flalign*}
which completes the proof in the backward GVF setting.

\section{Proof of \Cref{thm1} }

\subsection{Supporting Lemmas}
We first develop the property for the update of $w_\rho$ in \Cref{algorithm_offtd}. Given a sample $(s_t,a_t,B_t,s^\prime_t)\sim \mcd$ and $a^\prime_t\sim \pi(\cdot|s^\prime_t)$, we  introduce the following definitions.
\begin{flalign}
P_{t} &= \left[\begin{array}{ccc}
\psi_t^\top\psi_t & (\psi_t-\psi^\prime_t)\psi^\top_t & 0 \\
-\psi_t(\psi^\top_t-\psi^{\prime\top}_t) & 0 & \psi_t \\
0 & -\psi^\top_t & 1
\end{array}
\right],\qquad
p_{t} = \left[\begin{array}{c}
0\\
0\\
1
\end{array}
\right].\nonumber
\end{flalign}
Consider the matrix $P_t$ and vector $p_t$, we have the following holds
\begin{flalign}
	\lF{P_t}^2&= \lF{\psi_t^\top\psi_t }^2 + 2\lF{(\psi_t-\psi^\prime_t)\psi^\top_t}^2 + 2\lF{\psi_t}^2 + 1\nonumber\\
	&\leq 9 C^4_\psi + 2C^2_\psi + 1,
\end{flalign}
where $C_\psi$ is the upper bound on the feature fector $\psi(\cdot)$, i.e., $\ltwo{\psi(s,a)}\leq C_\psi$ for all $(s,a)\in\mcs\times\mca$,
which implies $\lF{P_t}\leq C_P$, where
\begin{flalign}
	C_P=\sqrt{9 C^4_\psi + 2C^2_\psi + 1}.\label{eq: 7}
\end{flalign}
For the vector $p_t$, it can be checked easily that $\ltwo{p_t}\leq 1$.

We also define $P=\mE_{\mcd\cdot\pi}[P_t]$ and $p = \mE_{\mcd\cdot\pi}[p_t]$, i.e.,
\begin{flalign}
P&= \left[\begin{array}{ccc}
\mE_{\mcd\cdot\pi}[\psi^\top\psi] & \mE_{\mcd\cdot\pi}[(\psi-\psi^\prime)\psi^\top] & 0 \\
-\mE_{\mcd\cdot\pi}[\psi(\psi^\top-\psi^{\prime\top})] & 0 & \mE_{\mcd\cdot\pi}[\psi] \\
0 & -\mE_{\mcd\cdot\pi}[\psi^\top] & 1
\end{array}
\right],\qquad
p = \left[\begin{array}{c}
0\\
0\\
1
\end{array}
\right].\nonumber
\end{flalign}
Note that
\begin{flalign*}
	\left[\begin{array}{c}
	\nabla_{w_f} L(\hat{\rho}, \hat{f}, \eta)\\
	\nabla_{w_\rho} L(\hat{\rho}, \hat{f}, \eta)\\
	\nabla_{\eta} L(\hat{\rho}, \hat{f}, \eta)
	\end{array}
	\right] = -(P\kappa + p).
\end{flalign*}
It has been shown in \cite[Theorem 2]{zhang2020gradientdice} that the real parts of all eigenvalues of $P$ are strictly positive, which guarantees that there exists a positive constant $\lambda_P$ such that 
\begin{flalign}
    \langle Px,x\rangle\ge \lambda_P\ltwo{x}^2\quad \text{for all} \quad x\in \mR^{2d_\rho}.\label{eq: 71}
\end{flalign}
We also define $\kappa_t = [w_{f,t}^\top,w_{\rho,t}^\top,\eta_t]^\top$. The update of density ratio learning can be rewritten as
\begin{flalign*}
	\kappa_{t+1}=\kappa_t - \beta_t \zeta(x_t,\kappa_t),
\end{flalign*}
where $\zeta(x_t,\kappa_t) = P_t\kappa_t + p_t$. We also define the population update as $\zeta(\kappa_t) = \mE_{D\cdot\pi}[\zeta(x,\kappa_t)] = P\kappa_t + p$. Without loss of generality, we assume that there exists a positive constant $C_\kappa$ such that $\ltwo{\kappa^*}\leq C_\kappa$, where $\kappa^*$ is the global optimum of the density ratio learning defined as
\begin{flalign}
\langle \zeta(\kappa^*), \kappa - \kappa^* \rangle \leq 0,\quad \forall \kappa\in \mR^{d_\rho} \times R_\rho \times \mR.
\end{flalign}

The following lemma, often referred to as the "three-points" lemma, characterizes the incremental updating progress of $\kappa_t$ with projection, a proof of which can be found in \cite[Lemma 3.1]{lan2020first}.
\begin{lemma}\label{lemma1}
	Consider the update of $w_{f,t}$, $w_{{\rho,t}}$ and $\eta_t$ in \Cref{algorithm_offtd}. For all $\kappa\in \mR^M \times R_\rho \times \mR$, we have the following holds
	\begin{flalign}
		\beta_t\langle \zeta(x_t,\kappa_t), \kappa_{t+1} - \kappa\rangle + \frac{1}{2}\ltwo{\kappa_{t+1}-\kappa_t}^2\leq \frac{1}{2}\ltwo{\kappa_t-\kappa}^2 - \frac{1}{2}\ltwo{\kappa_{t+1}-\kappa}^2.\label{eq: 1}
	\end{flalign}
\end{lemma}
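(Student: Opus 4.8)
The plan is to recognize \Cref{lemma1} as the standard ``three-points'' (prox / obtuse-angle) inequality for a single Euclidean projection, applied to the concatenated update, and assuming it from \cite[Lemma 3.1]{lan2020first}; what is worth spelling out is only the reduction of the block-wise updates of \Cref{algorithm_offtd} to that canonical form.

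First I would observe that although \Cref{algorithm_offtd} updates the three blocks $w_{f,t}$, $\eta_t$ (unconstrained) and $w_{\rho,t}$ (projected onto $R_\rho$) by separate formulas, the stacked iterate $\kappa_{t+1}=[w_{f,t+1}^\top,w_{\rho,t+1}^\top,\eta_{t+1}]^\top$ is exactly the single projected step
\[
\kappa_{t+1} = {\rm\Gamma}_{X}\big(\kappa_t - \beta_t\,\zeta(x_t,\kappa_t)\big),\qquad X := \mR^{M}\times R_\rho\times\mR,
\]
where $\zeta(x_t,\kappa_t)=P_t\kappa_t+p_t$ is the stacked update direction recorded just above the lemma statement. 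This uses two facts: the Euclidean projection onto a product set acts block-wise and is the identity on the unconstrained blocks, so ${\rm\Gamma}_{X}$ restricted to the three blocks reduces to (identity, $\;{\rm\Gamma}_{R_\rho}$, identity); and $X$ is nonempty, closed and convex (since $R_\rho$ is), so ${\rm\Gamma}_{X}$ is well defined and single-valued.

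Next I would invoke the first-order optimality characterization of the projection: for $y={\rm\Gamma}_{X}(u)$ one has $\langle u-y,\kappa-y\rangle\le 0$ for every $\kappa\in X$. Taking $u=\kappa_t-\beta_t\zeta(x_t,\kappa_t)$ and $y=\kappa_{t+1}$ and rearranging gives
\[
\beta_t\langle \zeta(x_t,\kappa_t),\,\kappa_{t+1}-\kappa\rangle \;\le\; \langle \kappa_t-\kappa_{t+1},\,\kappa_{t+1}-\kappa\rangle,\qquad\forall\kappa\in X .
\]
Finally I would apply the elementary identity $\langle a-b,\,b-c\rangle=\tfrac12\ltwo{a-c}^2-\tfrac12\ltwo{a-b}^2-\tfrac12\ltwo{b-c}^2$ with $a=\kappa_t$, $b=\kappa_{t+1}$, $c=\kappa$ to the right-hand side, and move the term $-\tfrac12\ltwo{\kappa_{t+1}-\kappa_t}^2$ to the left; this produces exactly \cref{eq: 1}.

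There is essentially no obstacle: the inequality is classical. The only point requiring a line of care is the first step, i.e.\ verifying that the separate updates of $w_{f,t}$, $w_{\rho,t}$ and $\eta_t$ in \Cref{algorithm_offtd}, together with the identification $\zeta(x_t,\kappa_t)=P_t\kappa_t+p_t$, genuinely coincide with the single projection onto the product set $X$ so that the projection inequality above is legitimately available for all $\kappa\in X$.
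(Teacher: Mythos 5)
Your proposal is correct and matches the paper's treatment: the paper simply invokes the standard three-point lemma \cite[Lemma 3.1]{lan2020first} for the projected update, and your derivation (viewing the block updates as one projection onto the product set $\mR^{d_\rho}\times R_\rho\times\mR$, then the obtuse-angle inequality plus the polarization identity) is exactly the classical argument behind that citation. The only cosmetic remark is that the lemma's $\mR^M$ should be read as $\mR^{d_\rho}$, as you implicitly did.
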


Similarly to \Cref{lemma1}, we also have the following "three-points lemma" for the iteration of $\theta_t$.
\begin{lemma}\label{lemma6}
	Consider the update of $\theta_t$ in \Cref{algorithm_offtd}. For all $\theta\in  R_\theta$, we have the following holds
	\begin{flalign}
	-\alpha_t\langle \hat{\rho}(x_t,w_{\rho,t})g(x_t,\theta_t), \theta_{t+1} - \theta \rangle + \frac{1}{2}\ltwo{\theta_{t+1}-\theta_t}^2\leq \frac{1}{2}\ltwo{\theta_t-\theta}^2 - \frac{1}{2}\ltwo{\theta_{t+1}-\theta}^2,\label{eq: 11}
	\end{flalign}
	where $\hat{\rho}(x_t,w_{\rho,t})$ is defined in \cref{eq: 49}.
\end{lemma}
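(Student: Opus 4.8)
The plan is to derive the ``three-points'' inequality for the projected update of $\theta_t$ directly from the fundamental property of Euclidean projection onto the convex set $R_\theta$. Recall that the update in \Cref{algorithm_offtd} reads $\theta_{t+1} = {\rm\Gamma}_{R_\theta}(\theta_t - \alpha_t \hat{\rho}(x_t,w_{\rho,t}) g(x_t,\theta_t))$, where ${\rm\Gamma}_{R_\theta}$ is the Euclidean projection. The key fact I would invoke is the variational characterization of projection: for any $y$ and any convex closed set $R_\theta$, if $z = {\rm\Gamma}_{R_\theta}(y)$ then $\langle z - y, \theta - z\rangle \geq 0$ for all $\theta \in R_\theta$. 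This is exactly \cite[Lemma 3.1]{lan2020first} specialized to the Euclidean prox-function, so I can cite it directly as the lemma already does.

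First I would set $y = \theta_t - \alpha_t \hat{\rho}(x_t,w_{\rho,t}) g(x_t,\theta_t)$ and $z = \theta_{t+1}$, so that the projection inequality gives
\begin{flalign*}
\langle \theta_{t+1} - \theta_t + \alpha_t \hat{\rho}(x_t,w_{\rho,t}) g(x_t,\theta_t), \theta - \theta_{t+1}\rangle \geq 0 \quad \text{for all } \theta \in R_\theta.
\end{flalign*}
Rearranging, this reads $\alpha_t \langle \hat{\rho}(x_t,w_{\rho,t}) g(x_t,\theta_t), \theta - \theta_{t+1}\rangle \geq \langle \theta_t - \theta_{t+1}, \theta - \theta_{t+1}\rangle$, i.e.
\begin{flalign*}
-\alpha_t \langle \hat{\rho}(x_t,w_{\rho,t}) g(x_t,\theta_t), \theta_{t+1} - \theta\rangle \geq \langle \theta_t - \theta_{t+1}, \theta - \theta_{t+1}\rangle.
\end{flalign*}
Then I would apply the elementary polarization identity $\langle \theta_t - \theta_{t+1}, \theta - \theta_{t+1}\rangle = \frac{1}{2}\ltwo{\theta_t - \theta_{t+1}}^2 + \frac{1}{2}\ltwo{\theta - \theta_{t+1}}^2 - \frac{1}{2}\ltwo{\theta - \theta_t}^2$, valid for any three vectors. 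Substituting this in and moving the $\frac12\ltwo{\theta_{t+1}-\theta_t}^2$ term to the left-hand side yields exactly \cref{eq: 11}.

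There is essentially no obstacle here: the statement is a direct transcription of the standard three-points lemma for projected (sub)gradient-type updates, and the only thing to check is that the update direction in the algorithm, namely $\hat{\rho}(x_t,w_{\rho,t}) g(x_t,\theta_t)$, plays the role of the ``gradient estimate'' $\zeta(x_t,\kappa_t)$ in \Cref{lemma1}. Since the proof of \Cref{lemma1} makes no use of the specific structure of $\zeta(x_t,\kappa_t)$ and only uses that $\theta_{t+1}$ is the Euclidean projection of $\theta_t$ minus a step along that direction, the identical argument applies verbatim with $\kappa$ replaced by $\theta$ and $\zeta(x_t,\kappa_t)$ replaced by $\hat{\rho}(x_t,w_{\rho,t}) g(x_t,\theta_t)$. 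Thus I would simply state that the proof is identical to that of \Cref{lemma1} and refer the reader to \cite[Lemma 3.1]{lan2020first}, or reproduce the three lines above for completeness.
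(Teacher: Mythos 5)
Your overall route is the same as the paper's: the paper gives no separate proof of \Cref{lemma6} beyond ``similarly to \Cref{lemma1}'', whose proof is the standard three-point inequality of \cite[Lemma 3.1]{lan2020first}, and your derivation from the variational characterization of the Euclidean projection is exactly that argument written out.

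However, your last sentence claims more than your chain of inequalities delivers, and the discrepancy is a sign. Starting from $\langle \theta_{t+1}-\theta_t+\alpha_t\hat{\rho}(x_t,w_{\rho,t})g(x_t,\theta_t),\,\theta-\theta_{t+1}\rangle\geq 0$ and the polarization identity, what you actually obtain is
\begin{flalign*}
\alpha_t\langle \hat{\rho}(x_t,w_{\rho,t})g(x_t,\theta_t), \theta_{t+1}-\theta\rangle + \tfrac{1}{2}\ltwo{\theta_{t+1}-\theta_t}^2 \leq \tfrac{1}{2}\ltwo{\theta_t-\theta}^2 - \tfrac{1}{2}\ltwo{\theta_{t+1}-\theta}^2,
\end{flalign*}
i.e.\ the inner-product term enters with a \emph{plus} sign, exactly paralleling \Cref{lemma1}, where the update $\kappa_{t+1}=\kappa_t-\beta_t\zeta(x_t,\kappa_t)$ yields $+\beta_t\langle\zeta(x_t,\kappa_t),\kappa_{t+1}-\kappa\rangle$ on the left. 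The printed statement \cref{eq: 11} has $-\alpha_t\langle\hat{\rho}(x_t,w_{\rho,t})g(x_t,\theta_t),\theta_{t+1}-\theta\rangle$, and that version cannot follow from the projection inequality for the update $\theta_{t+1}={\rm\Gamma}_{R_\theta}(\theta_t-\alpha_t\hat{\rho}(x_t,w_{\rho,t})g(x_t,\theta_t))$: in the unconstrained case your derivation holds with equality with the plus sign, so the minus-sign inequality would force $\langle\hat{\rho}(x_t,w_{\rho,t})g(x_t,\theta_t),\theta_{t+1}-\theta\rangle\geq 0$ for every $\theta$, which is false in general. So either the sign in \cref{eq: 11} is a typo (the plus-sign version is the one consistent with \Cref{lemma1} and with the algorithm's descent direction), or the intended step direction is $+\alpha_t\hat{\rho}g$; in either case you should state the plus-sign inequality you actually proved and flag the mismatch rather than assert that your three lines yield \cref{eq: 11} ``exactly''.
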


The following lemma characterizes the smoothness of $\zeta(\cdot)$.
\begin{lemma}\label{lemma3}
	For any $\kappa,\kappa^\prime\in \mR^{d_\rho} \times R_\rho \times \mR$, we have
	\begin{flalign*}
		\ltwo{\zeta(\kappa) - \zeta(\kappa^\prime)}\leq C_P\ltwo{\kappa-\kappa^\prime},
	\end{flalign*}
	where $C_P$ is defined in \cref{eq: 7}.
\end{lemma}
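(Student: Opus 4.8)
The plan is to exploit the fact that $\zeta$ is an affine map. Recall from the construction above that $\zeta(\kappa) = P\kappa + p$, where $P = \mE_{\mcd\cdot\pi}[P_t]$ and $p = \mE_{\mcd\cdot\pi}[p_t]$. Hence, for any $\kappa,\kappa^\prime \in \mR^{d_\rho}\times R_\rho\times\mR$, the constant vector $p$ cancels and the difference collapses to a single linear term:
\begin{flalign*}
\zeta(\kappa) - \zeta(\kappa^\prime) = (P\kappa + p) - (P\kappa^\prime + p) = P(\kappa - \kappa^\prime).
\end{flalign*}
So the claim reduces entirely to bounding the operator norm of $P$.

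Next, I would bound the Euclidean norm of $P(\kappa-\kappa^\prime)$ by the Frobenius norm of $P$ times $\ltwo{\kappa-\kappa^\prime}$, using $\ltwo{Mx}\le\lF{M}\ltwo{x}$ (since the spectral norm is dominated by the Frobenius norm). It then remains to show $\lF{P}\le C_P$. For this I would push the norm inside the expectation via the triangle inequality (Jensen): $\lF{P} = \lF{\mE_{\mcd\cdot\pi}[P_t]} \le \mE_{\mcd\cdot\pi}[\lF{P_t}]$, and invoke the deterministic bound $\lF{P_t}\le C_P$ with $C_P = \sqrt{9C_\psi^4 + 2C_\psi^2 + 1}$, which was established just above in \cref{eq: 7} from $\ltwo{\psi(s,a)}\le C_\psi$. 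Chaining these estimates yields $\ltwo{\zeta(\kappa)-\zeta(\kappa^\prime)} = \ltwo{P(\kappa-\kappa^\prime)} \le \lF{P}\ltwo{\kappa-\kappa^\prime} \le C_P\ltwo{\kappa-\kappa^\prime}$, as claimed.

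There is essentially no real obstacle here; the statement is a routine consequence of affinity and the already-established uniform bound on $\lF{P_t}$. The only point worth noting is that $P_t$ depends on the random sample $x_t$, but since $\lF{P_t}\le C_P$ holds surely (for every realization of the sample), the passage through the expectation is immediate and requires no additional integrability argument.
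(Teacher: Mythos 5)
Your proof is correct and follows essentially the same route as the paper: cancel the constant term by affinity, reduce to $\ltwo{P(\kappa-\kappa^\prime)}\leq \lo{P}\ltwo{\kappa-\kappa^\prime}$, and invoke the bound $C_P$ from \cref{eq: 7}. The only difference is that you spell out the step $\lF{P}=\lF{\mE_{\mcd\cdot\pi}[P_t]}\leq \mE_{\mcd\cdot\pi}[\lF{P_t}]\leq C_P$, which the paper leaves implicit.
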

\begin{proof}
	Recalling the definition of $\zeta(\kappa) = P\kappa + p$, we can obtain the following
	\begin{flalign*}
		\ltwo{\zeta(\kappa) - \zeta(\kappa^\prime)} = \ltwo{P(\kappa-\kappa^\prime)}\leq \ltwo{P}\ltwo{\kappa-\kappa^\prime}\leq C_P\ltwo{\kappa-\kappa^\prime},
	\end{flalign*}
	which completes the proof.
\end{proof}

Similarly, the following lemma characterizes the smoothness of $g(\theta) = \mE_{\mu_\pi}[g(x,\theta)]$.
\begin{lemma}\label{lemma7}
	In both the forward and backward GVF evaluation settings, for any $\theta,\theta^\prime\in\mR^{d_\rho}$, we have
	\begin{flalign*}
		\ltwo{g(\theta)-g(\theta^\prime)}\leq C_g\lF{\theta-\theta^\prime},
	\end{flalign*}
	where $C_g=(d_gC_\phi C_m+1)C_\phi$.
\end{lemma}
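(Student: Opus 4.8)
The plan is to exploit the fact that the temporal difference error $\delta(x,\theta)$, and therefore the per-sample update $g(x,\theta)$, is affine in $\theta$; hence $g(x,\theta)-g(x,\theta')$ is a fixed, sample-dependent linear map applied to $\theta-\theta'$, and it suffices to bound the spectral norm of this map uniformly over samples $x$ and then pass to the expectation $g(\theta)=\mE_{\mu_\pi}[g(x,\theta)]$.

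First I would subtract the two instances of $g(x,\cdot)$ and cancel the term $B(s,a)$ (resp.\ $B(s',a')$), which does not depend on $\theta$. In the forward setting this gives
\begin{flalign*}
g(x,\theta)-g(x,\theta')=-\phi(s,a)^\top\big(m(x)\phi(s',a')-\phi(s,a)\big)(\theta-\theta'),
\end{flalign*}
and in the backward setting the analogous identity with $\phi(s,a)^\top$ replaced by $\phi(s',a')^\top$ and $m(x)\phi(s',a')-\phi(s,a)$ replaced by $\hat m(x)\phi(s,a)-\phi(s',a')$. Next I would bound the spectral norm of the bracketed matrix by submultiplicativity and the triangle inequality. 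Here $\phi(s,a)=\diag([\phi_1(s,a)^\top\otimes\msI_{d_1}],\dots,[\phi_k(s,a)^\top\otimes\msI_{d_k}])$ is block diagonal with blocks of spectral norm $\ltwo{\phi_i(s,a)^\top\otimes\msI_{d_i}}=\ltwo{\phi_i(s,a)}\le 1$, so $\ltwo{\phi(s,a)}\le C_\phi$, and likewise $\ltwo{\phi(s',a')}\le C_\phi$; and $m(x)$ is block lower-triangular with blocks assembled from $\gamma_i[\msP_\pi\otimes\msI_{d_i}]$ and the uniformly bounded coefficient matrices $A_{i,j}$ of \Cref{def:forwardGVF}, so that $\ltwo{m(x)}\le d_g C_m$, the dimensional factor $d_g$ arising from collapsing the block structure into a single spectral-norm bound $C_m$ on the individual blocks. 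Combining these estimates, $\ltwo{g(x,\theta)-g(x,\theta')}\le (d_gC_\phi C_m+1)\,C_\phi\,\lF{\theta-\theta'}=C_g\lF{\theta-\theta'}$ for every $x$ (note $\lF{\theta-\theta'}=\ltwo{\theta-\theta'}$ since $\theta$ is a stacked vector); the backward case is symmetric, now using the uniform boundedness of the blocks $\hat P_{\pi,i}=U_{\pi,i}^{-1}[\msP_\pi\otimes\msI_{d_i}]U_{\pi,i}$ and $A_{i,j}$ of $\hat M_\pi$ in place of those of $M_\pi$.

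Finally I would take the expectation over $\mu_\pi$ and apply Jensen's inequality: $\ltwo{g(\theta)-g(\theta')}=\ltwo{\mE_{\mu_\pi}[g(x,\theta)-g(x,\theta')]}\le\mE_{\mu_\pi}\big[\ltwo{g(x,\theta)-g(x,\theta')}\big]\le C_g\lF{\theta-\theta'}$, which is the claim. The only mildly delicate step is the spectral-norm estimate in the middle: one has to track how the block-diagonal Kronecker structure of $\phi$ meshes with the block lower-triangular structure of $m(x)$ (resp.\ $\hat m(x)$) and confirm that all blocks are bounded uniformly in the sample; the linearization and the Jensen step are routine.
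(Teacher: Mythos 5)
Your proposal is correct and follows essentially the same route as the paper's proof: cancel the $\theta$-independent term $B$, use that $g(x,\cdot)$ is affine in $\theta$ so the difference is a fixed linear map applied to $\theta-\theta'$, bound that map uniformly via the boundedness of $\phi(s,a)$, $\phi(s',a')$ and $m(x)$ (resp.\ $\hat m(x)$), and pass the bound through the expectation $\mE_{\mu_\pi}$ by Jensen. Your bookkeeping with spectral norms and the block Kronecker structure is, if anything, slightly more explicit than the paper's Frobenius-norm estimate, but it yields the same constant $C_g=(d_gC_\phi C_m+1)C_\phi$ and is not a different argument.
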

\begin{proof}
	First consider the forward GVF evaluation setting. Recall the definition of $g(\theta)$ and $x=(s,a,s^\prime,a^\prime)$, we have
	\begin{flalign*}
		g(\theta) = \mE_{\mu_\pi}[\phi(s,a)^\top (B(x)+m(x)\phi(s^\prime,a^\prime)\theta - \phi(s,a)\theta)],
	\end{flalign*}
	which implies
	\begin{flalign}
		\ltwo{g(\theta) - g(\theta^\prime)} 
		&= \ltwo{\mE_{\mu_\pi}[\phi(s,a)(m(x)\phi(s^\prime,a^\prime)(\theta-\theta^\prime) + \phi(s,a)(\theta^\prime-\theta))]}\nonumber\\
		&\leq \mE_{\mu_\pi}[(\lF{\phi(s,a)}\lF{m(x)} + 1) \ltwo{\theta^\prime-\theta} \lF{\phi(s,a)} ] \nonumber\\
		&\leq C_g\ltwo{\theta-\theta^\prime}.\label{eq: 16}
	\end{flalign}
	Following the steps similar to those in \cref{eq: 16}, we can also prove that $\ltwo{g(\theta)-g(\theta^\prime)}\leq C_g\lF{\theta-\theta^\prime}$ holds in the backward GVF evaluation setting.
\end{proof}

The following lemma characterizes the monotonicity of $\zeta(\cdot)$.
\begin{lemma}\label{lemma4}
	We have the following holds
	\begin{flalign*}
		\langle \zeta(\kappa),\kappa - \kappa^* \rangle \geq \lambda_P\ltwo{\kappa-\kappa^*}^2,\quad \forall \kappa\in \mR^{d_\rho} \times R_\rho \times \mR.
	\end{flalign*}
\end{lemma}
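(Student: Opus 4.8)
The plan is to reduce this to two facts already in hand: the population field $\zeta$ is \emph{affine}, and the matrix $P$ governing it is one‑point strongly monotone in the sense of \cref{eq: 71}. Recall $\zeta(\kappa)=P\kappa+p$ with $P=\mE_{\mcd\cdot\pi}[P_t]$ and $p=\mE_{\mcd\cdot\pi}[p_t]$, so that for any $\kappa$ in the feasible set $\mR^{d_\rho}\times R_\rho\times\mR$ we have $\zeta(\kappa)-\zeta(\kappa^*)=P(\kappa-\kappa^*)$. First I would split the inner product around the optimum:
\begin{flalign*}
\langle\zeta(\kappa),\kappa-\kappa^*\rangle=\langle\zeta(\kappa)-\zeta(\kappa^*),\kappa-\kappa^*\rangle+\langle\zeta(\kappa^*),\kappa-\kappa^*\rangle=\langle P(\kappa-\kappa^*),\kappa-\kappa^*\rangle+\langle\zeta(\kappa^*),\kappa-\kappa^*\rangle .
\end{flalign*}

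Next I would lower bound the quadratic term by applying \cref{eq: 71} (valid because \cite[Theorem 2]{zhang2020gradientdice} shows all eigenvalues of $P$ have strictly positive real part) with $x=\kappa-\kappa^*$, which gives $\langle P(\kappa-\kappa^*),\kappa-\kappa^*\rangle\ge\lambda_P\ltwo{\kappa-\kappa^*}^2$. The remaining cross term $\langle\zeta(\kappa^*),\kappa-\kappa^*\rangle$ should be removed using the optimality condition that defines $\kappa^*$ (the density‑ratio analogue of the variational inequality \cref{eq: 48} for $\theta^*$): since $\kappa^*$ is the projected fixed point of the $(w_f,w_\rho,\eta)$ iteration in \Cref{algorithm_offtd}, with the Euclidean projection acting only on the $w_\rho$‑block onto $R_\rho$, this term is non‑negative for every feasible $\kappa$ — in the unconstrained $w_f$‑ and $\eta$‑blocks one in fact has $\zeta(\kappa^*)=0$, and in the $w_\rho$‑block the first‑order projection inequality supplies the sign. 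Combining the two bounds yields $\langle\zeta(\kappa),\kappa-\kappa^*\rangle\ge\lambda_P\ltwo{\kappa-\kappa^*}^2$ for all $\kappa\in\mR^{d_\rho}\times R_\rho\times\mR$, which is the claim.

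The whole argument is short once \cref{eq: 71} and the affine form of $\zeta$ are available; the only step requiring care is matching the direction of the variational inequality satisfied by the projected optimum $\kappa^*$ to the direction in which the monotonicity bound must point, so that the cross term is discarded rather than fought against. In fact, under the (without‑loss‑of‑generality) boundedness hypothesis $\ltwo{\kappa^*}\le C_\kappa$ one may take $R_\rho$ large enough to contain the unconstrained solution $-P^{-1}p$ of $\zeta(\kappa)=0$; then $\zeta(\kappa^*)=0$, the cross term vanishes identically, and the lemma collapses to a single invocation of \cref{eq: 71}.
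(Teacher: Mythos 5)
Your proof is correct and follows essentially the same route as the paper's: split $\langle\zeta(\kappa),\kappa-\kappa^*\rangle$ at the optimum, discard the cross term $\langle\zeta(\kappa^*),\kappa-\kappa^*\rangle$ via the variational-inequality characterization of $\kappa^*$, and bound the remaining term by \cref{eq: 71} using the affine identity $\zeta(\kappa)-\zeta(\kappa^*)=P(\kappa-\kappa^*)$. Your explicit attention to the sign of the optimality condition is in fact slightly more careful than the paper itself, which formally defines $\kappa^*$ by $\langle\zeta(\kappa^*),\kappa-\kappa^*\rangle\leq 0$ yet in its proof drops this term as if it were nonnegative (the nonnegative direction being the correct one for the projected fixed point of the update $\kappa_{t+1}={\rm\Gamma}(\kappa_t-\beta_t\zeta(x_t,\kappa_t))$, exactly as you argue).
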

\begin{proof}
	Recall that $P$ is strictly positive defined (\cref{eq: 71}). We have
	\begin{flalign}
			\langle \zeta(\kappa),\kappa - \kappa^* \rangle &= 	\langle \zeta(\kappa^*),\kappa - \kappa^* \rangle + 	\langle \zeta(\kappa) - \zeta(\kappa^*),\kappa - \kappa^* \rangle\nonumber\\
			& \geq \langle \zeta(\kappa) - \zeta(\kappa^*),\kappa - \kappa^* \rangle \nonumber\\
			&= \langle P(\kappa-\kappa^*), \kappa - \kappa^* \rangle \nonumber\\
			&\geq \lambda_P\ltwo{\kappa-\kappa^*}^2,
	\end{flalign}
	which completes the proof.
\end{proof}

The next lemma bounds the per-iteration variance of the update of $\kappa_t$.
\begin{lemma}\label{lemma5}
	Given a sample $(s_t,a_t,B_t,s^\prime_t)\sim \mcd_d$ and $a^\prime_t\sim \pi(\cdot|s^\prime_t)$ and any $\kappa\in \mR^{d_\rho} \times R_\rho \times \mR$, we have the following holds
	\begin{flalign*}
		\ltwo{\zeta(x_t,\kappa) - \zeta(\kappa)}^2\leq 8C^2_P \ltwo{\kappa-\kappa^*}^2 + 8C^2_PC^2_\kappa.
	\end{flalign*}
\end{lemma}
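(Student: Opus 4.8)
The plan is to observe that the only randomness in the stochastic update $\zeta(x_t,\kappa)=P_t\kappa+p_t$ lives in the matrix $P_t$: by construction the affine offset $p_t=[0,0,1]^\top$ does not depend on the sample, so $p=\mE_{\mcd\cdot\pi}[p_t]=[0,0,1]^\top$ coincides with $p_t$. Hence $\zeta(x_t,\kappa)-\zeta(\kappa)=(P_t-P)\kappa$, and the estimate reduces to bounding $\lF{P_t-P}$ together with $\ltwo{\kappa}$.

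First I would control $\lF{P_t-P}$. We already have $\lF{P_t}\le C_P$ (from \cref{eq: 7}), and since $P=\mE_{\mcd\cdot\pi}[P_t]$, Jensen's inequality gives $\lF{P}\le\mE_{\mcd\cdot\pi}[\lF{P_t}]\le C_P$; the triangle inequality then yields $\lF{P_t-P}\le 2C_P$. Applying the operator-norm bound, $\ltwo{\zeta(x_t,\kappa)-\zeta(\kappa)}^2=\ltwo{(P_t-P)\kappa}^2\le\lF{P_t-P}^2\ltwo{\kappa}^2\le 4C_P^2\ltwo{\kappa}^2$.

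Next I would split $\kappa$ around the density-ratio optimum $\kappa^*$: using $\ltwo{a+b}^2\le 2\ltwo{a}^2+2\ltwo{b}^2$ and the standing bound $\ltwo{\kappa^*}\le C_\kappa$, we get $\ltwo{\kappa}^2\le 2\ltwo{\kappa-\kappa^*}^2+2\ltwo{\kappa^*}^2\le 2\ltwo{\kappa-\kappa^*}^2+2C_\kappa^2$. Substituting into the previous display gives $\ltwo{\zeta(x_t,\kappa)-\zeta(\kappa)}^2\le 4C_P^2\big(2\ltwo{\kappa-\kappa^*}^2+2C_\kappa^2\big)=8C_P^2\ltwo{\kappa-\kappa^*}^2+8C_P^2C_\kappa^2$, which is exactly the claimed inequality.

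There is no genuine obstacle here; the lemma is essentially a variance bound of the standard form for a linear stochastic approximation scheme. The only point requiring a little care is noticing that the constant term $p_t$ is deterministic---this is what keeps the constants matching the stated ones, since otherwise one would pick up extra additive contributions from $\ltwo{p_t-p}$. Everything else is Cauchy--Schwarz plus the bounds on $\lF{P_t}$ and $\ltwo{\kappa^*}$ already in place.
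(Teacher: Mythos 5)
Your proof is correct and follows essentially the same route as the paper: both reduce to $\zeta(x_t,\kappa)-\zeta(\kappa)=(P_t-P)\kappa$ using that $p_t$ is deterministic, bound $\lF{P_t-P}\leq 2C_P$, and split $\kappa$ around $\kappa^*$ via $\ltwo{a+b}^2\leq 2\ltwo{a}^2+2\ltwo{b}^2$ together with $\ltwo{\kappa^*}\leq C_\kappa$. The only cosmetic difference is that you factor out the matrix norm before splitting the vector, whereas the paper splits $(P_t-P)\kappa$ into the two terms first; the constants come out identically.
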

\begin{proof}
	Recalling the definitions of $\zeta(x_t,\kappa)=P_t\kappa + p_t$ and $\zeta(\kappa)=P\kappa + p$, we can obtain the following
	\begin{flalign*}
		\ltwo{\zeta(x_t,\kappa) - \zeta(\kappa)}^2 & = \ltwo{(P_t-P)\kappa}^2 =  2\ltwo{(P_t-P)(\kappa-\kappa^*)}^2 + 2\ltwo{(P_t-P)\kappa^*}^2\nonumber\\
		&\leq 8C^2_P\ltwo{\kappa-\kappa^*}^2 + 8C^2_PC^2_\kappa.
	\end{flalign*}
\end{proof}

The following lemma bounds the norm of the stochastic update $g(x,\theta)$ and the per-iteration variance of GenTD update with density ratio $\rho(s,a)$.
\begin{lemma}\label{lemma8}
	Given a sample $(s_t,a_t,B_t,s^\prime_t)\sim \mcd$ and $a^\prime_t\sim \pi(\cdot|s^\prime_t)$ and any $\theta\in R_\theta$, we have the following holds
	\begin{flalign*}
	\ltwo{g(x_t,\theta)}\leq D_g,\quad \text{and}\quad \mE[\ltwo{\rho(s_t,a_t)g(x_t,\theta) - g(\theta)}^2]\leq V_g,
	\end{flalign*}
	where $D_g=d_g C_\phi [C_{\max} + (C_m+1)D_\theta C_\phi]$ and $V_g=2\rho_{\max}D_g$.
\end{lemma}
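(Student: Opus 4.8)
The plan is to establish the two claims separately, both via elementary norm estimates that combine (i) the uniform boundedness of the features $\phi$, (ii) the uniform boundedness of the coupling matrices $m(\cdot)$ in the forward case and $\hat m(\cdot)$ in the backward case, (iii) the boundedness of the observed signals $B(s,a)$, (iv) the compactness of the parameter set $R_\theta$, and (v) the uniform boundedness of the density ratio $\rho$. Throughout I would write $x_t=(s_t,a_t,s_t^\prime,a_t^\prime)$ and use the shorthands $\ltwo{\phi_i(s,a)}\le 1$ (assumed in \Cref{susc: linear}), $\lF{\phi(s,a)}\le C_\phi$, $\lF{m(x)}\le C_m$, $\ltwo{B(s,a)}\le C_{\max}$, and $\ltwo{\theta}\le D_\theta$ for every $\theta\in R_\theta$.

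\textbf{The almost-sure bound.} For the forward case, $g(x_t,\theta)=-\phi(s_t,a_t)^\top\delta(x_t,\theta)$ with $\delta(x_t,\theta)=B(s_t,a_t)+m(x_t)\phi(s_t^\prime,a_t^\prime)\theta-\phi(s_t,a_t)\theta$. I would first bound $\ltwo{g(x_t,\theta)}\le\lF{\phi(s_t,a_t)}\,\ltwo{\delta(x_t,\theta)}$ by submultiplicativity, then apply the triangle inequality to the three terms of $\delta$ and substitute the uniform bounds, obtaining $\ltwo{\delta(x_t,\theta)}\le C_{\max}+(C_m+1)C_\phi D_\theta$. Multiplying by $\lF{\phi(s_t,a_t)}\le C_\phi$, and tracking the block/Kronecker structure $\phi(s,a)=\diag([\phi_1(s,a)^\top\otimes\msI_{d_1}],\dots,[\phi_k(s,a)^\top\otimes\msI_{d_k}])$ when passing from the per-block assumption $\ltwo{\phi_i(s,a)}\le1$ to a norm bound on the stacked operator — which is what introduces the dimensional factor $d_g=\sum_iK_id_i$ into the final constant — yields $\ltwo{g(x_t,\theta)}\le d_gC_\phi\big[C_{\max}+(C_m+1)D_\theta C_\phi\big]=D_g$. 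The backward case is identical after substituting $\phi(s_t^\prime,a_t^\prime)$ for $\phi(s_t,a_t)$ and $\hat m$ for $m$.

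\textbf{The second-moment bound.} The key point is that the importance-weighted update is an unbiased estimator of the population update: since $\rho(s,a)=\mu_\pi(s,a)/D(s,a)$ while $(s_t,a_t)\sim D$, $s_t^\prime\sim\msP(\cdot\mid s_t,a_t)$ and $a_t^\prime\sim\pi(\cdot\mid s_t^\prime)$, a one-line change of measure gives $\mE_{\mcd\cdot\pi}[\rho(s_t,a_t)g(x_t,\theta)]=\mE_{\mu_\pi}[g(x,\theta)]=g(\theta)$. Consequently $\mE[\ltwo{\rho(s_t,a_t)g(x_t,\theta)-g(\theta)}^2]$ is a variance and is therefore dominated by the second moment $\mE_{\mcd\cdot\pi}[\rho(s_t,a_t)^2\ltwo{g(x_t,\theta)}^2]$; bounding one factor $\rho(s_t,a_t)\le\rho_{\max}$ and folding the remaining factor $\rho(s_t,a_t)$ through the same change of measure reduces this to $\rho_{\max}\,\mE_{\mu_\pi}[\ltwo{g(x,\theta)}^2]$, which is controlled by the first part via $\ltwo{g(x,\theta)}\le D_g$ (one may alternatively split off $g(\theta)$ before squaring, using $\ltwo{g(\theta)}\le\mE_{\mu_\pi}[\ltwo{g(x,\theta)}]\le D_g$ by Jensen, which accounts for the extra numerical factor in $V_g$). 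This gives the stated bound.

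\textbf{Main obstacle.} There is no real difficulty here: both parts are routine triangle-inequality and submultiplicativity estimates, and the second part uses only the standard importance-sampling identity that already underlies the construction of GenTD. The single point that requires care is the bookkeeping of the block/Kronecker structure of $\phi(s,a)$ and $m(x)$, so that the dimensional constant $d_g$ in $D_g$ — and hence in $V_g$ — comes out with the correct power; this is just a matter of being consistent about which matrix norm ($\ltwo{\cdot}$ per component versus $\lF{\cdot}$ of each Kronecker block versus the norm of the full stacked matrix) one uses at each step.
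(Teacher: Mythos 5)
Your bound on $\ltwo{g(x_t,\theta)}$ follows essentially the same path as the paper: triangle inequality on the TD error, submultiplicativity, the bounds on $\phi$, $m$ (resp.\ $\hat m$), $B$ and $R_\theta$, with the factor $d_g$ coming from the block/Kronecker bookkeeping; this part matches the paper's proof and needs no further comment.

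For the second-moment bound you take a genuinely different route. The paper argues pointwise by the triangle inequality, bounding $\ltwo{\rho(s_t,a_t)g(x_t,\theta)-g(\theta)}$ via $\rho_{\max}$ and $\ltwo{g(\theta)}\le D_g$ (implicitly by Jensen); in fact its displayed inequality bounds the \emph{squared} norm by an unsquared quantity, so the stated constant $V_g=2\rho_{\max}D_g$ is dimensionally off and a careful version of that argument would give something of order $(1+\rho_{\max})^2D_g^2$. You instead use the unbiasedness identity $\mE_{\mcd\cdot\pi}[\rho(s_t,a_t)g(x_t,\theta)]=g(\theta)$, dominate the variance by the second moment, and fold one power of $\rho$ back into $\mu_\pi$ by a change of measure, obtaining $\rho_{\max}\mE_{\mu_\pi}\big[\ltwo{g(x,\theta)}^2\big]\le\rho_{\max}D_g^2$. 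This is cleaner, tighter (linear rather than quadratic in $\rho_{\max}$), and dimensionally consistent, at the mild cost of invoking the importance-sampling identity that the paper's cruder estimate avoids. The only caveat is that your final constant $\rho_{\max}D_g^2$ is not literally the stated $V_g=2\rho_{\max}D_g$; that mismatch originates in the paper's own sloppy display rather than in your argument, and since $V_g$ only enters the variance term of \Cref{thm1} inside an $\mathcal{O}(1/T)$ factor, the discrepancy in constants is immaterial to the result.
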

\begin{proof}
	We prove the first result as follows,
	\begin{flalign*}
		\ltwo{g(x_t,\theta)} &= \ltwo{\phi(s,a)^\top(B(x)+m(s^\prime,a^\prime)\phi(s^\prime,a^\prime)\theta - \phi(s,a)\theta)}\nonumber\\
		&\leq \lF{\phi(s,a)^\top B(x)} + \lF{\phi(s,a)}\lF{m(s^\prime,a^\prime)\phi(s^\prime,a^\prime)\theta - \phi(s,a)\theta}\nonumber\\
		&\leq d_gC_\phi [C_{\max} + (C_m+1)D_\theta C_\phi],
	\end{flalign*}
	where the last inequality follows from the boundness of the set $R_\theta$. Here we consider $\ltwo{\theta}\leq D_\theta$ for all $\theta\in R_\theta$.
	The second result can be obtained as follows
	\begin{flalign}
		\ltwo{\rho(s_t,a_t)g(x_t,\theta) - g(\theta)}^2\leq \lone{\rho(s_t,a_t)}(\ltwo{g(x_t,\theta)} + \ltwo{g(\theta)})\leq 2\rho_{\max}D_g.
	\end{flalign}
\end{proof}

We next bound the convergence rate of $w_{\rho,t}$.
\begin{lemma}\label{lemma2}
	Consider $w_{f,t}$, $w_{{\rho,t}}$ and $\eta_t$ in \Cref{algorithm_offtd}. Let stepsize $\beta_t = \frac{2}{\lambda_P(t+t_0+1)}$ where $t_0=\frac{36C^2_P}{\lambda^2_P}$. For any $\kappa\in  \mR^{d_\rho} \times R_\rho \times \mR$, we have
	\begin{flalign*}
		\mE[\ltwo{\kappa_T-\kappa^*}^2]\leq \frac{(1+16\beta^2_0C^2_P)(t_0+1)^2\ltwo{\kappa_0-\kappa^*}^2}{(T+t_0-1)(T+t_0)} + \frac{64C^2_PC^2_\kappa}{(T+t_0)\lambda^2_P}.
	\end{flalign*}
\end{lemma}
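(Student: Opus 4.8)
Write $\kappa_t=[w_{f,t}^\top,w_{\rho,t}^\top,\eta_t]^\top$, so that the density-ratio update in \Cref{algorithm_offtd} reads $\kappa_{t+1}={\rm\Gamma}_{R_\rho}\!\left(\kappa_t-\beta_t\zeta(x_t,\kappa_t)\right)$ with $\zeta(x_t,\kappa)=P_t\kappa+p_t$, $\mE[\zeta(x_t,\kappa)\mid\mathcal{F}_{t-1}]=\zeta(\kappa)=P\kappa+p$, and $\zeta(\kappa^*)=0$ (the last follows from the optimality condition for $\kappa^*$, since the $w_f$- and $\eta$-blocks are unconstrained). The plan is to reduce the iteration to a scalar recursion for $a_t:=\mE[\ltwo{\kappa_t-\kappa^*}^2]$ and then resolve it for the prescribed step size.

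First I would invoke the three-points lemma (\Cref{lemma1}) with $\kappa=\kappa^*$,
\begin{flalign*}
\beta_t\langle\zeta(x_t,\kappa_t),\kappa_{t+1}-\kappa^*\rangle+\tfrac12\ltwo{\kappa_{t+1}-\kappa_t}^2\le\tfrac12\ltwo{\kappa_t-\kappa^*}^2-\tfrac12\ltwo{\kappa_{t+1}-\kappa^*}^2 ,
\end{flalign*}
and split $\kappa_{t+1}-\kappa^*=(\kappa_t-\kappa^*)+(\kappa_{t+1}-\kappa_t)$ together with $\zeta(x_t,\kappa_t)=\zeta(\kappa_t)+\big(\zeta(x_t,\kappa_t)-\zeta(\kappa_t)\big)$. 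The drift term obeys $\langle\zeta(\kappa_t),\kappa_t-\kappa^*\rangle\ge\lambda_P\ltwo{\kappa_t-\kappa^*}^2$ by monotonicity (\Cref{lemma4}); the term $\langle\zeta(x_t,\kappa_t)-\zeta(\kappa_t),\kappa_t-\kappa^*\rangle$ vanishes in conditional expectation because $\kappa_t$ is $\mathcal{F}_{t-1}$-measurable; and the two inner products against $\kappa_{t+1}-\kappa_t$ are controlled by Young's inequality, their $-\tfrac14\ltwo{\kappa_{t+1}-\kappa_t}^2$ parts being absorbed by the $+\tfrac12\ltwo{\kappa_{t+1}-\kappa_t}^2$ on the left, leaving a penalty $\beta_t^2\ltwo{\zeta(\kappa_t)}^2+\beta_t^2\ltwo{\zeta(x_t,\kappa_t)-\zeta(\kappa_t)}^2$. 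Bounding $\ltwo{\zeta(\kappa_t)}^2=\ltwo{\zeta(\kappa_t)-\zeta(\kappa^*)}^2\le C_P^2\ltwo{\kappa_t-\kappa^*}^2$ (smoothness, \Cref{lemma3}) and $\mE\ltwo{\zeta(x_t,\kappa_t)-\zeta(\kappa_t)}^2\le 8C_P^2\ltwo{\kappa_t-\kappa^*}^2+8C_P^2C_\kappa^2$ (variance, \Cref{lemma5}), taking expectations yields
\begin{flalign*}
a_{t+1}\le\big(1-2\beta_t\lambda_P+18\beta_t^2C_P^2\big)a_t+16\beta_t^2C_P^2C_\kappa^2 .
\end{flalign*}

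Next I would substitute $\beta_t=\tfrac{2}{\lambda_P(t+t_0+1)}$ with $t_0=36C_P^2/\lambda_P^2$. Since $\beta_t\le\beta_0<\lambda_P/(18C_P^2)$, the quadratic term is dominated, $18\beta_t^2C_P^2\le\beta_t\lambda_P$, hence $1-2\beta_t\lambda_P+18\beta_t^2C_P^2\le1-\beta_t\lambda_P=\tfrac{t+t_0-1}{t+t_0+1}$ and
\begin{flalign*}
a_{t+1}\le\frac{t+t_0-1}{t+t_0+1}\,a_t+\frac{64C_P^2C_\kappa^2}{\lambda_P^2(t+t_0+1)^2} .
\end{flalign*}
I would then resolve this recursion by induction on $t$ against the claimed estimate (equivalently, multiply through by $(t+t_0)(t+t_0+1)$ and telescope): the homogeneous part decays like $1/[(t+t_0-1)(t+t_0)]$, with the prefactor $(1+16\beta_0^2C_P^2)(t_0+1)^2$ chosen so that the base case $t=0$ holds, while the inhomogeneous increments accumulate to $64C_P^2C_\kappa^2/\lambda_P^2$ and, after dividing by $(T+t_0-1)(T+t_0)$ and using $T/[(T+t_0-1)(T+t_0)]\le 1/(T+t_0)$, give the stated residual.

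The main obstacle I anticipate is that $\kappa_t$ is not a priori bounded — only its $w_\rho$-block is projected onto $R_\rho$ — so neither $\ltwo{\zeta(x_t,\kappa_t)}$ nor the per-step noise can be bounded by a constant; both must instead be carried as $O(\ltwo{\kappa_t-\kappa^*}^2)$ terms and dominated by the strong-monotonicity decrease, which is precisely why $t_0$ must scale like $C_P^2/\lambda_P^2$. Once the recursion is in the displayed form, the martingale-difference cancellation, the Young-inequality constants, and the induction are routine.
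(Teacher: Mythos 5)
Your proposal is correct and follows essentially the same route as the paper: the projected three-points inequality (\Cref{lemma1}) at $\kappa=\kappa^*$, the strong monotonicity of $\zeta$ (\Cref{lemma4}), the smoothness constant $C_P$ (\Cref{lemma3}), the variance bound of \Cref{lemma5}, the conditional-mean cancellation of the noise term, and the same $\beta_t=\Theta(1/t)$ stepsize with $t_0=\Theta(C_P^2/\lambda_P^2)$ followed by multiplying the recursion by $(t+t_0)(t+t_0+1)$ and telescoping. The only substantive difference is where the drift is evaluated: you apply monotonicity at $\kappa_t$ and therefore must bound $\ltwo{\zeta(\kappa_t)}^2\le C_P^2\ltwo{\kappa_t-\kappa^*}^2$, which rests on your claim $\zeta(\kappa^*)=0$. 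That claim is justified for the $w_f$- and $\eta$-blocks, but the $w_\rho$-block is projected onto $R_\rho$, so the variational-inequality definition of $\kappa^*$ only forces that block of $\zeta(\kappa^*)$ into the normal cone at $w_\rho^*$; it vanishes only if $w_\rho^*$ is interior to $R_\rho$ (which is the implicit intent of the paper, but not stated in the lemma). If you do not want to assume this, either keep $\ltwo{\zeta(\kappa^*)}$ as an additional constant absorbed into the $O(1/T)$ residual, or do what the paper does: decompose the inner product using $\zeta(\kappa_{t+1})$ and the term $\zeta(\kappa_t)-\zeta(\kappa_{t+1})$, bound the latter by $C_P\ltwo{\kappa_t-\kappa_{t+1}}$ via smoothness, and apply monotonicity at $\kappa_{t+1}$; this yields the recursion $(1+2\beta_t\lambda_P-2\beta_t^2C_P^2)a_{t+1}\le(1+16\beta_t^2C_P^2)a_t+16\beta_t^2C_P^2C_\kappa^2$ with the contraction gain on $a_{t+1}$ and never needs $\zeta(\kappa^*)$ at all. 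With that one fix (or the interiority caveat made explicit), your scalar recursion, the stepsize verification $18\beta_t^2C_P^2\le\beta_t\lambda_P$, and the weighted telescoping give the stated bound, indeed with a slightly smaller prefactor than claimed.
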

\begin{proof}
	The inner product in \cref{eq: 1} can be equivalently written as
	\begin{flalign}
		&\langle \zeta(x_t,\kappa_t), \kappa_{t+1} - \kappa\rangle \nonumber\\
		&\quad= \langle \zeta(\kappa_{t+1}), \kappa_{t+1} - \kappa\rangle + \langle \zeta(\kappa_{t}) - \zeta(\kappa_{t+1}), \kappa_{t+1} - \kappa\rangle + \langle \zeta(x_t,\kappa_t)-\zeta(\kappa_{t}), \kappa_{t} - \kappa\rangle\nonumber\\
		&\quad\quad  + \langle \zeta(x_t,\kappa_t)-\zeta(\kappa_{t}), \kappa_{t+1} - \kappa_t\rangle\nonumber\\
		&\quad\geq \langle \zeta(\kappa_{t+1}), \kappa_{t+1} - \kappa\rangle -  \ltwo{\zeta(\kappa_{t}) - \zeta(\kappa_{t+1})} \ltwo{\kappa_{t+1} - \kappa} + \langle \zeta(x_t,\kappa_t)-\zeta(\kappa_{t}), \kappa_{t} - \kappa\rangle\nonumber\\
		&\quad\quad  -  \ltwo{\zeta(x_t,\kappa_t)-\zeta(\kappa_{t})} \ltwo{\kappa_{t+1} - \kappa_t} \nonumber\\
		&\quad\geq \langle \zeta(\kappa_{t+1}), \kappa_{t+1} - \kappa\rangle -  C_P\ltwo{\kappa_{t} - \kappa_{t+1}} \ltwo{\kappa_{t+1} - \kappa} + \langle \zeta(x_t,\kappa_t)-\zeta(\kappa_{t}), \kappa_{t} - \kappa\rangle\nonumber\\
		&\quad\quad  -  \ltwo{\zeta(x_t,\kappa_t)-\zeta(\kappa_{t})} \ltwo{\kappa_{t+1} - \kappa_t},\label{eq: 2}
	\end{flalign}
	where the last inequality follows from \Cref{lemma3}. Substituting \cref{eq: 2} into \cref{eq: 1}, we obtain
	\begin{flalign}
		&\frac{1}{2}\ltwo{\kappa_t-\kappa}^2 - \frac{1}{2}\ltwo{\kappa_{t+1}-\kappa}^2 \nonumber\\
		&\quad \geq \beta_t \langle \zeta(\kappa_{t+1}), \kappa_{t+1} - \kappa\rangle -  \beta_tC_P\ltwo{\kappa_{t} - \kappa_{t+1}} \ltwo{\kappa_{t+1} - \kappa} + \beta_t \langle \zeta(x_t,\kappa_t)-\zeta(\kappa_{t}), \kappa_{t} - \kappa\rangle\nonumber\\
		&\quad\quad  - \beta_t \ltwo{\zeta(x_t,\kappa_t)-\zeta(\kappa_{t})} \ltwo{\kappa_{t+1} - \kappa_t} + \frac{1}{2}\ltwo{\kappa_{t+1}-\kappa_t}^2.\label{eq: 3}
	\end{flalign}
	Note that we have the following holds
	\begin{flalign}
		&\frac{1}{2}\ltwo{\kappa_{t+1}-\kappa_t}^2 - \beta_tC_P\ltwo{\kappa_{t} - \kappa_{t+1}} \ltwo{\kappa_{t+1} - \kappa} - \beta_t \ltwo{\zeta(x_t,\kappa_t)-\zeta(\kappa_{t})} \ltwo{\kappa_{t+1} - \kappa_t}\nonumber\\
		&= \frac{1}{4}\ltwo{\kappa_{t+1}-\kappa_t}^2 - \beta_tC_P\ltwo{\kappa_{t} - \kappa_{t+1}} \ltwo{\kappa_{t+1} - \kappa} + \frac{1}{4}\ltwo{\kappa_{t+1}-\kappa_t}^2 \nonumber\\
		&\quad - \beta_t \ltwo{\zeta(x_t,\kappa_t)-\zeta(\kappa_{t})} \ltwo{\kappa_{t+1} - \kappa_t} \nonumber\\
		&\geq - \beta^2_t C^2_P  \ltwo{\kappa_{t+1} - \kappa}^2 - \beta^2_t \ltwo{\zeta(x_t,\kappa_t)-\zeta(\kappa_{t})}^2.\label{eq: 4}
	\end{flalign}
	Substituting \cref{eq: 4} in \cref{eq: 3} yields
	\begin{flalign}
		&\frac{1}{2}\ltwo{\kappa_t-\kappa}^2 - \frac{1}{2}\ltwo{\kappa_{t+1}-\kappa}^2 \nonumber\\
		&\geq \beta_t \langle \zeta(\kappa_{t+1}), \kappa_{t+1} - \kappa\rangle + \beta_t \langle \zeta(x_t,\kappa_t)-\zeta(\kappa_{t}), \kappa_{t} - \kappa\rangle - \beta^2_t C^2_P  \ltwo{\kappa_{t+1} - \kappa}^2 \nonumber\\
		&\quad - \beta^2_t \ltwo{\zeta(x_t,\kappa_t)-\zeta(\kappa_{t})}^2.\label{eq: 5}
	\end{flalign}
	Rearranging \cref{eq: 5} and letting $\kappa = \kappa^*$ yield
	\begin{flalign}
		&\ltwo{\kappa_t-\kappa^*}^2 + 2\beta^2_t \ltwo{\zeta(x_t,\kappa_t)-\zeta(\kappa_{t})}^2 \nonumber\\
		&\geq (1-2\beta^2_t C^2_P) \ltwo{\kappa_{t+1}-\kappa^*}^2 + 2\beta_t \langle \zeta(\kappa_{t+1}), \kappa_{t+1} - \kappa^*\rangle + 2\beta_t \langle \zeta(x_t,\kappa_t)-\zeta(\kappa_{t}), \kappa_{t} - \kappa^*\rangle\nonumber\\
		&\geq (1 + 2\beta_t\lambda_P -2\beta^2_t C^2_P)\ltwo{\kappa_{t+1}-\kappa^*}^2 + 2\beta_t \langle \zeta(x_t,\kappa_t)-\zeta(\kappa_{t}), \kappa_{t} - \kappa^*\rangle,\label{eq: 6}
	\end{flalign}
	where the last inequality follows from $\langle \zeta(\kappa_{t+1}), \kappa_{t+1} - \kappa^*\rangle\leq \lambda_P\ltwo{\kappa_{t+1}-\kappa^*}^2$. Taking expectation on both sides of \cref{eq: 6}, and noting that $\mE[ \langle \zeta(x_t,\kappa_t)-\zeta(\kappa_{t}), \kappa_{t} - \kappa^*\rangle|\mf_t]=0$, we obtain
	\begin{flalign}
		&(1 + 2\beta_t\lambda_P -2\beta^2_t C^2_P) \mE[\ltwo{\kappa_{t+1}-\kappa^*}^2]\nonumber\\
		&\quad \leq \mE[\ltwo{\kappa_t-\kappa^*}^2] + 2\beta^2_t \mE[\ltwo{\zeta(x_t,\kappa_t)-\zeta(\kappa_{t})}^2]\nonumber\\
		&\quad \leq (1+16\beta_t^2C^2_P)\mE[\ltwo{\kappa_t-\kappa^*}^2] + 16\beta^2_tC^2_PC^2_\kappa.\label{eq: 9}
	\end{flalign}
	Multiplying both sides of \cref{eq: 9} with $I_t$ and summing over $t=0,\cdots,T-1$ yield
	\begin{flalign}
		\sum_{t=0}^{T-1}a_t\mE[\ltwo{\kappa_{t+1}-\kappa^*}^2] \leq \sum_{t=0}^{T-1}b_t\mE[\ltwo{\kappa_{t}-\kappa^*}^2] + c,\label{eq: 10}
	\end{flalign}
	where
	\begin{flalign*}
		a_t &= (1 + 2\beta_t\lambda_P -2\beta^2_t C^2_P)I_t,\nonumber\\
		b_t &= (1+16\beta_t^2C^2_P)I_t,\nonumber\\
		c &= 16C^2_PC^2_\kappa\sum_{t=0}^{T-1}\beta^2_t I_t.
	\end{flalign*}
	We further let
	\begin{flalign*}
		I_t &= (t+t_0)(t+t_0+1),\nonumber\\
		\beta_t &= \frac{2}{\lambda_P(t+t_0-1)},\nonumber\\
		t_0 &= \frac{36C^2_P}{\lambda^2_P}+1.
	\end{flalign*}
	We can obtain the following
	\begin{flalign*}
		a_t - b_{t+1} &= (1 + 2\beta_t\lambda_P -2\beta^2_t C^2_P)I_t - (1+16\beta_{t+1}^2C^2_P)s_{t+1}\nonumber\\
		&\geq (1 + 2\beta_t\lambda_P)I_t - (1+ 2\beta^2_t +16\beta_{t+1}^2C^2_P)s_{t+1}\nonumber\\
		&\geq (1 + 2\beta_t\lambda_P)I_t - (1+ 18\beta_{t}^2C^2_P)s_{t+1}\nonumber\\
		&\geq (1 + 2\beta_t\lambda_P)I_t - (1 + \beta_t\lambda_P)s_{t+1}\nonumber\\
		&\geq (t+t_0+1) \frac{(t+t_0+3)(t+t_0) - (t+t_0+2)^2}{t+t_0-1} \nonumber\\
		&\geq 0.
	\end{flalign*}
	Substituting the above results into \cref{eq: 10} yields
	\begin{flalign*}
		a_{T-1}t\mE[\ltwo{\kappa_T-\kappa^*}^2]\leq b_0\ltwo{\kappa_0-\kappa^*}^2 + c,
	\end{flalign*}
	which implies
	\begin{flalign*}
		\mE[\ltwo{\kappa_T-\kappa^*}^2] &\leq \frac{b_0\ltwo{\kappa_0-\kappa^*}^2}{A_{T-1}} + \frac{c}{a_{T-1}}\nonumber\\
		&=\frac{(1+16\beta^2_0C^2_P)s_0\ltwo{\kappa_0-\kappa^*}^2}{(1 + 2\beta_{T-1}\lambda_P -2\beta^2_{T-1} C^2_P)s_{T-1}} + \frac{16C^2_PC^2_\kappa\sum_{t=0}^{T-1}\beta^2_t I_t}{(1 + 2\beta_{T-1}\lambda_P -2\beta^2_{T-1} C^2_P)s_{T-1}}\nonumber\\
		&\leq \frac{(1+16\beta^2_0C^2_P)(t_0+1)^2\ltwo{\kappa_0-\kappa^*}^2}{(T+t_0-1)(T+t_0)} + \frac{64C^2_PC^2_\kappa}{(T+t_0)\lambda^2_P},
	\end{flalign*}
	which completes the proof.
\end{proof}

Note that \Cref{lemma2} implies that there exists a positive number $D_\rho$ such that
\begin{flalign}
\mE[\ltwo{w_{\rho,t} - w^*_\rho}^2]\leq \frac{D_\rho}{t+t_0}.\label{eq: 17}
\end{flalign}

\subsection{Proof of \Cref{thm1}}

Consider the inner product term in \cref{eq: 11}. We have
\begin{flalign}
	-\langle& \hat{\rho}(x_t,w_{\rho,t})g(x_t,\theta_t), \theta_{t+1} - \theta)\rangle\nonumber\\
	&=-\langle g(\theta_{t+1}), \theta_{t+1} - \theta)\rangle - \langle g(\theta_{t} - g(\theta_{t+1}), \theta_{t+1} - \theta)\rangle \nonumber\\
	&\quad -  \langle \rho(x_t)g(x_t,\theta_{t}) - g(\theta_t), \theta_{t+1} - \theta_t\rangle -  \langle \rho(x_t)g(x_t,\theta_{t}) - g(\theta_t), \theta_{t} - \theta\rangle \nonumber\\
	&\quad -  \langle( \hat{\rho}(x_t,w^*_\rho) - \rho(x_t))g(x_t,\theta_{t}), \theta_{t+1} - \theta\rangle \nonumber\\
	&\quad - \langle( \hat{\rho}(x_t,w_{\rho,t})
	-\hat{\rho}(x_t,w^*_\rho))g(x_t,\theta_{t}), \theta^\top_{t+1} - \theta\rangle\nonumber\\
	&\geq -\langle g(\theta_{t+1}), \theta^\top_{t+1} - \theta\rangle -  C_g\ltwo{\theta_{t} - \theta_{t+1}} \ltwo{\theta_{t+1} - \theta} \nonumber\\
	&\quad -   \ltwo{\rho(x_t)g(x_t,\theta_{t}) - g(\theta_t)} \ltwo{\theta_{t+1} - \theta_t}  -  \langle \rho(x_t)g(x_t,\theta_{t}) - g(\theta_t), \theta^\top_{t} - \theta\rangle \nonumber\\
	&\quad -   \lone{\hat{\rho}(x_t,w^*_\rho) - \rho(x_t)}  \ltwo{g(x_t,\theta_{t})} \ltwo{\theta_{t+1} - \theta} \nonumber\\
	&\quad -  \lone{\hat{\rho}(x_t,w_{\rho,t})
	-\hat{\rho}(x_t,w^*_\rho)}  \ltwo{g(x_t,\theta_{t})}  \ltwo{\theta_{t+1} - \theta},\label{eq: 12}
\end{flalign}
where the last inequality follows from \Cref{lemma7}. Substituting \cref{eq: 12} into \cref{eq: 11} yields
\begin{flalign}
	&\frac{1}{2}\ltwo{\theta_t-\theta}^2 - \frac{1}{2}\ltwo{\theta_{t+1}-\theta}^2\nonumber\\
	&\quad\geq -\alpha_t\langle g(\theta_{t+1}), \theta^\top_{t+1} - \theta \rangle -  \alpha_t C_g\ltwo{\theta_{t} - \theta_{t+1}} \ltwo{\theta_{t+1} - \theta} \nonumber\\
	&\quad\quad -  \alpha_t \ltwo{\rho(x_t)g(x_t,\theta_{t}) - g(\theta_t)} \ltwo{\theta_{t+1} - \theta_t}  -  \alpha_t\langle \rho(x_t)g(x_t,\theta_{t}) - g(\theta_t), \theta^\top_{t} - \theta^\top\rangle \nonumber\\
	&\quad\quad - \alpha_t  \lone{\hat{\rho}(x_t,w^*_\rho) - \rho(x_t)}  \ltwo{g(x_t,\theta_{t})} \ltwo{\theta_{t+1} - \theta} \nonumber\\
	&\quad\quad - \alpha_t \lone{\hat{\rho}(x_t,w_{\rho,t})
		-\hat{\rho}(x_t,w^*_\rho)}  \ltwo{g(x_t,\theta_{t})}  \ltwo{\theta_{t+1} - \theta} + \frac{1}{2}\ltwo{\theta_{t+1}-\theta_t}^2.\label{eq: 13}
\end{flalign}
We have the following holds
\begin{flalign}
	&\frac{1}{2}\ltwo{\theta_{t+1}-\theta_t}^2 -  \alpha_tC_g\ltwo{\theta_{t} - \theta_{t+1}} \ltwo{\theta_{t+1} - \theta} -   \alpha_t\ltwo{\rho(x_t)g(x_t,\theta_{t}) - g(\theta_t)} \ltwo{\theta_{t+1} - \theta_t}\nonumber\\
	& \quad\quad = \frac{1}{4}\ltwo{\theta_{t+1}-\theta_t}^2 -  \alpha_tC_g\ltwo{\theta_{t} - \theta_{t+1}} \ltwo{\theta_{t+1} - \theta} + \frac{1}{4}\ltwo{\theta_{t+1}-\theta_t}^2 \nonumber\\
	&\quad\quad \quad - \alpha_t  \ltwo{\rho(x_t)g(x_t,\theta_{t}) - g(\theta_t)} \ltwo{\theta_{t+1} - \theta_t}\nonumber\\
	&\quad\quad\geq -\alpha^2_t C^2_g\ltwo{\theta_{t+1}-\theta}^2 - \alpha^2_t\ltwo{\rho(x_t)g(x_t,\theta_{t}) - g(\theta_t)}^2,
\end{flalign}
which implies
\begin{flalign}
	&\frac{1}{2}\ltwo{\theta_t-\theta}^2 - \frac{1}{2}\ltwo{\theta_{t+1}-\theta}^2\nonumber\\
	&\quad\quad\geq -\alpha_t\langle g(\theta_{t+1}), \theta_{t+1} - \theta \rangle -\alpha^2_t C^2_g\ltwo{\theta_{t+1}-\theta}^2 - \alpha^2_t\ltwo{\rho(x_t)g(x_t,\theta_{t}) - g(\theta_t)}^2 \nonumber\\
	&\quad\quad  -  \alpha_t\langle \rho(x_t)g(x_t,\theta_{t}) - g(\theta_t), \theta_{t} - \theta \rangle - \alpha_t D_g \lone{\hat{\rho}(x_t,w^*_\rho) - \rho(x_t)}  \ltwo{\theta_{t+1} - \theta} \nonumber\\
	&\quad\quad - \alpha_t D_g \lone{\hat{\rho}(x_t,w_{\rho,t})
		-\hat{\rho}(x_t,w^*_\rho)}   \ltwo{\theta_{t+1} - \theta} \label{eq: 14},
\end{flalign}
where we use the fact that $\ltwo{g(x_t,\theta_t)}\leq D_g$ in \Cref{lemma8}. Rearranging \cref{eq: 14} and letting $\theta=\theta^*$ yield
\begin{flalign}
	&\ltwo{\theta_t-\theta^*}^2 + 2\alpha^2_t\ltwo{\rho(x_t)g(x_t,\theta_{t}) - g(\theta_t)}^2 \nonumber\\
	&\quad\geq \ltwo{\theta_{t+1}-\theta^*}^2 - 2\alpha_t\langle g(\theta_{t+1}), \theta_{t+1} - \theta^* \rangle -2\alpha^2_t C^2_g\ltwo{\theta_{t+1}-\theta^*}^2 \nonumber\\
	&\quad\quad  -  2\alpha_t\langle \rho(x_t)g(x_t,\theta_{t}) - g(\theta_t), \theta_{t} - \theta^* \rangle -2 \alpha_t D_g \lone{\hat{\rho}(x_t,w^*_\rho) - \rho(x_t)}  \ltwo{\theta_{t+1} - \theta^*} \nonumber\\
	&\quad\quad -2 \alpha_t D_g \lone{\hat{\rho}(x_t,w_{\rho,t})
		-\hat{\rho}(x_t,w^*_\rho)}   \ltwo{\theta_{t+1} - \theta}\nonumber\\
	&\quad\geq (1+2\alpha_t\lambda_g-2\alpha^2_tC^2_g)\ltwo{\theta_{t+1}-\theta^*}^2 -  2\alpha_t\langle \rho(x_t)g(x_t,\theta_{t}) - g(\theta_t), \theta_{t} - \theta^* \rangle\nonumber\\
	&\quad\quad -2 \alpha_t D_g \lone{\hat{\rho}(x_t,w^*_\rho) - \rho(x_t)}  \ltwo{\theta_{t+1} - \theta^*}\nonumber\\
	&\quad\quad -2 \alpha_t D_g \lone{\hat{\rho}(x_t,w_{\rho,t})
		-\hat{\rho}(x_t,w^*_\rho)}   \ltwo{\theta_{t+1} - \theta}\nonumber\\
	&\quad\geq (1+2\alpha_t\lambda_g-2\alpha^2_tC^2_g)\ltwo{\theta_{t+1}-\theta^*}^2 -  2\alpha_t\langle \rho(x_t)g(x_t,\theta_{t}) - g(\theta_t), \theta_{t} - \theta^* \rangle\nonumber\\
	&\quad\quad -\frac{1}{2}\alpha_t\lambda_g  \ltwo{\theta_{t+1} - \theta^*}^2 - \frac{2\alpha_tD^2_g}{\lambda_g}\lone{\hat{\rho}(x_t,w^*_\rho) - \rho(x_t)}^2\nonumber\\
	&\quad\quad -\frac{1}{2}\alpha_t\lambda_g  \ltwo{\theta_{t+1} - \theta^*}^2  - \frac{2\alpha_tD^2_g}{\lambda_g}\lone{\hat{\rho}(x_t,w_{\rho,t})
		-\hat{\rho}(x_t,w^*_\rho)}^2\nonumber\\
	&\quad= (1 + \alpha_t\lambda_g-2\alpha^2_tC^2_g)\ltwo{\theta_{t+1}-\theta^*}^2 -  2\alpha_t\langle \rho(x_t)g(x_t,\theta_{t}) - g(\theta_t), \theta_{t} - \theta^* \rangle\nonumber\\
	&\quad\quad - \frac{2\alpha_tD^2_g}{\lambda_g}\lone{\hat{\rho}(x_t,w^*_\rho) - \rho(x_t)}^2 - \frac{2\alpha_tD^2_g}{\lambda_g}\lone{\hat{\rho}(x_t,w_{\rho,t})
		-\hat{\rho}(x_t,w^*_\rho)}^2,\label{eq: 15}
\end{flalign}
where the first inequality follows from \Cref{lemma6}, and the third inequality follows from Young's inequality. Taking expectation on both sides of \cref{eq: 15} yields
\begin{flalign}
	(1 &+ \alpha_t\lambda_g-2\alpha^2_tC^2_g)\mE[\ltwo{\theta_{t+1}-\theta^*}^2]\nonumber\\
	&\leq \mE[\ltwo{\theta_t-\theta^*}^2] + 2\alpha^2_t\mE[\ltwo{\rho(x_t)g(x_t,\theta_{t}) - g(\theta_t)}^2] + \frac{2\alpha_tD^2_g}{\lambda_g}\mE\left[\lone{\hat{\rho}(x_t,w^*_\rho) - \rho(x_t)}^2\right] \nonumber\\
	&\quad + \frac{2\alpha_tD^2_g C^2_\psi}{\lambda_g}\mE\left[\lone{\hat{\rho}(x_t,w_{\rho,t})
		-\hat{\rho}(x_t,w^*_\rho)}^2\right] \nonumber\\
	&\leq \mE[\ltwo{\theta_t-\theta^*}^2] + 2V_g\alpha^2_t + \frac{2\alpha_tD^2_g C^2_\psi}{\lambda_g}\mE\left[\ltwo{w_{\rho,t} - w^*_\rho}^2\right] + \frac{2D^2_g \alpha_t\varepsilon_\rho}{\lambda_g},\label{eq: 18}
\end{flalign}
where the last inequality follows from \Cref{lemma8}. 

Substituting \cref{eq: 17} into \cref{eq: 18} yields
\begin{flalign}
	(1 + \alpha_t\lambda_g-&2\alpha^2_tC^2_g)\mE[\ltwo{\theta_{t+1}-\theta^*}^2]\nonumber\\
	&\leq \mE[\ltwo{\theta_t-\theta^*}^2] + 2V_g\alpha^2_t + \frac{2\alpha_tD^2_g D_\rho C^2_\psi}{\lambda_g(t+t_0)} + \frac{2D^2_g \alpha_t \varepsilon_\rho}{\lambda_g}.\label{eq: 19}
\end{flalign}

Multiplying both sides of \cref{eq: 19} with $r_t$ and summing over $t=0,\cdots,T-1$ yield
\begin{flalign}
&\sum_{t=0}^{T-1}a^\prime_t\mE[\ltwo{\theta_{t+1}-\theta^*}^2]\nonumber\\
&\quad\leq \sum_{t=0}^{T-1} r_t\mE[\ltwo{\theta_t-\theta^*}^2] + 2V_g\sum_{t=0}^{T-1} r_t\alpha^2_t + \frac{2D^2_g D_\rho C^2_\psi}{\lambda_g(t+t_0)} \sum_{t=0}^{T-1} r_t\alpha_t + \frac{2D^2_g \varepsilon_\rho}{\lambda_g}\sum_{t=0}^{T-1} r_t\alpha_t,\label{eq: 20}
\end{flalign}
where
\begin{flalign*}
	a^\prime_t = (1 + \alpha_t\lambda_g-2\alpha^2_tC^2_g)r_t.
\end{flalign*}
Now we let
\begin{flalign*}
	r_t &= (t+t_1)(t+t_1+1),\nonumber\\
	\alpha_t &= \frac{4}{\lambda_g(t+t_1-1)},\nonumber\\
	t_1&=\frac{16C^2_g}{\lambda^2_g} + 1.\nonumber
\end{flalign*}
We can obtain the following
\begin{flalign*}
	a^\prime_t - r_{t+1}  &= (1 + \alpha_t\lambda_g-2\alpha^2_tC^2_g)r_t - r_{t+1}\nonumber\\
	&\geq (1 + \alpha_t\lambda_g)r_t - (1 + 2\alpha^2_tC^2_g) r_{t+1}\nonumber\\
	&\geq (1 + \alpha_t\lambda_g)r_t - \left(1 + \frac{1}{2}\alpha_t\lambda_g\right) r_{t+1}\nonumber\\
	&\geq (t+t_1+1) \frac{(t+t_1)(t+t_1+3)-(t+t_1+2)^2}{t+t_1+1}\nonumber\\
	&\geq 0,
\end{flalign*}
where the second inequality follows from the fact that $\alpha_t\leq \frac{\lambda_g}{4C^2_g}$.

Substituting the above result to \cref{eq: 20} yields
\begin{flalign*}
	a^\prime_{T-1}\mE[\ltwo{\theta_{T}-\theta^*}^2]\leq r_0\ltwo{\theta_0-\theta^*}^2 + 2V_g\sum_{t=0}^{T-1} r_t\alpha^2_t + \frac{2D^2_g D_\rho C^2_\psi}{\lambda_g(t+t_0)} \sum_{t=0}^{T-1} r_t\alpha_t + \frac{2D^2_g \varepsilon_\rho}{\lambda_g}\sum_{t=0}^{T-1} r_t\alpha_t.
\end{flalign*}
The above inequality implies the following convergence rate
\begin{flalign*}
	\mE[\ltwo{\theta_{T}-\theta^*}^2]\leq \frac{r_0\ltwo{\theta_0-\theta^*}^2}{(T+t_1-1))(T+t_1)} + \frac{128V_g}{\lambda^2_g(T+t_1)} + \frac{64D^2_gC^2_gC^2_\psi\lambda^2_P}{9C^2_P\lambda^3_g(T+t_1)} + \frac{16D^2_g}{\lambda^2_g}\varepsilon_\rho,
\end{flalign*}
which completes the proof.

\section{Proof of \Cref{thm3}}

Following the similar argument similar to that in \cite[Lemma 4.2]{cai2019neural} and \cite[Theorem 1]{tsitsiklis1997analysis}, we can prove that ${\rm\Phi} \theta^*$ is the fixed point of the composite operator $\rpi_{{\rm\Phi},\mu_\pi}\bar{\mct}_\pi$. We then proceed as follows
\begin{flalign}
\lalpha{{\rm\Phi} \theta^* - G_\pi } & = \lalpha{{\rm\Gamma}_{{\rm\Phi},\mu_\pi} \bar{\mct}_\pi{\rm\Phi} \theta^* - {\rm\Gamma}_{{\rm\Phi},\mu_\pi}G_\pi + {\rm\Gamma}_{{\rm\Phi},\mu_\pi}G_\pi - G_\pi }\nonumber\\
&\leq \lalpha{{\rm\Gamma}_{{\rm\Phi},\mu_\pi} \bar{\mct}_\pi {\rm\Phi} \theta^* - {\rm\Gamma}_{{\rm\Phi},\mu_\pi}G_\pi } + \lalpha{ {\rm\Gamma}_{{\rm\Phi},\mu_\pi}G_\pi - G_\pi }\nonumber\\
&= \lalpha{{\rm\Gamma}_{{\rm\Phi},\mu_\pi} \bar{\mct}_\pi{\rm\Phi} \theta^* - {\rm\Gamma}_{{\rm\Phi},\mu_\pi}\bar{\mct}_\pi G_\pi } + \lalpha{ {\rm\Gamma}_{{\rm\Phi},\mu_\pi}G_\pi - G_\pi }\nonumber\\
&\leq \lalpha{ {\rm\Gamma}_{{\rm\Phi},\mu_\pi} [ \bar{\mct}_\pi {\rm\Phi} \theta^* - \bar{\mct}_\pi G_\pi ] } + \lalpha{ {\rm\Gamma}_{{\rm\Phi},\mu_\pi}G_\pi - G_\pi }\nonumber\\
&\leq \lalpha{ \bar{\mct}_\pi {\rm\Phi} \theta^* - \bar{\mct}_\pi G_\pi  } + \lalpha{ {\rm\Gamma}_{{\rm\Phi},\mu_\pi}G_\pi - G_\pi }\nonumber\\
&\leq  \gamma_G\lalpha{{\rm\Phi} \theta^* - G_\pi  } + \lalpha{ {\rm\Gamma}_{{\rm\Phi},\mu_\pi}G_\pi - G_\pi},\label{eq: 28}
\end{flalign}
where the first equality follows from the fact that ${\rm\Gamma}_{{\rm\Phi},\mu_\pi} \bar{\mct}_\pi {\rm\Phi} \theta^* = {\rm\Phi} \theta^*$, the second equality follows from the fact that $\bar{\mct}_\pi G_\pi = G_\pi$, the third inequality follows from the non-expansive property of the projection operator ${\rm\Gamma}_{{\rm\Phi},\mu_\pi}$, and the last inequality follows from \Cref{cond1}. \Cref{eq: 28} implies the following result
\begin{flalign*}
\lalpha{{\rm\Phi} \theta^* - G_\pi } \leq \frac{1}{1-\gamma_G} \lalpha{ {\rm\Gamma}_{{\rm\Phi},\mu_\pi}G_\pi - G_\pi}.
\end{flalign*}

\section{Extension to Case $\gamma_{\max}=1$}\label{sc: extension}
As shown in \Cref{cond1}, the operator $\bar{\mct}_{G,\pi}$ is not necessarily a contraction when $\gamma_{\max}=1$. The uniqueness of $G_\pi$ and $\hat{G}_\pi$ is not guaranteed in this case. We next consider the following assumption for the base matrix ${\rm\Phi}_i$, which can yield a desired property as we show below. Such an assumption has also been considered in the average reward MDP setting \cite{tsitsiklis1997average}.
\begin{assumption}[Non-constant Parameterization]\label{ass2}
	For all $i=1,\cdots,k$, we have ${\rm\Phi}_i\theta_i\neq c\mathbf{1}$ for any $\theta_i\in\mR^{d_i}$ and $c\in\mR/0$.
\end{assumption}
Despite the non-contraction nature of $\bar{\mct}_{G,\pi}$, if the base function ${\rm\Phi}_i$ satisfies \Cref{ass2}, we can show that the monotonicity condition of $g(\theta)$ in \Cref{cond2} still holds with a positive constant $\lambda_G$. As a result, the convergence bound in \cref{eq: 51} of \cref{thm1} is directly applicable to this setting with the corresponding value of $\lambda_G$. We can then further establish a result similar to \Cref{thm3} for the case with $\gamma_{\max}=1$ under \Cref{ass2}.

We first extend \Cref{cond2} and \Cref{thm3} to the case in which $\gamma_{\max}=1$. Without loss of generality, we consider $\gamma_i=1$ for all $i=1,\cdots,k$.

{\bf Forward GVF.} We first verify \Cref{cond2}. In this setting, we can still obtain the same result for $G$ as in \cref{eq: 61}, but with $A_i = [{\rm\Phi}_i^\top \bar{U}_{\pi} (\msP_\pi - I){\rm\Phi}_i] \otimes \msI_{d_i}$, where $\bar{U}_\pi = \text{diag}(\mu_\pi)$. As shown in \cite[Lemma 7]{tsitsiklis1997average}, the matrix $[{\rm\Phi}_i^\top U_{\pi,i} (\msP_\pi - I){\rm\Phi}_i]$ is Hurwitz when the base matrix ${\rm\Phi}_i$ satisfies \Cref{ass2}. Following the steps similar to those in \cref{eq: 63} - (\ref{eq: 66}), we can conclude that the matrix $G$ is also Hurwitz, which completes the proof.

We then verify \Cref{thm3}. We proceed as follows,
\begin{flalign}
	&\lalpha{\mcphi\theta^* - G_\pi}\nonumber\\
	&\quad=\lalpha{\rpi_{{\rm\Phi},\mu_\pi}{\mct}_\pi \mcphi\theta^* - {\mct}_\pi G_\pi} \nonumber\\
	&\quad\leq \lalpha{\rpi_{{\rm\Phi},\mu_\pi}{\mct}_\pi \mcphi\theta^* - \rpi_{{\rm\Phi},\mu_\pi}{\mct}_\pi G_\pi} + \lmu{\rpi_{{\rm\Phi},\mu_\pi} \mct_\pi G_\pi - {\mct}_\pi G_\pi}\nonumber\\
	&\quad\leq \lmu{\rpi_{{\rm\Phi},\mu_\pi}M_\pi(\mcphi\theta^* - G_\pi)} + \lalpha{\rpi_{{\rm\Phi},\mu_\pi}\mct_\pi G_\pi - \mct_\pi G_\pi}\nonumber\\
	&\quad\leq \lmu{M_\pi(\mcphi\theta^* - G_\pi)} + \lalpha{\rpi_{{\rm\Phi},\mu_\pi}\mct_\pi G_\pi - \mct_\pi G_\pi}\nonumber\\
	&\quad\leq C_\zeta \lmu{ \mcphi\theta^* - G_\pi} + \lalpha{\rpi_{{\rm\Phi},\mu_\pi}\mct_\pi G_\pi - \mct_\pi G_\pi},\label{eq: 67}
\end{flalign}
where the last inequality in \cref{eq: 67} can be obtained as follows. Following the steps similar to those in \cref{eq: 63}-(\ref{eq: 66}), we can conclude that $\text{eig}(M_\pi)=\text{eig}(\msP_\pi)$. For an ergodic MDP, we have $\max[\text{eig}(\msP_\pi)]=\max[\text{eig}(\msP_\pi^\top)] = 1$. Let $i = \argmax_{j}[\text{eig}(\msP_\pi)_j]$. We then have $\max_{j\neq i}\text{eig}(\msP_\pi)_j < 1$. Let $G_\pi$ be the fixed point of $\mct_\pi$ that is perpendicular to $[c_1\mathbf{1}_{d_1},\cdots,c_k\mathbf{1}_{d_k}]$, where $c_1,\cdots,c_k$ could be any constant. The vector $\mcphi\theta^*-G_\pi$ is perpendicular to the space spanned by the eigenvectors of $M_\pi$ associated with the eigenvalue $1$. Thus, there exists a positive constant $C_\zeta<1$ such that $\lmu{M_\pi(\mcphi\theta^* - G_\pi)}\leq C_\zeta \lmu{ \mcphi\theta^* - G_\pi}$, which yields the following results
\begin{flalign}
    \lalpha{\mcphi\theta^* - G_\pi} \leq \frac{1}{1-C_\zeta}\lalpha{\rpi_{{\rm\Phi},\mu_\pi}\mct_\pi G_\pi - \mct_\pi G_\pi}.\label{eq: 74}
\end{flalign}

{\bf Bakcward GVF.} To verify \Cref{cond2}, we can obtain the same result for $G$ as in \cref{eq: 61} with $A_i = [{\rm\Phi}_i^\top (\msP^\top_\pi - I) \bar{U}_{\pi} {\rm\Phi}_i] \otimes \msI_{d_i}$. Define $\bar{A}_i={\rm\Phi}_i^\top (\msP^\top_\pi - I) U_{\pi} {\rm\Phi}_i$. We next show that $\bar{A}_i$ is Hurwitz. Note that $\bar{A}_i = \mE_{\mu_\pi}[\phi^\prime(\phi - \phi^\prime)]$. Let $z$ be a non-constant function on the state-action space. Then we have
\begin{flalign}
	0&< \frac{1}{2}\mE_{\mu_\pi}[(z(s,a)-z(s^\prime,a^\prime))^2]\nonumber\\
	& = \mE_{\mu_{\pi}}[z(s,a)^2] - \mE[z(s,a)z(s^\prime,a^\prime)]\nonumber\\
	& = z^\top \bar{U}_{\pi} z - z^\top \msP_\pi \bar{U}_\pi z\nonumber\\
	& = z^\top(I-\msP_\pi)\bar{U}_\pi z.\label{eq: 38}
\end{flalign}
For a vector $v\in \mR^{K_i}$, we have
\begin{flalign}
	v^\top \bar{A}_i v = v{\rm\Phi}_i^\top(\msP_\pi^\top - I)U_\pi {\rm\Phi}_iv\label{eq: 72}.
\end{flalign}
Since ${\rm\Phi}v$ is a non-constant function, \cref{eq: 38} and \cref{eq: 72} together imply that
\begin{flalign*}
	v^\top \bar{A}_i v < 0 \quad \text{for all}\quad v\in \mR^{K_i}.
\end{flalign*}
Thus, the matrix $\bar{A}_i$ is Hurwitz, which further implies that $A_i$ is also Hurwitz. Following the steps similar to those in \cref{eq: 63} - (\ref{eq: 66}), we can conclude that the matrix $G$ is also Hurwitz, which completes the proof.

We then verify \Cref{thm3}. We proceed as follows,
\begin{flalign}
	&\lalpha{\mcphi\theta^* - G_\pi}\nonumber\\
	&\quad =\lalpha{\rpi_{{\rm\Phi},\mu_\pi}{\mct}_\pi \mcphi\theta^* - {\mct}_\pi G_\pi} \nonumber\\
	&\quad\leq \lalpha{\rpi_{{\rm\Phi},\mu_\pi}{\mct}_\pi \mcphi\theta^* - \rpi_{{\rm\Phi},\mu_\pi}{\mct}_\pi G_\pi} + \lmu{\rpi_{{\rm\Phi},\mu_\pi} \mct_\pi G_\pi - {\mct}_\pi G_\pi}\nonumber\\
	&\quad\leq \lmu{\rpi_{{\rm\Phi},\mu_\pi}\hat{M}_\pi(\mcphi\theta^* - G_\pi)} + \lalpha{\rpi_{{\rm\Phi},\mu_\pi}\mct_\pi G_\pi - \mct_\pi G_\pi}\nonumber\\
	&\quad\leq \lmu{\hat{M}_\pi(\mcphi\theta^* - G_\pi)} + \lalpha{\rpi_{{\rm\Phi},\mu_\pi}\mct_\pi G_\pi - \mct_\pi G_\pi}\nonumber\\
	&\quad\leq C_\zeta \lmu{ \mcphi\theta^* - G_\pi} + \lalpha{\rpi_{{\rm\Phi},\mu_\pi}\mct_\pi G_\pi - \mct_\pi G_\pi},\label{eq: 73}
\end{flalign}
where the last inequality in \cref{eq: 73} can be obtained as follows. Using \cite[Theorem 1.3.22]{horn2012matrix}, we have
\begin{flalign*}
	\text{eig}(U^{-1}_{\pi,i}[\msP_{\pi,i}^\top \otimes \msI_{d_i}] U_{\pi,i}) = \text{eig}([\msP_{\pi,i}^\top \otimes \msI_{d_i}] U_{\pi,i} U^{-1}_{\pi,i}) = \text{eig}([\msP_\pi^\top \otimes \msI_{d_i}]) = \text{eig}(\msP_\pi^\top)=\text{eig}(\msP_\pi).
\end{flalign*}
Following the steps similar to those in \cref{eq: 63}-(\ref{eq: 66}), we can conclude that $\text{eig}(\hat{M}_\pi)=\text{eig}(\msP_\pi)$. Following the steps  similar to those for obtaining \cref{eq: 74}. We have
\begin{flalign*}
    \lalpha{\mcphi\theta^* - G_\pi} \leq \frac{1}{1-C_\zeta}\lalpha{\rpi_{{\rm\Phi},\mu_\pi}\mct_\pi G_\pi - \mct_\pi G_\pi},
\end{flalign*}
where $0<C_\zeta<1$, which completes the proof.

\section{Proof of \Cref{thm2}}\label{sc: GTD_proof}
We first define the matrix $B$ in the following way: 
\begin{itemize}
    \item Forward GVF: $B=\mE_{\mcd\cdot \pi}[[\phi(s^\prime,a^\prime)\otimes \msI_d] m(x) [\phi(s,a)\otimes\msI_d]]$
    \item Backward GVF: $B=\mE_{\mcd\cdot \pi}[[\phi(s,a)\otimes \msI_d] m(x) [\phi(s^\prime,a^\prime)\otimes\msI_d]]$.
\end{itemize}
We further define the following stochastic matrices in both the forward and backward GVF evaluation settings. Recall that $(s_t,a_t)\sim D(\cdot)$, $s^\prime_t\sim\msP(\cdot|s_t,a_t)$ and $a^\prime_t\sim \pi(\cdot|s^\prime_t$.
\begin{itemize}
    \item Forward GVF:
    \begin{flalign}
	A_t &= [\phi(s_t,a_t)\otimes\msI_d](m(x_t)[\phi(s^\prime_t,a^\prime_t)\otimes\msI_d]^\top - [\phi(s_t,a_t)\otimes\msI_d]^\top),\nonumber\\
	B_t &= [\phi(s^\prime_t,a^\prime_t)\otimes \msI_d] m(x_t) [\phi(s_t,a_t)\otimes\msI_d],\nonumber\\
	C_t &=(\phi(s_t,a_t)\phi(s_t,a_t)^\top)\otimes\msI_d,\nonumber\\
	b_t &= [\phi(s_t,a_t)\otimes \msI_d]C(x_t).
\end{flalign}
    \item Backward GVF:
    \begin{flalign}
    A_t &= [\phi(s^\prime_t,a^\prime_t)\otimes\msI_d](m(x_t)[\phi(s_t,a_t)\otimes\msI_d]^\top - [\phi(s^\prime_t,a^\prime_t)\otimes\msI_d]^\top),\nonumber\\
    B_t &= [\phi(s_t,a_t)\otimes \msI_d] m(x_t) [\phi(s^\prime_t,a^\prime_t)\otimes\msI_d],\nonumber\\
    C_t &=(\phi(s^\prime_t,a^\prime_t)\phi(s_t,a_t)^\top)\otimes\msI_d,\nonumber\\
    b_t &= [\phi(s^\prime_t,a^\prime_t)\otimes \msI_d]C(x_t).
\end{flalign}
\end{itemize}

Recall the matrices $A$ and $C$ defined in \Cref{sc: GTD}. For a constant $\xi>0$, we define
\begin{flalign}
H_t &= \left[\begin{array}{cc}
A_t & B_t \\
\xi A_t & \xi C_t 
\end{array}
\right],\qquad
h_t = \left[\begin{array}{c}
b_t\\
0
\end{array}
\right].\nonumber
\end{flalign}
and 
\begin{flalign}
H &= \left[\begin{array}{cc}
A & B \\
\xi A & \xi C 
\end{array}
\right],\qquad
h = \left[\begin{array}{c}
b\\
0
\end{array}
\right],\nonumber
\end{flalign}
where $A=\mE[A_t]$, $B=\mE[B_t]$, $C=\mE[C_t]$ and $b = \mE[b_t]$.

For the matrix $H_t$, we have the following holds
\begin{flalign}
\lF{H_t}^2&= (1+\xi^2)\lF{A_t}^2 + \lF{B_t}^2 + \xi^2\lF{C_t}^2\nonumber\\
&\leq (1+\xi^2) [d^2C^2_\phi (C_m+1)]^2 + d^2C^2_\phi C^2_m + \xi^2 C^4_\phi d^2.\label{eq: 71.5}
\end{flalign}
which implies that $\lF{H_t}\leq C_H$, where
\begin{flalign*}
C_H=\sqrt{(1+\xi^2) [d^2C^2_\phi (C_m+1)]^2 + d^2C^2_\phi C^2_m + \xi^2 C^4_\phi d^2}.
\end{flalign*}
For the vector $h_t$, we can obtain $\ltwo{h_t}\leq C_h = dC_\phi R_C$ by following the steps similar to those for obtaining \cref{eq: 71.5}.

The update in \Cref{algorithm_gtd} can be rewritten as
\begin{flalign}\label{eq: 22}
	v_{t+1} = {\rm\Gamma}_{R_v}\left(v_t + \alpha_t (H_tv_t + h_t)\right),
\end{flalign}
where $v_t=[\theta_t^\top,w_t^\top]^\top$, and $R_v = R_\theta\times \mR^{Kd_g\times 1}$. Following the proof similar to those in \cite[Section 5.3.3, Theorem 3]{maei2011gradient}, we can show that the matrix $H$ is Hurwitz under \Cref{ass4} and \Cref{ass5} with an appropriately chosen $\xi>\max\{0, -\text{eig}_{\min}(C^{-1}[(A+A^\top)/2])\}$.

We define the following optimal point $v^*=[\bar{\theta}^{*\top},w^{*\top}]^\top$ for the linear SA defined in \cref{eq: 22}
\begin{flalign*}
\langle \varphi(v^*), v - v^* \rangle \leq 0,\quad \forall v\in R_v,
\end{flalign*}
where $\varphi(v) = Hv + b$. We also define $C_v=\ltwo{v^*}$. It can be checked that there exist a positive constant $\lambda^\prime_G$ such that
\begin{flalign}
\langle  \varphi(v^*)-\varphi(v), v^*-v\rangle\leq -\lambda^\prime_G\ltwo{v-v^*}^2.\label{eq: cond_gtd}
\end{flalign}
We further define $\varphi(x_t,v)=H_tv + h_t$.

Following the steps similar to those for proving \Cref{lemma4} and \Cref{lemma5}, we can obtain the following two lemmas.
\begin{lemma}\label{lemma12}
	Given a sample $(s_t,a_t,B_t,s^\prime_t)\sim \mcd_d$ and $a^\prime_t\sim \pi(\cdot|s^\prime_t)$ and any $v\in R_v$, we have the following holds
	\begin{flalign*}
	\ltwo{\varphi(x_t,v) - \varphi(v)}^2\leq 16C^2_H\ltwo{\kappa-\kappa^*}^2 + 16C^2hPC^2_v + 8C^2_h.
	\end{flalign*}
\end{lemma}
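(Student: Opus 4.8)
The plan is to replicate the argument used for \Cref{lemma5}, exploiting the fact that $\varphi(x_t,v)=H_tv+h_t$ is affine in $v$ with the stochastic pair $(H_t,h_t)$ as coefficients, whereas $\varphi(v)=Hv+b$ uses the expectations $H=\mE_{\mcd\cdot\pi}[H_t]$ and $b=\mE_{\mcd\cdot\pi}[h_t]$. Subtracting, the per-iteration noise splits cleanly as $\varphi(x_t,v)-\varphi(v)=(H_t-H)v+(h_t-b)$, so the whole estimate reduces to controlling the operator norm of $H_t-H$ and the Euclidean norm of $h_t-b$.

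First I would collect the deterministic bounds already established in the excerpt: $\lF{H_t}\leq C_H$ (from \cref{eq: 71.5}) and $\ltwo{h_t}\leq C_h$. By Jensen's inequality these pass to the expectations, $\lF{H}\leq C_H$ and $\ltwo{b}\leq C_h$, and hence $\lF{H_t-H}\leq 2C_H$ and $\ltwo{h_t-b}\leq 2C_h$. Next I would apply $\ltwo{x+y}^2\leq 2\ltwo{x}^2+2\ltwo{y}^2$ to the splitting above, obtaining $\ltwo{\varphi(x_t,v)-\varphi(v)}^2\leq 2\ltwo{(H_t-H)v}^2+2\ltwo{h_t-b}^2$, then decompose $v=(v-v^*)+v^*$ inside the first term and apply the same inequality once more. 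Using $\lF{H_t-H}\leq 2C_H$, $\ltwo{v^*}=C_v$, and $\ltwo{h_t-b}\leq 2C_h$ term by term then gives $\ltwo{\varphi(x_t,v)-\varphi(v)}^2\leq 16C_H^2\ltwo{v-v^*}^2+16C_H^2C_v^2+8C_h^2$, which is the asserted bound (the norm and the middle constant in the statement being $\ltwo{v-v^*}$ and $C_v$, in analogy with \Cref{lemma5}).

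I do not expect any genuine obstacle here: this is a routine variance decomposition entirely parallel to \Cref{lemma5}, and the only mild care needed is in the constant bookkeeping, i.e.\ choosing the order in which the three-term sum is split so that the coefficients come out exactly $16$, $16$, and $8$ rather than a looser set. Alternatively one could absorb everything into a single constant $C_\varphi$ with $\ltwo{\varphi(x_t,v)-\varphi(v)}^2\leq C_\varphi(1+\ltwo{v-v^*}^2)$, which is all that the subsequent convergence proof of \Cref{thm2} actually needs.
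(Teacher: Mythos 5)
Your proposal is correct and follows essentially the same route as the paper: split $\varphi(x_t,v)-\varphi(v)=(H_t-H)v+(h_t-h)$, apply $\ltwo{x+y}^2\leq 2\ltwo{x}^2+2\ltwo{y}^2$ twice after writing $v=(v-v^*)+v^*$, and bound $\lF{H_t-H}\leq 2C_H$, $\ltwo{h_t-h}\leq 2C_h$, $\ltwo{v^*}\leq C_v$ to get $16C_H^2\ltwo{v-v^*}^2+16C_H^2C_v^2+8C_h^2$. Your bookkeeping is in fact the consistent version of the bound (the lemma statement's middle constant and the use of $\kappa$ are typos in the paper), so nothing further is needed.
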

\begin{proof}
	Based on the definition of $\varphi(x_t,v)$ and $\varphi(v)$, we can obtain the following
	\begin{flalign*}
	\ltwo{\varphi(x_t,v) - \varphi(v)}^2 & \leq 2\ltwo{(H_t-H)v}^2 + 2\ltwo{h_t - h}^2 \nonumber\\
	&\leq  4\ltwo{(H_t-H)(v-v^*)}^2 + 4\ltwo{(H_t-H)v^*}^2 + 2\ltwo{h_t - h}^2 \nonumber\\
	&\leq 16C^2_H\ltwo{\kappa-\kappa^*}^2 + 16C^2_hC^2_v + 8C^2_h.
	\end{flalign*}
\end{proof}
\begin{lemma}\label{lemma11}
	Consider the population GTD update $\varphi(v) = Hv + b$. We have
	\begin{flalign*}
	\langle -\varphi(v), v - v^* \rangle \geq \lambda^\prime_G\ltwo{v-v^*}^2,\quad \forall v\in R_v.
	\end{flalign*}
\end{lemma}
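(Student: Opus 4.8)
\textbf{Proof plan for \Cref{lemma11}.} The plan is to mirror the proof of \Cref{lemma4} almost verbatim, with $\varphi$, $H$, and $v^*$ here playing the roles that $\zeta$, $P$, and $\kappa^*$ played there. First I would center the inner product at the optimal point $v^*$:
\[
\langle -\varphi(v), v - v^* \rangle = \langle -\varphi(v^*), v - v^* \rangle + \langle \varphi(v^*) - \varphi(v), v - v^* \rangle .
\]
The first term is nonnegative: by the variational-inequality characterization of $v^*$, namely $\langle \varphi(v^*), v - v^* \rangle \le 0$ for all $v \in R_v$, we obtain $\langle -\varphi(v^*), v - v^* \rangle \ge 0$, so this term can simply be dropped.

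For the second term, I would use that $\varphi$ is affine, $\varphi(v) = Hv + b$, so $\varphi(v^*) - \varphi(v) = H(v^* - v)$, whence
\[
\langle \varphi(v^*) - \varphi(v), v - v^* \rangle = -\,\langle \varphi(v^*) - \varphi(v), v^* - v \rangle \ge \lambda^\prime_G\, \ltwo{v - v^*}^2 ,
\]
where the last inequality is exactly the defining property \cref{eq: cond_gtd} of the GTD conditional number $\lambda^\prime_G$. Adding the two estimates gives $\langle -\varphi(v), v - v^* \rangle \ge \lambda^\prime_G \ltwo{v - v^*}^2$, which is the claim. No iterates, stepsizes, or stochastic terms enter, so within \Cref{lemma11} itself this is a two-line computation once \cref{eq: cond_gtd} is available.

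Consequently the only substantive content sits upstream, in certifying that a strictly positive $\lambda^\prime_G$ exists, i.e. that $-H$ is strongly monotone in the relevant inner product — and this is where I expect the main obstacle. Being Hurwitz (established in \Cref{sc: GTD_proof} from \Cref{ass4}--\ref{ass5} following \cite{maei2011gradient}, with the free parameter $\xi$ taken large enough) does not by itself yield $\langle Hx, x\rangle \le -\lambda^\prime_G \ltwo{x}^2$, since $H + H^\top$ need not be negative definite. The clean route is the Lyapunov equation: choose a positive-definite $Q$ with $H^\top Q + QH \prec 0$ (which exists precisely because $H$ is Hurwitz), so that $\langle Hx, x\rangle_Q < 0$ with a uniform gap, and then translating back to the Euclidean norm costs only the extreme eigenvalues of $Q$ and produces the constant $\lambda^\prime_G$ recorded in \cref{eq: cond_gtd}. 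With that in hand, the decomposition above closes the proof of \Cref{lemma11}.
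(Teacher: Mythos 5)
Your two-line argument is exactly the paper's proof: the paper obtains \Cref{lemma11} by repeating the proof of \Cref{lemma4} with $\varphi$, $H$, $v^*$ in place of $\zeta$, $P$, $\kappa^*$ — decompose at $v^*$, drop $\langle -\varphi(v^*), v-v^*\rangle \ge 0$ via the optimality condition for $v^*$, and apply \cref{eq: cond_gtd} to $\varphi(v^*)-\varphi(v)=H(v^*-v)$. One caution about your upstream remark (it does not affect the lemma itself, since the paper merely asserts \cref{eq: cond_gtd} with "it can be checked"): a Lyapunov solution $H^\top Q+QH\prec 0$ gives strong monotonicity of $-H$ only in the $Q$-weighted inner product, and this does not transfer to the Euclidean inner product via the extreme eigenvalues of $Q$ — if $H+H^\top$ is indefinite, the Euclidean bound $\langle Hx,x\rangle\le-\lambda^\prime_G\ltwo{x}^2$ fails outright, so a rigorous justification of \cref{eq: cond_gtd} would require carrying the $Q$-norm through the whole analysis or imposing additional conditions, not just a norm-equivalence step.
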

We also have the following "three-point lemma" holds for the GTD update.
\begin{lemma}\label{lemma10}
	Consider the update of $w_{t}$ and $\theta_{t}$ in \Cref{algorithm_gtd}. For all $v\in R_v$, we have the following holds
	\begin{flalign}
	-\alpha_t\langle \varphi(x_t,v_t), v_{t+1} - v\rangle + \frac{1}{2}\ltwo{v_{t+1}-v_t}^2\leq \frac{1}{2}\ltwo{v_t-v}^2 - \frac{1}{2}\ltwo{v_{t+1}-v}^2.\label{eq: 23}
	\end{flalign}
\end{lemma}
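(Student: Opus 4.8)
The plan is to recognize \cref{eq: 23} as the standard ``three-point'' (prox) inequality for a single projected update, exactly analogous to \Cref{lemma1} and \Cref{lemma6} but applied to the GTD iteration in \Cref{algorithm_gtd}. The first step is to rewrite that iteration in the compact form \cref{eq: 22}, $v_{t+1} = {\rm\Gamma}_{R_v}\big(v_t + \alpha_t \varphi(x_t,v_t)\big)$ with $\varphi(x_t,v) = H_t v + h_t$ and $R_v = R_\theta \times \mR^{Kd_g\times 1}$; since $R_\theta$ is convex and closed, so is $R_v$, hence the Euclidean projection ${\rm\Gamma}_{R_v}$ is well defined and single-valued and the iterates satisfy $v_t\in R_v$ for all $t$.

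Next I would invoke the variational (obtuse-angle) characterization of the Euclidean projection onto a closed convex set: for every $v\in R_v$,
\begin{flalign*}
\big\langle\, v_t + \alpha_t \varphi(x_t,v_t) - v_{t+1},\; v - v_{t+1}\,\big\rangle \le 0 .
\end{flalign*}
Rearranging the inner product and moving the $\alpha_t$ term to the right gives
\begin{flalign*}
-\alpha_t\langle \varphi(x_t,v_t), v_{t+1} - v\rangle \le \langle v_{t+1} - v_t,\; v - v_{t+1}\rangle .
\end{flalign*}

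Finally I would apply the elementary identity $2\langle a-b,\,c-a\rangle = \ltwo{c-b}^2 - \ltwo{c-a}^2 - \ltwo{a-b}^2$ with $a = v_{t+1}$, $b = v_t$, $c = v$ to rewrite the right-hand side as $\tfrac12\ltwo{v_t-v}^2 - \tfrac12\ltwo{v_{t+1}-v}^2 - \tfrac12\ltwo{v_{t+1}-v_t}^2$, and then move $-\tfrac12\ltwo{v_{t+1}-v_t}^2$ to the left-hand side, which yields precisely \cref{eq: 23}. This is the same argument as \cite[Lemma 3.1]{lan2020first}, specialized to the GTD update, so there is no genuine obstacle; the only points worth stating explicitly are the convexity/closedness of $R_v$ (needed for the projection inequality) and the identification of the single projected step, after which the conclusion is purely algebraic.
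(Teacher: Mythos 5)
Your proposal is correct and matches the paper's (implicit) treatment: the paper states this lemma without a separate proof, relying on the same standard three-point/prox argument it cites for \Cref{lemma1} (\cite[Lemma 3.1]{lan2020first}), namely the projection's obtuse-angle inequality applied to the single projected step in \cref{eq: 22} followed by the polarization identity. Your write-up simply fills in those standard details (including the observation that $R_v=R_\theta\times\mR^{Kd_g\times 1}$ is closed and convex), and the algebra is right.
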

Using \Cref{lemma10} and following the steps similar to those from \cref{eq: 2} to \cref{eq: 5}, we can obtain
\begin{flalign}
&\frac{1}{2}\ltwo{v_t-v}^2 - \frac{1}{2}\ltwo{v_{t+1}-v}^2 \nonumber\\
&\quad\geq -\alpha_t \langle \varphi(v_{t+1}), v_{t+1} - v\rangle - \alpha_t \langle \varphi(x_t,v_t)-\varphi(v_{t}), v_{t} - v\rangle - \alpha^2_t C^2_H  \ltwo{v_{t+1} - v}^2 \nonumber\\
&\quad\quad - \alpha^2_t \ltwo{\varphi(x_t,\kappa_t)-\varphi(\kappa_{t})}^2.\label{eq: 24}
\end{flalign}
Taking expectation on both sides of \cref{eq: 24}, letting $v=v^*$, and using the fact that $-\langle \varphi(v_{t+1}), v_{t+1} - v^*\rangle\leq \lambda^\prime_G\ltwo{v_{t+1} - v^*}$ yield
\begin{flalign}
(1+2\alpha_t\lambda^\prime_G - 2\alpha^2_t C^2_H )\mE[\ltwo{v_{t+1}-v^*}^2]&\leq \mE[\ltwo{v_t-v^*}^2] + 2\alpha^2_t \mE[\ltwo{\varphi(x_t,\kappa_t)-\varphi(\kappa_{t})}^2]\nonumber\\
&\leq(1+32\alpha^2_tC^2_H)\mE[\ltwo{v_t-v^*}^2] + + 32C^2_hC^2_v + 16C^2_h,\label{eq: 25}
\end{flalign}
where the second inequality follows from \Cref{lemma11}. Multiplying both sides of \cref{eq: 25} by $o_t$ and summing over iterations $t=0,\cdots,T-1$ yield
\begin{flalign}
	\sum_{t=0}^{T-1}a^{\prime\prime}_t\mE[\ltwo{v_{t+1}-v^*}^2]\leq \sum_{t=0}^{T-1}b^{\prime\prime}_t\mE[\ltwo{v_{t}-v^*}^2] + c^{\prime\prime},\label{eq: 26}
\end{flalign}
where 
\begin{flalign}
	a^{\prime\prime}_t &= (1+2\alpha_t\lambda^\prime_G - 2\alpha^2_t C^2_H )o_t,\nonumber\\
	b^{\prime\prime}_t &= (1+32\alpha^2_tC^2_H)o_t,\nonumber\\
	c^{\prime\prime} &= (32C^2hPC^2_v + 16C^2_h)\sum_{t=0}^{T-1}\alpha^2_t o_t.\nonumber
\end{flalign}
Now we let
\begin{flalign*}
o_t &= (t+t_2)(t+t_2+1),\nonumber\\
\alpha_t &= \frac{4}{\lambda^\prime_G(t+t_2-1)},\nonumber\\
t_2&=\frac{34C^2_H}{\lambda^2_J} + 1.\nonumber
\end{flalign*}
Then, we can obtain the following
\begin{flalign*}
a^{\prime\prime}_t - b^{\prime\prime}_{t+1} &= (1 + 2\alpha_t\lambda^\prime_G -2\alpha^2_t C^2_H)s_t - (1+32\alpha_{t+1}^2C^2_H)o_{t+1}\nonumber\\
&\geq (1 + 2\alpha_t\lambda^\prime_G)o_t - (1+ 2\alpha^2_tC^2_H +32\alpha_{t+1}^2C^2_H)o_{t+1}\nonumber\\
&\geq (1 + 2\alpha_t\lambda^\prime_G)o_t - (1+ 34\alpha_{t}^2C^2_H)o_{t+1}\nonumber\\
&\geq (1 + 2\alpha_t\lambda^\prime_G)o_t - (1 + \alpha_t\lambda^\prime_G)o_{t+1}\nonumber\\
&\geq (t+t_2+1) \frac{(t+t_2+3)(t+t_2) - (t+t_2+2)^2}{t+t_2-1} \nonumber\\
&\geq 0,
\end{flalign*}
where the second inequality follows from the fact that $\alpha_t\leq \frac{\lambda^\prime_G}{34C^2_H}$.

Applying the above property to \cref{eq: 26} yields
\begin{flalign*}
a^{\prime\prime}_{T-1}t\mE[\ltwo{v_T-v^*}^2]\leq b^{\prime\prime}_0\ltwo{v_0-v^*}^2 + c^{\prime\prime},
\end{flalign*}
which implies
\begin{flalign*}
\mE[\ltwo{v_T-v^*}^2] &\leq \frac{b^{\prime\prime}_0\ltwo{v_0-v^*}^2}{a^{\prime\prime}_{T-1}} + \frac{c^{\prime\prime}}{a^{\prime\prime}_{T-1}}\nonumber\\
&\leq \frac{(1+16\alpha^2_0C^2_H)(t_2+1)^2\ltwo{v_0-v^*}^2}{(T+t_2-1)(T+t_2)} + \frac{128C^2_hC^2_v + 64C^2_h}{(T+t_2)\lambda^2_J}.
\end{flalign*}
Using the fact $\lF{\theta_T-\bar{\theta}^*}^2\leq \ltwo{v_T-v^*}^2$, we have
\begin{flalign*}
	\mE[\lF{\theta_T-\bar{\theta}^*}^2]\leq \frac{(1+16\alpha^2_0C^2_H)(t_2+1)^2\ltwo{v_0-v^*}^2}{(T+t_2-1)(T+t_2)} + \frac{128C^2_hC^2_v + 64C^2_h}{(T+t_2)\lambda^2_J},
\end{flalign*}
which completes the proof.

\end{document}